\documentclass{article}
\usepackage[nonatbib,preprint]{neurips}
\makeatletter\renewcommand{\@noticestring}{Preprint.}\makeatother
\usepackage{amsmath,amssymb,amsthm}
\usepackage{booktabs}
\usepackage{microtype}
\usepackage{xcolor}
\usepackage{adjustbox}
\definecolor{linkblue}{rgb}{0.1,0.2,0.5}
\usepackage[colorlinks=true,linkcolor=linkblue,citecolor=linkblue,urlcolor=linkblue]{hyperref}
\usepackage{enumitem}
\setlist{itemsep=2pt,topsep=3pt}
\usepackage{tikz}
\usetikzlibrary{arrows.meta,calc,positioning,decorations.pathreplacing}
\usepackage{pgfplots}
\pgfplotsset{compat=1.17}
\usepgfplotslibrary{fillbetween}
\usepackage{fontspec}
\newfontfamily\faFont[Path=./fonts/]{fa-solid-900.ttf}
\newcommand{\faCarIcon}{{\faFont\symbol{"F1B9}}}
\newcommand{\faDogIcon}{{\faFont\symbol{"F6D3}}}

\newtheorem{theorem}{Theorem}[section]
\newtheorem{lemma}[theorem]{Lemma}
\newtheorem{proposition}[theorem]{Proposition}
\newtheorem{corollary}[theorem]{Corollary}
\theoremstyle{definition}
\newtheorem{definition}[theorem]{Definition}

\theoremstyle{remark}
\newtheorem{remark}[theorem]{Remark}

\newcommand{\R}{\mathbb{R}}
\newcommand{\Sph}{\mathbb{S}^{N-1}}
\newcommand{\ip}[2]{\langle #1,\, #2\rangle}
\newcommand{\norm}[1]{\lVert #1\rVert}
\newcommand{\conv}{\operatorname{conv}}
\newcommand{\img}{\mathbf{i}}
\newcommand{\txt}{\mathbf{t}}
\newcommand{\Lstar}{L^{*}}

\title{The Limits of Binding in Dual Encoders}
\author{Kin Ian Lo\\[2pt]
University College London\\[2pt]
\texttt{kin.lo.20@ucl.ac.uk}}
\date{16 August 2026}

\begin{document}
\maketitle
\begin{abstract}
\noindent
Dual-encoder models such as CLIP score an image--caption pair by a single inner product
of two independently computed unit vectors, and fail --- often near chance --- at
\emph{binding}: distinguishing ``a red car and a blue dog'' from ``a blue car and a red
dog.'' We give a mathematical account of when this failure is necessary and when it is
contingent. Working within the ideal-encoder framework proposed by Kang et al.,
we first show the relevant axioms are satisfiable, so every impossibility must enter
through an added, checkable hypothesis. We then prove three such obstructions.
\emph{Depth}: for recursive role-binding codes the swap margin obeys an exact law
$m(D)=2\,b^{-D}$ in the nesting depth $D$ (branching factor $b$), with a
finite-dimension version holding up to one explicitly flagged concentration estimate;
crossing the law with the noise floor yields a resolvable depth
$D^{*}=\log N/(2\log b)+O(1)$ --- single digits at CLIP scale, the nesting depth of
ordinary language. \emph{Objective}: architecture-free throttle theorems showing the
contrastive objective's entire reward for binding is confined to a sliver of width
$(\pi+\pi_{\mathrm{impl}})\log 2$ plus an exponentially small term, with $\pi$ the
swap-negative rate and $\pi_{\mathrm{impl}}$ the implicit collision rate, and that exactly reversed binding costs only $\pi$
times the mean binding margin $\mathbb E[M]$ --- both verified to within $0.006$ in
simulation. \emph{Geometry}: a smoothness--binding frontier $M(s)\le2\sqrt{2\delta}$
in the paraphrase-smoothness gap $\delta$, tight, whose text-only diagnostic we measure
across the text encoders of $18$ deployed models: every
model sits at ${\approx}25$--$35\%$ of its ceiling, and on $14$ of them the induced
ceiling tracks SugarCrepe's subset-level difficulty at $r=0.99$. Binding failure in deployed dual encoders is thus not
a dimension or smoothness limit today, but an incentive and code-structure limit ---
with a proved depth ceiling that remains once those are fixed.
\end{abstract}

\section{Introduction}\label{sec:mainintro}

A \emph{dual encoder} is a pair of neural networks: an image encoder and a text encoder,
trained (typically with a contrastive objective) so that matching image--caption pairs
receive a high score and mismatched pairs a low one. The score is a single inner product
between the two embedding vectors. This architecture --- CLIP \cite{clip} and its
descendants --- powers image retrieval, zero-shot classification, and the text
conditioning of image generators.

A large empirical literature documents that these models fail at \emph{compositional
binding}. Given an image of a red car next to a blue dog, CLIP-class models score the
caption ``a blue car and a red dog'' nearly as high as the correct caption --- sometimes
higher. Benchmarks built around such \emph{hard negatives} --- ARO \cite{aro}, SugarCrepe
\cite{sugarcrepe}, Winoground \cite{winoground} --- place strong models near chance
(ARO, Winoground) or at their weakest subsets (SugarCrepe's swaps) exactly on the
negative types that permute which attribute goes with which object.

Two questions must be separated. \emph{Why do trained models fail today?} --- a question
about architectures, objectives, and data, answered quantitatively in our companion
paper \cite{paper1}. And: \emph{what limits binding in principle?} --- a question about
the geometry of the embedding space itself: are there theorems that no amount of scale,
data, or training can evade? This paper is about the second question.

The framework we build on is due to Kang et al.~\cite{kang}, who proposed studying
\emph{ideal} encoders: instead of asking what a particular network learns, one asks
whether \emph{any} assignment of unit embedding vectors could satisfy the inequalities
that benchmark competence requires. Their proposal asks the right question; we work with our own formulation of the
axioms (Appendix~\ref{sec:setup}), stated so that the quantifier structure and the
negative families are fully explicit. One structural fact then constrains the whole analysis: \emph{the axioms are
satisfiable}
(Appendix~\ref{sec:possibility}) --- satisfiable in dimension as low as $N=2k$ ($k$ the number of atoms in a scene), and
realized cheaply at deployed scales by random factored sign codes --- so there is no
unconditional ``dual encoders cannot bind''
theorem. Every rigorous account of the failure must be \emph{conditional}: impossibility
or limitation \emph{given} something; the scientific content is in how natural and
how checkable that something is. Our contributions are three such
conditional obstructions, each with its hypothesis stated as a measurable or structural
condition, plus the measurements that locate deployed models relative to them.

\begin{itemize}
\item \textbf{A proved depth ceiling for recursive binding}
(\S\ref{sec:maindepth}). For recursive role-binding codes --- the classical
holographic / vector-symbolic (VSA) composition \cite{plate,kanerva}, and the recursive form
of pooled codes --- the swap margin obeys an
\emph{exact} law $m(D)=2\,b^{-D}$ in nesting depth $D$, and the replace margin
$b^{-D}$. We prove the law exactly in the zero-crosstalk idealization, and in finite
dimension $N$ as a multiplicative-error theorem, modulo one concentration estimate that
we state and flag explicitly. The resolvable depth is $D^{*}=\log N/(2\log b)+O(1)$: at CLIP's $N=512$ the
measured detection floor gives $D^{*}=2$ for $b=2$ (the pure crossing gives $4.5$) ---
the depth of ordinary language. A companion aliasing
proposition shows that with self-inverse (shared) roles some deep edits are invisible
at \emph{any} dimension; fresh per-level roles repair this, measurably.
\item \textbf{Throttle theorems for the contrastive objective}
(\S\ref{sec:mainthrottle}). Architecture-free: for \emph{any} scorer, a pairing-blind
(bag-of-words) competitor is $\varepsilon$-optimal with
$\varepsilon\le(\pi+\pi_{\mathrm{impl}})\log2$ plus an exponential tail, where $\pi$ is
the explicit swap-negative rate and $\pi_{\mathrm{impl}}$ the implicit collision rate;
and the \emph{exactly reversed} scorer costs precisely $\pi\,\mathbb E[M]$ more. At web
scale $\pi$ is vanishing, so the objective's entire reward for binding --- and its
entire penalty for anti-binding --- collapses. Both constants are verified to within
$0.006$ in simulation.
\item \textbf{A tight smoothness--binding frontier, measured across deployed models}
(\S\ref{sec:mainfrontier}). If the two swap-related captions embed $\delta$-close to a
pairing-neutral anchor (a similarity/paraphrase demand), then the binding margin obeys
$M(s)\le2\sqrt{2\delta}$, and the constant is exact. The bound induces a text-only diagnostic ---
frontier \emph{usage}, the ratio of the caption--swap embedding distance $d$ to the
ceiling $2\sqrt{2\delta}$ --- which we measure on the
text encoders of $18$ deployed models: every one sits at ${\approx}25$--$35\%$ of its ceiling, so no deployed
model is smoothness-limited; the failure lies in the code and the objective, not the
geometry's budget. The per-item ceiling
$|M(s)|\le d(s)$ tracks subset-level difficulty on a real benchmark (SugarCrepe;
$14$ models, $66{,}598$ decisions) at $r=0.99$.
\end{itemize}

We also record what does \emph{not} explain the failure (\S\ref{sec:mainnot}):
dimension (the possibility theorem), and exposure-graph connectivity (a natural
conjecture we refute --- a sign-agnostic matcher generalizes across disconnected
exposure components at accuracy $1.0$). Full definitions, proofs, protocols, and the
negative results are developed self-containedly in the appendices.

\section{Setup}\label{sec:mainsetup}

Fix finite sets of objects and attributes. An \emph{atom} is a pair $(o,a)$ (``object
$o$ has attribute $a$''); a $k$-\emph{scene} $s$ is a set of $k$ atoms with distinct
objects (and, where swaps are discussed, distinct attributes). Each scene has a caption
$c(s)$ asserting exactly its atoms and an image $I_s$ depicting them. The \emph{swap}
$\sigma s$ exchanges the attributes of two chosen atoms: $c(s)$ and $c(\sigma s)$
contain the \emph{same multiset of words} --- we call this shared multiset of object
and attribute symbols the scene's \emph{symbol multiset} --- and differ only in which
attribute is bound to which object: the binding-critical hard negative instantiated by ARO and SugarCrepe
(Appendix~\ref{sec:setup} maps the benchmark taxonomies onto the formal negative
families).

A \emph{dual encoder} is a pair of maps $\img,\txt$ into the unit sphere
$\mathbb S^{N-1}$, scoring a pair by $\ip{\img(I)}{\txt(c)}$. Benchmark competence is
formalized by two axioms on a \emph{placement} --- an arbitrary assignment of unit
vectors to images and captions --- for a margin $\gamma>0$ (their full quantifier
structure is in Appendix~\ref{sec:setup}):
\begin{itemize}
\item[$(C_\gamma)$] \emph{Margin retrieval:} for every scene $s$ and every caption
$c'$ describing a different scene,
$\ip{\img(I_s)}{\txt(c(s))}\ge\ip{\img(I_s)}{\txt(c')}+\gamma$;
\item[$(K_k)$] \emph{Constituent ranking:} the image of a scene scores every
\emph{present} object's single-object caption above every \emph{absent} object's.
\end{itemize}
The quantity every theorem controls is the \emph{binding margin}
\begin{equation}
M(s)\;:=\;\ip{\img(I_s)}{\,\txt(c(s))-\txt(c(\sigma s))\,},
\end{equation}
the swap instance of $(C_\gamma)$: benchmark accuracy on a swap item is
$\mathbf 1[M(s)>0]$.

Because sign conventions matter below: all embeddings are unit vectors, $\odot$ is the
coordinatewise product, and a \emph{sign vector} has entries $\pm1$.

\section{The depth ceiling of recursive binding}\label{sec:maindepth}

Pooled composition --- encoding a set of constituents as the normalized sum of their
vectors --- has margins $\Theta(1/k)$ in the number $k$ of constituents pooled at one
level (Appendix~\ref{sec:thmC}). Recursion is what makes language productive,
and it is where the sharpest law appears. Consider the classical recursive role--filler
composition of holographic/vector-symbolic representations \cite{plate,kanerva}: a full
$b$-ary tree of depth $D$ whose leaves carry random unit sign vectors, each internal
node encoded as the normalized role-bound sum of its children,
$\mathrm{code}(v)=\sum_j r_j\odot\mathrm{code}(\mathrm{child}_j(v))\,/\,
\norm{\sum_j r_j\odot\mathrm{code}(\mathrm{child}_j(v))}$, with sign-vector roles $r_j$
either \emph{shared} across levels (the classical convention) or drawn \emph{fresh} per
level. The \emph{deep-edit margin} $m(D)$ is $1-\cos$ between the codes of a tree and a
minimal edit of it: a \emph{sibling swap} (two leaves under one parent), a \emph{far
swap} (two leaves whose lowest common ancestor is the root), or a \emph{replace}
(Appendix~\ref{sec:depth}, Definition~\ref{def:recursive}).

\begin{figure}[t]
\centering
\begin{tikzpicture}[font=\scriptsize,
  nd/.style={draw=black!55, rounded corners=1.5pt, inner sep=2.5pt, fill=black!4},
  lf/.style={draw=black!45, rounded corners=1.5pt, inner sep=2.2pt},
  ed/.style={black!50}, arr/.style={-{Stealth[length=1.8mm]}, teal!55!black}]
\node[nd] (root) at (4.55,3.3) {$\mathrm{normalize}(r_1\odot\cdot + r_2\odot\cdot)$};
\node[nd] (n0) at (2.1,2.2) {$\oplus$};
\node[nd] (n1) at (7.0,2.2) {$\oplus$};
\node[nd] (n00) at (0.9,1.1) {$\oplus$};
\node[nd] (n01) at (3.3,1.1) {$\oplus$};
\node[nd] (n10) at (5.8,1.1) {$\oplus$};
\node[nd] (n11) at (8.2,1.1) {$\oplus$};
\foreach \i/\x in {0/0.3, 1/1.5, 2/2.7, 3/3.9, 4/5.2, 5/6.4, 6/7.6, 7/8.8}
  \node[lf] (l\i) at (\x,0.0) {$x_{\i}$};
\foreach \p/\c in {root/n0, root/n1, n0/n00, n0/n01, n1/n10, n1/n11,
                   n00/l0, n00/l1, n01/l2, n01/l3, n10/l4, n10/l5, n11/l6, n11/l7}
  \draw[ed] (\p) -- (\c);
\node[lf, draw=orange!85!black, thick] at (0.3,0.0) {$x_0$};
\node[lf, draw=orange!85!black, thick] at (1.5,0.0) {$x_1$};
\node[orange!85!black] at (0.9,-0.55) {swapped pair};
\node[teal!55!black, anchor=west] at (9.3,1.1) {$\times\,1/b$};
\node[teal!55!black, anchor=west] at (9.3,2.2) {$\times\,1/b$};
\node[teal!55!black, anchor=west] at (9.3,3.3) {$\times\,1/b$};
\draw[arr] (9.55,0.35) -- (9.55,3.0);
\node[teal!45!black, anchor=west, align=left] at (9.9,1.7)
  {each pooling level\\ dilutes the edit's\\ share of the code};
\node[align=center] at (4.55,-1.15)
  {root margin $m(D)\;=\;2\cdot b^{-D}$ \quad (Theorem~\ref{thm:depthA}; here $b=2$, $D=3$: $m=1/4$)};
\end{tikzpicture}
\caption{Recursive role-binding (Definition~\ref{def:recursive}) and the depth law. A deep
edit (orange) changes one summand among $b$ at its parent; each further level of normalized
pooling multiplies its share of the root code by $1/b$, so the discriminative margin decays
geometrically with nesting depth --- the recursive continuation of the $\Theta(1/k)$
single-level margin bound of Theorem~\ref{thm:Cswap}.}
\label{fig:tree}
\end{figure}

The core of both proofs is one deterministic lemma: a level of role-bound, normalized
pooling contracts edit-induced discrepancies by exactly $1/b$, with all error controlled
by cross-slot inner products.

\begin{lemma}[One-level contraction; Appendix~\ref{sec:depth}, Lemma~\ref{lem:contract}]
\label{lem:main-contract}
Let $z_1,\dots,z_b$ and $z'_1,\dots,z'_b$ be unit vectors agreeing outside a set $E$ of
\emph{edited slots}; write $g_j:=1-\ip{z_j}{z'_j}$ for the \emph{slot deficit} of
$j\in E$, and $G:=\sum_{j\in E}g_j$. Bind and pool: $P:=\sum_j\rho_j\odot z_j$ and
$P':=\sum_j\rho_j\odot z'_j$ with sign-vector roles $\rho_j$, and let the
\emph{crosstalk} $\varepsilon$ be the largest inner product in magnitude between
distinct bound slots. If $(b-1)\varepsilon\le\tfrac12$, then the normalized codes
satisfy
\[
\Bigl|\bigl(1-\ip{P/\norm{P}}{P'/\norm{P'}}\bigr)-\tfrac{G}{b}\Bigr|
\;\le\;8(b-1)\varepsilon .
\]
\end{lemma}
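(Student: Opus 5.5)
The plan is to reduce everything to three Gram-type sums and then expand the normalized inner product to first order in the crosstalk. Write $u_j:=\rho_j\odot z_j$ and $u'_j:=\rho_j\odot z'_j$. Since each $\rho_j$ is a sign vector, $\rho_j\odot\rho_j=\mathbf 1$, so coordinatewise multiplication by $\rho_j$ is an isometry. Hence $\norm{u_j}=\norm{u'_j}=1$ and, crucially, $\ip{u_j}{u'_j}=\ip{z_j}{z'_j}=1-g_j$ for $j\in E$, while $u_j=u'_j$ for $j\notin E$.

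I read the crosstalk hypothesis as bounding by $\varepsilon$ every inner product between bound slots with distinct indices $i\neq j$, whether primed or unprimed: $|\ip{u_i}{u_j}|$, $|\ip{u'_i}{u'_j}|$ and $|\ip{u_i}{u'_j}|$. I would make this explicit, since it is exactly what Definition~\ref{def:recursive}'s setting provides.

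\textbf{Step 1: norms.} Expand
\[
\norm{P}^2=b+\sum_{i\neq j}\ip{u_i}{u_j}=b(1+\alpha),\qquad \norm{P'}^2=b(1+\alpha'),
\]
where $|\alpha|,|\alpha'|\le(b-1)\varepsilon=:\eta$. There are $b(b-1)$ off-diagonal terms, which gives this bound after dividing by $b$.

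\textbf{Step 2: cross inner product.} Similarly,
\[
\ip{P}{P'}=\sum_j\ip{u_j}{u'_j}+\sum_{i\neq j}\ip{u_i}{u'_j}=b-G+b\beta,\qquad |\beta|\le\eta .
\]
Setting $c_0:=1-G/b$, we obtain
\[
\cos:=\ip{P/\norm P}{P'/\norm{P'}}=\frac{c_0+\beta}{\sqrt{(1+\alpha)(1+\alpha')}} .
\]
The target quantity is then exactly $|\cos-c_0|$, because $(1-\cos)-G/b=c_0-\cos$.

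\textbf{Step 3: the perturbation bound.} This is the only place the hypothesis $\eta\le\tfrac12$ enters. Let $q:=((1+\alpha)(1+\alpha'))^{-1/2}$. Because $(1-\eta)^2\le(1+\alpha)(1+\alpha')\le(1+\eta)^2$, we have $q\in[(1+\eta)^{-1},(1-\eta)^{-1}]$, so $|q-1|\le\eta/(1-\eta)\le2\eta$ and $q\le2$. Then
\[
|\cos-c_0|\le|c_0|\,|q-1|+q|\beta|\le|c_0|\cdot2\eta+2\eta .
\]
Finally, $G\le2|E|\le2b$ because each $g_j\le2$ for unit vectors, so $|c_0|\le1$. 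This gives $|\cos-c_0|\le4\eta\le8(b-1)\varepsilon$, which is comfortably within the stated constant.

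I expect no real obstacle here; the argument is deterministic bookkeeping. The points needing care are the following. First, the crosstalk definition must cover the mixed terms $\ip{u_i}{u'_j}$, which is why I state that reading explicitly. Second, the cancellation $\ip{u_j}{u'_j}=\ip{z_j}{z'_j}$ is what makes the edit's share exactly $G/b$. Third, the denominator must be kept bounded away from zero via $\eta\le\tfrac12$ so the crude first-order estimate on $q$ is valid.
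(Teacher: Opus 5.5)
Your proof is correct and follows the paper's route: the same three Gram expansions, with the sign-isometry diagonal $b-G$ and the $b(b-1)$ off-diagonal terms bounded by the crosstalk (your reading, which includes the mixed terms $\ip{u_i}{u'_j}$, is exactly the appendix definition of $\varepsilon$), closed off using $(b-1)\varepsilon\le\tfrac12$ and $G\le 2b$. The only difference is the final algebra: you factor the cosine as $(c_0+\beta)q$ and use $|c_0|=|1-G/b|\le1$, whereas the paper bounds $(G\pm 2bu)/(b(1\pm u))$ with $G$ and $2b$ treated separately, which is why you obtain the sharper $4(b-1)\varepsilon$ instead of $8(b-1)\varepsilon$.
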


\begin{theorem}[Exact depth law; Appendix~\ref{sec:depth}, Theorem~\ref{thm:depthA}]
\label{thm:main-depthlaw}
In the zero-crosstalk idealization ($\varepsilon=0$ at every level --- the
$N\to\infty$ limit of the random model), for either role scheme, provided all bound
products appearing in the construction are distinct:
\[
m_{\mathrm{swap}}(D)\;=\;2\cdot b^{-D},\qquad
m_{\mathrm{replace}}(D)\;=\;1\cdot b^{-D},
\]
for both sibling and far swaps. More generally, an edit whose slot deficits at level
$\ell_0$ sum to $G$ has root margin $G\cdot b^{-(D-\ell_0)}$: the margin depends only on
the total deficit and the number of contraction levels, not on where in the tree the
edit sits.
\end{theorem}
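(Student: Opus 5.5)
The plan is to obtain the theorem by applying Lemma~\ref{lem:main-contract} at $\varepsilon=0$ once per level and inducting up the tree. With $\varepsilon=0$ the lemma's error term $8(b-1)\varepsilon$ vanishes, and the hypothesis $(b-1)\varepsilon\le\tfrac12$ holds trivially. So each application is an exact identity: if the children of a node have codes $z_j,z'_j$ (unit vectors, since every node code is normalized and the leaves are unit sign vectors), then
\[
1-\ip{\mathrm{code}(v)}{\mathrm{code}'(v)}=\frac1b\sum_{j\in E}g_j .
\]
Zero crosstalk means every pair of distinct bound slots $\rho_j\odot z_j$, $\rho_i\odot z_i$, and likewise among the primed slots, is orthogonal. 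The distinctness proviso is what lets us read the $N\to\infty$ idealization this way: distinct random bound products become orthogonal in the limit, whereas coincident products would have inner product $1$, not $0$. I would state this reading explicitly, and note that it holds for shared and fresh roles alike. The lemma never uses how the roles relate across levels, only the orthogonality within a level.

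\emph{Step 1 (general statement).} I would induct on the number of levels above $\ell_0$. At level $\ell_0$ the edited node's children have total deficit $G$, so the node itself has deficit $G/b$. Every ancestor strictly above has exactly one edited slot, namely the child on the path to the edit, and all its other children are identical in the two trees. The lemma then gives that the ancestor's deficit is $1/b$ times its child's deficit. Hence after the $D-\ell_0$ contraction levels up to the root, the root margin is $G\,b^{-(D-\ell_0)}$. Here I take the convention that $\ell_0$ counts levels from the leaves, with leaves at level $0$, so that the deficits entering at level $\ell_0$ are contracted $D-\ell_0$ times.

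\emph{Step 2 (computing $G$ for each edit).} I would compute $G$ at the leaves, so $\ell_0=0$, using the fact that distinct leaf sign vectors are orthogonal in the idealization.

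For a \emph{replace}, one leaf $x$ becomes a fresh $x'$, so $g=1-\ip{x}{x'}=1$. Hence $G=1$ and the margin is $b^{-D}$.

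For a \emph{sibling swap}, two slots under one parent exchange $x_i\leftrightarrow x_k$. Each slot has deficit $1-\ip{x_i}{x_k}=1$, so $G=2$ and the margin is $2b^{-D}$.

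For a \emph{far swap}, the two edited leaves lie in different root subtrees. The general statement covers a single edited path, so I would handle two paths explicitly. In each of the two affected depth-$1$ subtrees, exactly one leaf is replaced by an orthogonal vector. The replace computation applied to a tree of depth $D-1$ gives each of these two root children a deficit of $b^{-(D-1)}$. The root therefore has $|E|=2$ edited slots with $G=2b^{-(D-1)}$, and one more contraction yields $2b^{-D}$. More generally, deficits on disjoint paths add at their lowest common ancestor. This is again just the lemma with $|E|>1$, so the "location-independence" claim covers multi-path edits as well.

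\emph{Main obstacle.} The delicate point is justifying $\varepsilon=0$ at every level for the \emph{edited} tree as well as the original, and across primed and unprimed slots. In particular:
\begin{itemize}
\item with shared roles, a swapped leaf reappears under a role used at other levels;
\item internal codes are sums, so their orthogonality is not automatic.
\end{itemize}
I would handle this by expanding each internal code as a signed sum of bound leaf products $(\prod\text{roles})\odot x$ and invoking the distinctness hypothesis. Distinct products are orthogonal in the limit, so two codes are orthogonal whenever their expansions share no product. That expansions of sibling subtrees (after binding by distinct $\rho_j$) share no product follows directly from the proviso. I expect the bulk of the write-up care to go here, rather than into the contraction arithmetic.
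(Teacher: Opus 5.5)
Your proposal is correct and follows the paper's proof: apply Lemma~\ref{lem:contract} exactly at $\varepsilon=0$ and induct upward. As in the paper, the sibling swap is $G=2$ at one parent, and the far swap is two parallel replacements whose contracted deficits $b^{-(D-1)}$ add at the root. The only difference is that you derive the zero-crosstalk condition from the distinctness proviso via the bound-product expansion, whereas the paper simply assumes $\varepsilon=\tilde\varepsilon=0$ at every level as the idealization hypothesis.
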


The theorem is Lemma~\ref{lem:main-contract}'s upward induction; at $\varepsilon=0$
the contraction is exact, and the finite-$N$ theorem below runs the same induction with
a refined form of the lemma whose injected error is proportional to $\sqrt{g}$
(Appendix~\ref{sec:depth}, Lemma~\ref{lem:contract2}) --- which is what makes the
finite-$N$ error multiplicative rather than additive. Two consequences of the mechanism
are worth stating: the previously puzzling equality of sibling and far swaps is
transparent (a far swap is two replacements in parallel branches, and $1{+}1$ contracted
equals $2$ contracted); and depth $1$ with $b=k$ recovers the single-level constants of
Theorem~\ref{thm:Cswap} (swap $2/k$, replace $1/k$).

\begin{theorem}[Finite dimension; Appendix~\ref{sec:depth}, Theorem~\ref{thm:depthB}]
\label{thm:main-finiteN}
For the fresh role scheme, $D\ge2$, and $N\ge C\,b^2D^2\log(bQ/\eta)$ (where
$Q\le b^D+1$ counts the distinct leaf vectors, the \emph{primitives}), with probability at least $1-\eta$,
simultaneously for every edit above,
\[
m(D)\;=\;m_{\mathrm{ideal}}(D)\cdot(1\pm\kappa_N),
\qquad
\kappa_N\;\le\;C'\sqrt{\frac{b^{D}\,\log(bQD/\eta)}{N}},
\]
with $C,C'$ absolute constants. The proof is complete modulo one concentration
estimate --- the high-probability form of an $\ell_4$-norm propagation step
(Appendix~\ref{sec:depth}, Lemma~\ref{lem:ellfour}) --- which we state precisely and
leave as the single explicit gap; its expectation form is proved, and the theorem's
multiplicative-error signature is confirmed in simulation (relative spread of $m$ flat
in $D$ while the absolute spread shrinks ${\approx}370\times$ from $D{=}1$ to $D{=}10$).
\end{theorem}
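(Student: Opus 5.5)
The plan is to run the same upward induction that gives Theorem~\ref{thm:main-depthlaw}, but to replace Lemma~\ref{lem:main-contract} with its refined form, whose injected error scales with $\sqrt{g}$ rather than being additive. Throughout we work with the fresh role scheme. At level $\ell$ (counting upward from the leaves) we track the total slot deficit $G_\ell$ on the unique root-ward path, or the two paths for a far swap. The idealized recursion is $G_{\ell+1}=G_\ell/b$. The target is the perturbed recursion
\[
G_{\ell+1}=\frac{G_\ell}{b}\,(1\pm\theta_\ell),\qquad \theta_\ell\lesssim \varepsilon_\ell\, b\,\sqrt{\,b^{\ell}/N\,}\cdot\mathrm{polylog}.
\]
Multiplying these factors over the $D$ levels should give $m(D)=m_{\mathrm{ideal}}(D)\prod_\ell(1\pm\theta_\ell)$. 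The key point is that $\prod(1\pm\theta_\ell)=1\pm O(\sum\theta_\ell)$, and we want the sum to be dominated by its top term.

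\textbf{Step 1: refined contraction.} I would re-derive Lemma~\ref{lem:main-contract} keeping the error terms explicit. Write $z'_j=z_j+\Delta_j$, so that $\norm{\Delta_j}=\sqrt{2g_j}$. Expanding $\norm{P}^2$, $\norm{P'}^2$ and $\ip{P}{P'}$ shows that the edit-dependent part of the error involves only the cross inner products $\ip{\rho_i\odot z_i}{\rho_j\odot\Delta_j}$. Each of these is bounded by $\varepsilon'\norm{\Delta_j}$, where $\varepsilon'$ is the crosstalk against the normalized discrepancy direction. The edit-independent crosstalk enters $\norm{P}$ and $\norm{P'}$ identically, so it cancels to first order. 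The outcome is the bound $|(1-\cos)-G/b|\le C b\,\varepsilon'\sqrt{G}\cdot\sqrt{G}/b$-type, i.e.\ a relative error of order $b\varepsilon'$, which is multiplicative in $G$.

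\textbf{Step 2: concentration of crosstalk.} With fresh random sign roles at each level, every bound slot $\rho_j\odot z_j$ is a uniformly random sign-rotated vector. Its inner product with any fixed vector $w$ is therefore a Rademacher sum $\sum_i\rho_{ji}z_{ji}w_i$, which is subgaussian with variance proxy $\sum_i z_{ji}^2w_i^2\le\norm{z_j\odot w}_2^2\le\norm{z_j}_4^2\norm{w}_4^2$. This is where the $\ell_4$ norms appear. A union bound over the $\le b\,QD$ slot pairs and discrepancy directions produces the $\log(bQD/\eta)$ term, and the condition $N\ge Cb^2D^2\log(bQ/\eta)$ guarantees $(b-1)\varepsilon\le\frac12$ at every level, as Lemma~\ref{lem:main-contract} requires.

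\textbf{Step 3: $\ell_4$ propagation (the obstacle).} For leaves, $\norm{x}_4^4=1/N$. A normalized sum of $b$ role-bound children that are nearly orthogonal has $\ell_4^4$ norm at most about $b^{-1}$ times the children's norm plus cross terms. Iterating this suggests that the discrepancy directions at height $\ell$ are spread over effectively $N/b^{\,\cdot}$ coordinates, and this degradation is what produces the $b^{D}$ inside $\kappa_N$. Taking expectations over the fresh signs gives the expectation form directly. The high-probability version, Lemma~\ref{lem:ellfour}, needs a concentration bound for a degree-four polynomial in the signs, conditioned on the lower levels. I would first try a Hanson--Wright or hypercontractivity bound applied level by level, followed by a union bound. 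This is the step I expect to leave as the flagged gap.

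\textbf{Step 4: assembly.} Combine Steps 1--3 and sum the $\theta_\ell$, which is a geometric series dominated by $\ell=D$. This gives $\kappa_N\le C'\sqrt{b^D\log(bQD/\eta)/N}$ simultaneously for all edits, since the union bound already covers them. The hypothesis $D\ge2$ is used only to absorb constants.
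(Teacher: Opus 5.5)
The skeleton matches the paper: an upward induction with the refined contraction (Lemma~\ref{lem:contract2}), Hoeffding over the fresh role signs with variance proxy $\norm{z_i\odot z_j}^2\le\norm{z_i}_4^2\norm{z_j}_4^2$, a union bound, and the high-probability $\ell_4$ step left as the flagged gap. The gap is in Steps~3--4. There you attribute the $b^{D}$ in $\kappa_N$ to a degradation of $\ell_4$ norms with height, and sum a geometric series dominated by $\ell=D$.

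No such degradation occurs. Your own first sentence of Step~3 is the right computation. Carried out, it gives $\mathbb E\norm{y}_4^4\le(3+C_4/b)(1+o(1))/N$: the diagonal terms contribute $C_4/b$ and the cross terms contribute $3$. Its fixed point $C_4=3b/(b-1)$ is uniform in $D$. Above the edit, where $|E|=1$, the normalized difference is to leading order a role-flipped copy $\rho_j\odot\delta_j/\norm{\delta_j}$ of the child's, with the same $\ell_4$ norm, so it does not degrade either.

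The degradation route also cannot close the theorem. For a top-dominated sum to reproduce $\sqrt{b^{D}\log/N}$, the top-level relative error, and hence $(b-1)\varepsilon_\ell$ there, must be of that order. Then the smallness conditions you need at \emph{every} level would force $N\gtrsim b^{D}\log(\cdot)$. These are $(b-1)\varepsilon_\ell\le\frac12$, and $|\bar B|\le\norm{P}^2/4$ for the refined lemma. But the theorem assumes only $N\ge Cb^2D^2\log(bQ/\eta)$, which is polynomial in $D$ since $\log Q\le D\log b+\log 2$. So your Step~2 claim that this hypothesis secures the smallness condition at every level contradicts your Step~3. Where $b^{D}\gg N$, your argument loses control at the top levels entirely.

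The paper instead uses one level-independent bound $\varepsilon_N=\sqrt{2C_4\log(Cb^2D/\eta)/N}$ for both $\varepsilon$ and $\tilde\varepsilon$. Contrary to your Step~1, the edit-independent crosstalk does not cancel. It drops out of the leading numerator $G+\frac12C_\delta$ of $\norm{P}\norm{P'}-\ip{P}{P'}$. It survives in the denominator through $\norm{P}^2=b\bigl(1\pm(b-1)\varepsilon_N\bigr)$, and it is the dominant per-level term.

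Add the $\delta$-against-$\delta$ term $C_\delta$, which you omitted and which is nonzero when $|E|=2$. Then each level injects a non-decaying relative error $O(b\varepsilon_N)$. The $\sqrt{g}$-proportional terms coming from $B$ decay geometrically \emph{upward}, so they are summable and dominated by the base, not the top. Summing over the at most $2D$ contraction steps gives total relative error $O\bigl((1+bD)\varepsilon_N+L_0\bigr)$. This is linear in $D$, and keeping it below $\frac12$ is exactly what $N\ge Cb^2D^2\log(bQ/\eta)$ is for. The $\sqrt{b^{D}}$ in $\kappa_N$ is only the conservative absorption $(1+bD)\le 3b^{D/2}$ for $D\ge2$ (Remark~\ref{rem:polyD}). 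The invariant to carry in Step~3 is therefore the uniform bound $\norm{z}_4^4\le C_4/N$ at every level, for codes and edit differences alike, not a growth bound.
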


\begin{corollary}[Resolvable depth]\label{cor:main-dstar}
Against the primitive-crosstalk noise floor $\nu(N)\propto\sqrt{1/N}$ of any linear
readout, binding at depth $D$ is resolvable when $2b^{-D}(1-\kappa_N)>\nu(N)$ (depth
$1$ is covered by Theorem~\ref{thm:Cswap}), i.e.
\[
D\;\le\;D^{*}(N)\;=\;\frac{\log N}{2\log b}+O(1),
\]
a staircase gaining one level per $b^2$-fold increase of $N$.
\end{corollary}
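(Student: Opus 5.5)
Corollary~\ref{cor:main-dstar} follows from Theorem~\ref{thm:main-finiteN} by solving an inequality. Write $\nu(N)=c_0/\sqrt N$ with $c_0$ a readout-dependent constant. The resolvability condition is $2b^{-D}(1-\kappa_N)>c_0N^{-1/2}$. I work in the regime where Theorem~\ref{thm:main-finiteN} applies, namely $D\ge2$ and $N\ge C b^2D^2\log(bQ/\eta)$; depth $1$ is handled separately by Theorem~\ref{thm:Cswap}. There I need the relative error to be bounded away from $1$, say $\kappa_N\le\tfrac12$. Then the condition is implied by $b^{-D}>c_0N^{-1/2}$. Taking logarithms gives $D<\log N/(2\log b)-\log c_0/\log b$, which is a lower bound of the form $D^*\ge \log N/(2\log b)-O(1)$.

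For the matching upper bound, I use that $1-\kappa_N\le 1+\kappa_N\le 2$ whenever the theorem applies. Resolvability then forces $4b^{-D}>c_0N^{-1/2}$, so $D<\log N/(2\log b)+\log(4/c_0)/\log b$. The two bounds differ by a constant independent of $N$. Hence $D^*(N)=\log N/(2\log b)+O(1)$. The staircase statement is immediate: replacing $N$ by $b^2N$ adds exactly $1$ to $\log N/(2\log b)$, so the integer threshold advances by one level per $b^2$-fold increase.

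The main obstacle is confirming that $\kappa_N\le\tfrac12$ holds throughout the relevant range of depths, not just at small $D$. The bound $\kappa_N\le C'\sqrt{b^D\log(bQD/\eta)/N}$ grows with $D$. At the crossing depth $b^{D}\approx\sqrt N/c_0$, it gives $\kappa_N\lesssim N^{-1/4}\sqrt{D\log b+\log(1/\eta)}$. I would check that this is $o(1)$ when $D=O(\log N)$, so that $\kappa_N\le\tfrac12$ holds for all $D\le D^*(N)+O(1)$ once $N$ is large. Any finite range of small $N$ is absorbed into the $O(1)$. Similarly, the dimension hypothesis $N\ge Cb^2D^2\log(bQ/\eta)$ with $Q\le b^D+1$ amounts to $N\gtrsim D^3$, up to logarithmic and $b$-dependent factors, and this holds at $D\approx\log N$.

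Beyond $D^*$, unresolvability needs the upper estimate on $m(D)$ from the same theorem. I would state that direction as holding with probability at least $1-\eta$, inheriting the flagged Lemma~\ref{lem:ellfour} dependence. I would also remark that in the zero-crosstalk idealization of Theorem~\ref{thm:main-depthlaw} ($\kappa=0$), the crossing is exactly $D=\log_b(2\sqrt N/c_0)$, which is the quoted pure-crossing value.
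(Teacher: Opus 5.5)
Your proposal is correct and follows the paper's route: on the event of Theorem~\ref{thm:main-finiteN}, solve the two crossings $2b^{-D}(1\mp\kappa_N)\gtrless\nu(N)$ for $D$. Your check that $\kappa_N\lesssim N^{-1/4}\sqrt{\log(\cdot)}$ stays below $\tfrac12$ at the crossing depth, and that $N\ge Cb^2D^2\log(bQ/\eta)$ still holds there, supplies a step the paper leaves implicit. One overstatement: $D^*(N)=\log N/(2\log b)+O(1)$ does not by itself give exactly one level per $b^2$-fold increase of $N$, because the $N$-dependent correction $\log_b(1-\kappa_N)$ and integer rounding can shift the threshold at individual steps (it is exact only when $\kappa=0$), which is why the paper records the staircase as a measurement rather than as proved.
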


Simulation confirms the law exactly: across $b\in\{2,3\}$ ($b=2$ to depth $10$, $b=3$
to $7$) and $N$ from $256$ to $65{,}536$ ($24$ seeds per configuration; shared roles,
with a side-by-side fresh-role sweep agreeing away from the aliased cells,
Appendix~\ref{sec:depth}), the products $m\cdot b^D$ sit
at $2$ and $1$ --- within $0.014$ and $0.006$ at $N=65{,}536$ --- with the smaller-$N$
deviations following Theorem~\ref{thm:main-finiteN}'s finite-$N$ term, and the measured
$D^{*}$ staircase is exactly $1,2,3,4,5$ at $N=256,\dots,65{,}536$ for $b=2$
(Figure~\ref{fig:depthlaw}; full tables in Appendix~\ref{sec:depth}). At CLIP's
$N=512$: the pure crossing $\log N/(2\log b)$ gives $4.5$ ($b=2$) and $2.8$ ($b=3$),
and the measured $10/\sqrt N$ floor gives $D^{*}=2$ and $1$ --- single digits either
way. \textbf{The ceiling sits at ordinary natural-language nesting depth}: beyond it, resolving
composition requires structured (tree-side) scoring rather than a single pooled vector.

\begin{figure}[t]
\centering
\begin{adjustbox}{max width=\linewidth}\begin{tikzpicture}
\begin{semilogyaxis}[width=7.2cm, height=6.2cm, axis lines=left,
  xlabel={nesting depth $D$}, ylabel={swap margin $m(D)$},
  xmin=0.5, xmax=10.5, ymin=5e-4, ymax=2,
  tick label style={font=\scriptsize}, label style={font=\scriptsize},
  legend style={font=\scriptsize, draw=black!30}, legend cell align=left,
  title={(a) the exact geometric law}, title style={font=\small}]
\addplot[teal!55!black, domain=1:10, samples=40, dashed] {2*2^(-x)};
\addlegendentry{$2\cdot2^{-D}$}
\addplot[orange!85!black, domain=1:7, samples=40, dashed] {2*3^(-x)};
\addlegendentry{$2\cdot3^{-D}$}
\addplot[only marks, mark=*, mark size=1.7pt, teal!55!black] coordinates
  {(1,1.00)(2,0.499)(3,0.251)(4,0.125)(5,0.0624)(6,0.0313)(7,0.0156)(8,0.00781)(9,0.00391)(10,0.00194)};
\addlegendentry{measured $b=2$}
\addplot[only marks, mark=triangle*, mark size=2.1pt, orange!85!black] coordinates
  {(1,0.666)(2,0.222)(3,0.0742)(4,0.0247)(5,0.00818)(6,0.00273)(7,0.000912)};
\addlegendentry{measured $b=3$}
\end{semilogyaxis}
\begin{semilogxaxis}[at={(8.1cm,0)}, width=7.2cm, height=6.2cm, axis lines=left,
  xlabel={dimension $N$}, ylabel={resolvable depth $D^*$},
  xmin=150, xmax=110000, ymin=0, ymax=6.4,
  tick label style={font=\scriptsize}, label style={font=\scriptsize},
  legend style={font=\scriptsize, at={(0.03,0.97)}, anchor=north west, draw=black!30},
  legend cell align=left,
  title={(b) the logarithmic ceiling}, title style={font=\small}]
\addplot[teal!55!black, dashed, domain=150:110000, samples=60] {ln(sqrt(x)/5)/ln(2)};
\addlegendentry{$\log_2(\sqrt N/5)$}
\addplot[orange!85!black, dashed, domain=150:110000, samples=60] {ln(sqrt(x)/5)/ln(3)};
\addlegendentry{$\log_3(\sqrt N/5)$}
\addplot[only marks, mark=*, mark size=1.9pt, teal!55!black] coordinates
  {(256,1)(1024,2)(4096,3)(16384,4)(65536,5)};
\addlegendentry{measured $b=2$}
\addplot[only marks, mark=triangle*, mark size=2.3pt, orange!85!black] coordinates
  {(256,1)(1024,1)(4096,2)(16384,2)(65536,3)};
\addlegendentry{measured $b=3$}
\end{semilogxaxis}
\end{tikzpicture}\end{adjustbox}
\caption{The depth ceiling, measured. \textbf{(a)} Swap margins sit on
$2\cdot b^{-D}$ across three orders of magnitude (log scale; $N=65{,}536$ shown, where the
products $m\cdot b^{D}$ are within $0.014$ of $2$; at smaller $N$ the deviations grow as
the finite-$N$ term of Theorem~\ref{thm:main-finiteN} allows ---
Appendix~\ref{sec:measuredlaw}).
\textbf{(b)} Resolvable depth (largest $D$ with margin above the
$10/\sqrt N$ floor) follows $\lfloor\log_b(\sqrt N/5)\rfloor$ exactly, including the
integer staircase --- the $D^*=\Theta(\log N/\log b)$ ceiling of
Corollary~\ref{cor:main-dstar}.}
\label{fig:depthlaw}
\end{figure}

A second phenomenon goes beyond attenuation.
\begin{proposition}[Aliasing; Appendix~\ref{sec:depth}, Proposition~\ref{prop:alias}]
\label{prop:main-alias}
With roles shared across levels, sign roles are self-inverse ($r\odot r=\mathbf 1$), so
a leaf's coefficient in the root expansion depends only on the \emph{parity profile} of
its role chain --- which roles appear an odd number of times along its path. Leaves with
equal parity profiles are exactly interchangeable: for unnormalized codes the swap
margin is identically zero at any dimension $N$, with no orthogonality used; under
per-node normalization the residue is crosstalk-sized, carrying no $2b^{-D}$ term.
\end{proposition}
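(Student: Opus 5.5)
The plan is to expand the unnormalized root code as a signed sum over leaves and read off each leaf's coefficient. With shared roles $r_1,\dots,r_b$, the unnormalized code of the root is $\sum_{\text{leaves }u} \bigl(\bigodot_{j\in\mathrm{path}(u)} r_j\bigr)\odot x_u$. Here $\mathrm{path}(u)$ is the multiset of role indices met going from the root down to $u$. Because the $\odot$ product is commutative and associative, and $r\odot r=\mathbf 1$ for sign vectors, the chain product collapses to $\rho_{\pi(u)}:=\bigodot_{j:\,\pi_j(u)=1} r_j$. Here $\pi(u)\in\{0,1\}^b$ is the parity profile, with $\pi_j(u)$ the parity of the number of occurrences of role $j$ on the path. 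So the coefficient multiplying $x_u$ is a sign vector depending only on $\pi(u)$. This is a purely algebraic identity, so no orthogonality or randomness enters.

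\emph{Step 1 (unnormalized interchangeability).} Take leaves $u,v$ with $\pi(u)=\pi(v)$, and let the edit swap their contents $x_u\leftrightarrow x_v$. In the root expansion, the two affected terms are $\rho\odot x_u+\rho\odot x_v$ before the swap and $\rho\odot x_v+\rho\odot x_u$ after it, with the same $\rho$. Every other term is untouched. The two root vectors are therefore identical, so their cosine is $1$ and the margin is $0$ for every $N$. I would also give a concrete witness that such pairs exist, so the statement is not vacuous. For $b=2$, the paths $(1,2,1)$ and $(2,2,2)$ have different parities, so that pair does not work. But $(1,1,2)$ and $(2,2,2)$ share the profile $(0,1)$, and these leaves have the root as their lowest common ancestor. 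Hence, already at depth $3$, some far swaps are invisible. More generally, for $D\ge 3$ there are at most $2^b$ parity classes but $b^D$ leaves, so collisions are forced by pigeonhole.

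\emph{Step 2 (normalized case).} With per-node normalization, each node's code is $\sum_j r_j\odot \mathrm{code}(\mathrm{child}_j)/\norm{\cdot}$. Unrolling this, each leaf's coefficient becomes $\rho_{\pi(u)}$ times a scalar weight $w(u)$, the product of the inverse norms of the ancestors. The swap changes the root by $\rho\odot(w(u)-w(v))(x_u-x_v)$, together with the induced changes in the norms. The key observation is this: in the zero-crosstalk idealization, every node norm equals $\sqrt b$ regardless of the edit, since norms depend only on the Gram structure of sibling slots, which crosstalk-free codes fix. So $w(u)=w(v)=b^{-D/2}$ and the difference vanishes exactly. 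At finite $N$, each norm is $\sqrt{b}\,(1+O(\varepsilon))$, so $w(u)-w(v)=O(D\varepsilon)\,b^{-D/2}$. The edited nodes' norms also shift only through cross-slot inner products, which are $O(\varepsilon)$ by the same bookkeeping as Lemma~\ref{lem:main-contract}. Substituting into $1-\cos$ gives a residue bounded by a polynomial in $D,b$ times the crosstalk. Crucially, there is no $G$-type term, since the leading-order change is zero. In this sense the residue carries no $2b^{-D}$ term.

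The main obstacle is Step 2: stating precisely what ``crosstalk-sized'' means while tracking how norm perturbations along two distinct ancestor chains propagate. I would handle it by reusing the inductive framework of Lemma~\ref{lem:main-contract}, this time applied to the norms rather than the inner products. I would then bound $\bigl|\norm{P}^2-b\bigr|\le b(b-1)\varepsilon$ at each level and combine the levels multiplicatively.
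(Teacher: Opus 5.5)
Your proof is correct and follows the paper's own argument. You unroll the root into leaf terms, each being the role-chain product times a positive scalar built from inverse node norms, and collapse that chain to its parity profile via $r\odot r=\mathbf 1$; this gives exact exchange-invariance for unnormalized (or deterministically normalized) codes, and invariance up to crosstalk-sized norm mismatches under per-node normalization. There are slips in your side remarks, though they do not affect the proof: $(1,2,1)$ and $(2,2,2)$ in fact both have profile $(0,1)$, and the $2^b$-versus-$b^D$ pigeonhole count fails for large $b$ at $D=3$ (e.g.\ $b=10$). Still, explicit pairs such as $(1,1,j_3,\dots,j_D)$ and $(2,2,j_3,\dots,j_D)$ give aliased far swaps at every depth $D\ge2$.
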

The simulation shows exactly this: aliased far swaps sit two to five orders of magnitude
below the law, while fresh per-level roles restore $m\cdot b^D\approx2.00$ on the same
configurations. Flat self-inverse binding does not merely attenuate deep structure ---
it aliases some of it.

\emph{Scope.} As a no-go this is class-relative: it binds the recursive role-binding
class, not arbitrary encoders --- an unconstrained encoder can look
up any finite set of deep captions. The escape routes are structured (tree-side) scoring
and chunking with cleanup, exactly the mechanisms of the VSA literature
\cite{plate,frady}; on the vision--language side, text encoders that score along the
syntactic derivation are a concrete instance of the tree-side route \cite{discoclip}.

\section{What the objective pays for binding}\label{sec:mainthrottle}

The second obstruction is about the training signal, and is architecture-free. A
\emph{contrastive sample} is a positive $(I_s,c(s))$ together with a negative caption
$c^-$: with probability $\pi$ the \emph{explicit swap} $c(\sigma s)$, otherwise the
caption of an independent scene. Writing $[c]$ for the \emph{symbol class} of a caption
--- all captions sharing its symbol multiset --- the \emph{implicit collision rate}
$\pi_{\mathrm{impl}}$ is the chance a random negative lands in the positive's class;
at web scale both rates are vanishing. The loss is the one-negative logistic contrastive
loss, over \emph{arbitrary measurable scorers} $S$ --- no architecture assumption, so
every bound applies to dual encoders a fortiori:
\[
L(S)\;=\;\mathbb E\,\ell\big(S(I_s,c(s))-S(I_s,c^-)\big),
\qquad \ell(m)=\log(1+e^{-m}).
\]
Two comparators attach to any scorer: its \emph{pairing-blind average} $\bar S$
(replace $S(I,c)$ by its average over $[c]$ --- a bag-of-words scorer) and its
\emph{reflection} $\rho S$ (score every caption as if swapped --- binding exactly
reversed); $M_S(s)$ is the scorer's swap margin. Two regularity parameters:
\emph{$\Delta$-separation} --- cross-class contrasts are easy,
$S(I_s,c(s))-S(I_s,c')\ge\Delta$ whenever $[c']\ne[c(s)]$ --- and \emph{within-class
spread} $R$: $|S(I,c)-S(I,c')|\le R$ for $c'\in[c]$. One structural fact matters for
interpretation: with unrestricted capacity the population optimum \emph{does} bind
(Appendix~\ref{sec:throttle}); the theorems say its advantage over not binding is a
sliver. We prove:

\begin{theorem}[The throttle; Appendix~\ref{sec:throttle}, Theorem~\ref{thm:T1} and
Corollary~\ref{cor:throttle}]\label{thm:main-throttle}
$L(\bar S)-L(S)\le(\pi+\pi_{\mathrm{impl}})\log 2+e^{-(\Delta-2R)}$. Consequently the
pairing-blind competitor is $\varepsilon$-optimal with
$\varepsilon\le(\pi+\pi_{\mathrm{impl}})\log2$ plus an exponential tail: \emph{the
objective's entire reward for binding --- everything beyond bag-of-words --- is confined
to a sliver of that width.} No optimization guarantee weaker than the sliver can imply
binding.
\end{theorem}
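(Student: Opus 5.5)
We plan to split the loss according to the three kinds of negatives and compare $S$ with $\bar S$ on each kind separately. Write $L(S)=\pi\,L_{\mathrm{sw}}(S)+(1-\pi)\bigl[\,L_{\mathrm{coll}}(S)\,\Pr(\text{coll})+L_{\mathrm{cross}}(S)\,\Pr(\text{cross})\bigr]$. Here $L_{\mathrm{sw}}$ is the conditional loss on explicit swap negatives. $L_{\mathrm{coll}}$ is the loss on independent negatives that land in the positive's symbol class, which happens with probability $\pi_{\mathrm{impl}}$. $L_{\mathrm{cross}}$ is the loss on independent negatives in a different class. The first two groups occur with total probability at most $\pi+\pi_{\mathrm{impl}}$; the third is the bulk.

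\textbf{Within-class negatives.} For $\bar S$, every caption in $[c(s)]$ receives the same score $\bar S(I_s,\cdot)$, so on any within-class negative the margin is exactly $0$ and the loss is $\ell(0)=\log 2$. For $S$ itself the loss is nonnegative, since $\ell\ge0$. So on these events the excess $L(\bar S)-L(S)$ is at most $\log 2$ per event, contributing at most $(\pi+\pi_{\mathrm{impl}})\log 2$ overall. We use only $\ell\ge0$ here, which is why the constant is exactly $\log 2$ and not something involving the optimum.

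\textbf{Cross-class negatives.} This is the main step. When $[c^-]\ne[c(s)]$, we have $\bar S(I_s,c(s))-\bar S(I_s,c^-)\ge S(I_s,c(s))-S(I_s,c^-)-2R$. The reason is that averaging over a class moves each score by at most $R$, by the within-class spread assumption. By $\Delta$-separation this gives $\bar S$-margin $\ge\Delta-2R$, so $\ell(\bar S\text{-margin})\le\log(1+e^{-(\Delta-2R)})\le e^{-(\Delta-2R)}$. Again using $\ell(S\text{-margin})\ge0$, the excess on cross-class events is at most $e^{-(\Delta-2R)}$; the weighting by $\Pr(\text{cross})\le1$ only helps. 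Summing the two contributions gives
\[
L(\bar S)-L(S)\le(\pi+\pi_{\mathrm{impl}})\log2+e^{-(\Delta-2R)} .
\]

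\textbf{Corollary and the main obstacle.} The $\varepsilon$-optimality statement follows by taking $S$ to be a (near-)minimizer. The closing sentence is interpretive: any guarantee that $L(S)\le\inf L+\varepsilon'$ with $\varepsilon'$ at least the sliver is also met by a pairing-blind scorer with $M\equiv0$, so it cannot force $M>0$. The obstacle is measure-theoretic and definitional. We must make "average over $[c]$" well defined, for instance under the conditional caption distribution or uniformly over the finite class, and check that the $R$ bound transfers to the average. We also need $\Delta$-separation to hold for the pair actually drawn. If separation holds only for $S$, as stated, the $-2R$ transfer above is exactly what converts it to $\bar S$, and no separate separation assumption on $\bar S$ is needed.
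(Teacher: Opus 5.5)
Your proposal is correct and follows the paper's proof essentially step for step. You condition on matched negatives (explicit swaps and implicit collisions, total weight at most $\pi+\pi_{\mathrm{impl}}$), where $\bar S$ has margin exactly $0$ and $\ell\ge 0$ bounds the excess by $\log 2$, versus unmatched negatives, where the $2R$ transfer of $\Delta$-separation from $S$ to $\bar S$ bounds the excess by $e^{-(\Delta-2R)}$; you then weight the cases and instantiate $S$ at a population optimum $S^*$ with the stated regularity to obtain the corollary.
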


\begin{theorem}[Anti-binding is cheap; Appendix~\ref{sec:throttle},
Theorem~\ref{thm:T2}]\label{thm:main-antibind}
$\bigl|L(\rho S)-L(S)-\pi\,\mathbb E_s[M_S(s)]\bigr|
\le\pi_{\mathrm{impl}}R+2e^{-(\Delta-2R)}$: exactly reversed binding costs precisely
$\pi\,\mathbb E[M]$.
\end{theorem}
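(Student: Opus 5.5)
We prove Theorem~\ref{thm:main-antibind} by splitting the loss according to the type of negative drawn and comparing $S$ with $\rho S$ within each type. Write $L(S)=\pi L_{\mathrm{swap}}(S)+(1-\pi)L_{\mathrm{ind}}(S)$, where $L_{\mathrm{swap}}$ is the expected loss on explicit-swap negatives and $L_{\mathrm{ind}}$ the expected loss on independent-scene negatives. We split the independent term further into \emph{colliding} negatives, $[c^-]=[c(s)]$, with total mass $\pi_{\mathrm{impl}}$ (absorbing the $(1-\pi)$ factor into the definition, or bounding by it), and \emph{non-colliding} ones with $[c^-]\ne[c(s)]$. The reflection $\rho S(I,c):=S(I,\sigma c)$ permutes captions within each symbol class, so it preserves $[c]$. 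We will use this repeatedly.

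\textbf{Swap term.} On an explicit swap the margin argument for $S$ is $M_S(s)$. For $\rho S$ it is $S(I_s,c(\sigma s))-S(I_s,c(s))=-M_S(s)$, because $\sigma$ is an involution. Since $\ell(-m)-\ell(m)=m$ identically (from $\log(1+e^{m})-\log(1+e^{-m})=m$), the swap contribution changes by exactly $\pi\,\mathbb E_s[M_S(s)]$. This linear identity is the source of the exact constant.

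\textbf{Colliding term.} Both margin arguments involve captions in the same class as $c(s)$, so each has absolute value at most $R$ under within-class spread. Because $\ell$ is $1$-Lipschitz, the two losses differ by at most $|m_{\rho S}-m_S|$. I would first try to show this is at most $R$ by pairing each colliding negative $c^-$ with $\sigma c^-$ under the symmetry of the negative distribution, which should give a per-pair difference bounded by $R$ rather than $2R$. If that symmetry is not available in the formal setup, the fallback is the cruder bound $2R$. The constant $R$ in the statement suggests that the pairing, or an averaging argument, is intended. This step is where I expect the bookkeeping to be hardest.

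\textbf{Non-colliding term.} Here $\Delta$-separation gives $m_S\ge\Delta$. For $\rho S$, the shifts $S(I_s,\sigma c)-S(I_s,c)$ are each at most $R$ by within-class spread, so $m_{\rho S}\ge\Delta-2R$. Since $0\le\ell(m)\le e^{-m}$, each loss is at most $e^{-(\Delta-2R)}$, and their difference is bounded by $2e^{-(\Delta-2R)}$. Summing the three contributions with the triangle inequality yields
\[
\bigl|L(\rho S)-L(S)-\pi\,\mathbb E_s[M_S(s)]\bigr|\le\pi_{\mathrm{impl}}R+2e^{-(\Delta-2R)}.
\]
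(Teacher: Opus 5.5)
Your three-way split (explicit swaps, implicit collisions, unmatched negatives) matches the paper's proof, and so does your handling of the swap and unmatched terms. The gap is the colliding term: you never reach the constant $R$. Your proposed pairing of each colliding negative $c^-$ with $\sigma c^-$ ``under the symmetry of the negative distribution'' is not available. Marginal negatives are drawn from an arbitrary $\mathcal D$, and nothing in the setup forces $\mathcal D(s')=\mathcal D(\sigma s')$. Your $1$-Lipschitz fallback gives only $|m_{\rho S}-m_S|\le 2R$. That yields $2\pi_{\mathrm{impl}}R$, which is weaker than the stated bound.

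The missing observation is that no averaging over negatives is needed, because the bound holds sample by sample. The reflection $\rho$ is defined on classes with a unique swap partner, so $[c(s)]=\{c(s),c(\sigma s)\}$. A colliding negative is therefore one of two things:
\begin{itemize}
\item $c(s)$ itself, in which case $m_S=m_{\rho S}=0$;
\item $c(\sigma s)$, in which case $m_S=M_S(s)$ and $m_{\rho S}=-M_S(s)$.
\end{itemize}
Either way $m_{\rho S}=-m_S$ pointwise. The exact identity $\ell(-m)-\ell(m)=m$, which you already used for the swap term, then gives a per-sample loss difference of exactly $m_S$. Within-class spread gives $|m_S|\le R$. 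Weighting by the collision probability $(1-\pi)\pi_{\mathrm{impl}}\le\pi_{\mathrm{impl}}$ gives the stated $\pi_{\mathrm{impl}}R$. This is what the paper means by ``the same identity applied to a within-class margin bounds the difference by $R$.''

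The Lipschitz route loses exactly a factor of $2$ here. For antisymmetric arguments $|\ell(-m)-\ell(m)|=|m|$, whereas Lipschitz only gives $2|m|$.
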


Both constants are verified to within $0.006$ in controlled simulation
(Appendix~\ref{sec:throttle}), and three phenomena measured in the companion study
\cite{paper1} become corollaries: the
dose--response of binding in $\pi$ (linear at small $\pi$, slope $\mathbb E[M]$); the equivalence of
up-weighting matched terms and raising $\pi$; and \emph{cheap below-chance binding} ---
the reversed scorer sits only $\pi\,\mathbb E[M]$ away in loss, so small perturbations
(the pooled code's cross-term interference --- \emph{leakage},
Appendix~\ref{sec:thmC} --- or implicit bias) suffice to select it, consistent with the backwards-binding
observed in learned pooled codes (Appendix~\ref{sec:thmC}).

\section{The smoothness--binding frontier, and where deployed models sit}
\label{sec:mainfrontier}

The third obstruction trades binding against the \emph{similarity} job that deployment
demands: paraphrases and order variants of a caption must embed close together. The two
demands collide at the vocabulary level: an order variant of $c(s)$ and the swap
$c(\sigma s)$ contain \emph{identical words}, so any word-multiset-invariant text
encoder performs the required merge and the forbidden one simultaneously. Formally:
\begin{theorem}[Pairing-blind collapse; Appendix~\ref{sec:thmB}, Theorem~\ref{thm:B}]
\label{thm:main-blind}
If the caption map depends only on the symbol multiset, then
$\txt(c(\sigma s))=\txt(c(s))$ and $M(s)=0$ identically: the swap instances of
$(C_\gamma)$ fail for every $\gamma>0$ and every image map, and swap benchmark accuracy
is chance in expectation under any label-independent tie-breaking.
\end{theorem}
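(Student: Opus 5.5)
The plan is to derive all three claims of Theorem~\ref{thm:main-blind} from one observation: the swap preserves the symbol multiset. First, recall from \S\ref{sec:mainsetup} that $c(s)$ and $c(\sigma s)$ contain the same multiset of object and attribute symbols, since $\sigma$ only permutes which attribute is attached to which object. Suppose the caption map factors as $\txt=\tau\circ\mu$, where $\mu$ sends a caption to its symbol multiset. Then $\mu(c(\sigma s))=\mu(c(s))$, and so $\txt(c(\sigma s))=\txt(c(s))$ as vectors in $\mathbb S^{N-1}$. Substituting this into the definition of $M(s)$, the inner product of $\img(I_s)$ with the zero vector gives $M(s)=0$. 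This holds for every scene $s$ and every choice of image map, because $\img$ never enters the cancellation.

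Second, I would specialize $(C_\gamma)$ to $c'=c(\sigma s)$. To use it, I need $c(\sigma s)$ to describe a scene different from $s$. This follows because the swap is defined on two atoms with distinct objects and distinct attributes. Hence $\sigma s\neq s$ as a set of atoms: the atom $(o_1,a_1)$ is replaced by $(o_1,a_2)$ with $a_2\neq a_1$, and no other atom of $\sigma s$ has object $o_1$. The swap instance of $(C_\gamma)$ then requires $M(s)\ge\gamma>0$. This contradicts $M(s)=0$, so the instance fails for every $\gamma>0$. Scenes without two distinct attributes have no swap instance, so they are vacuous and should be flagged as such.

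Third, for the accuracy claim, I would formalize the benchmark decision on a swap item. When the scores tie, which they do here since $M(s)=0$ always, the decision is made by a tie-breaking rule whose choice is independent of which caption is the correct one. Randomizing over the label assignment (which of the two captions is presented as correct), a label-independent rule picks the correct caption with probability exactly $1/2$ for each item. The expected accuracy is therefore $1/2$ by linearity of expectation. A deterministic rule, such as always choosing the first-listed caption, also gives expected accuracy $1/2$ once presentation order is uniformly random, and I would state that assumption explicitly.

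The main obstacle is not mathematical. It is fixing the convention precisely: what "label-independent tie-breaking" means, and whether accuracy is $\mathbf 1[M(s)>0]$ (which would give $0$, not chance). I would resolve this by defining the score comparison to use the tie-breaker exactly on equality, and then computing the expectation as above.
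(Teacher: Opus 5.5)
Your proposal is correct and follows the paper's proof. The swap preserves the symbol multiset, so a pairing-blind caption map sends $c(s)$ and $c(\sigma s)$ to the same point, and $M(s)=\ip{\img(I_s)}{0}=0$ for every image map. The paper's proof stops at that line. Your other two steps make explicit what the statement leaves implicit: you check that $\sigma s\ne s$, which places $c(\sigma s)$ in $\mathcal N_{\mathrm{full}}(s)$, and you randomize the label (or the presentation order) for the chance claim, which is the assumption hidden in the phrase ``label-independent tie-breaking.''
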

Quantitatively, say the
caption map is $\delta$-\emph{smooth} at $s$ if both $\txt(c(s))$ and
$\txt(c(\sigma s))$ have inner product at least $1-\delta$ with some \emph{pairing-neutral
anchor} $\bar t_s$ --- a fixed vector computable from the symbol multiset alone, e.g.\
the embedding of the \emph{atom-list caption}, which lists the objects and attributes
without pairing them (the anchor quantifier is what guards the bound
from tautology; Appendix~\ref{sec:frontier}).

\begin{theorem}[Smoothness--binding frontier; Appendix~\ref{sec:frontier},
Theorem~\ref{thm:E}]\label{thm:main-frontier}
If the caption map is $\delta$-smooth at $s$, then for every image map, every dimension,
and every training procedure,
\[
M(s)\;\le\;\norm{\txt(c(s))-\txt(c(\sigma s))}\;\le\;2\sqrt{2\delta},
\]
and the bound is tight: unit configurations achieve $2\sqrt{2\delta}\,(1-O(\delta))$,
so the rate $\sqrt\delta$ and the constant $2\sqrt2$ are exact.
\end{theorem}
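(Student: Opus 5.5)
The argument has two parts: the upper bound, which is Cauchy--Schwarz followed by the triangle inequality through the anchor, and a tightness construction in a two-dimensional plane.

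\emph{Upper bound.} Since $\img(I_s)$ is a unit vector, Cauchy--Schwarz gives
\[
M(s)=\ip{\img(I_s)}{\txt(c(s))-\txt(c(\sigma s))}\le\norm{\txt(c(s))-\txt(c(\sigma s))}.
\]
This holds for every image map and uses nothing about dimension or training. For the second inequality we use the anchor $\bar t_s$. If $\bar t_s$ is a unit vector, then for unit $u$ with $\ip{u}{\bar t_s}\ge1-\delta$ we have $\norm{u-\bar t_s}^2=2-2\ip{u}{\bar t_s}\le2\delta$. Applying this to $u=\txt(c(s))$ and $u=\txt(c(\sigma s))$ and using the triangle inequality through $\bar t_s$ gives $\norm{\txt(c(s))-\txt(c(\sigma s))}\le2\sqrt{2\delta}$.

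If the definition allows a non-unit anchor, we normalize it first. Only the direction $\hat t=\bar t_s/\norm{\bar t_s}$ matters, and $\ip{u}{\hat t}\ge(1-\delta)/\norm{\bar t_s}$. When $\norm{\bar t_s}\le1$ this is still at least $1-\delta$, so the bound is unchanged. The intended anchor is an embedding, hence a unit vector, so we take the unit case as the statement. Note that the anchor quantifier plays no role in the inequality itself; it matters only for the non-tautology discussion.

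\emph{Tightness.} We work in a plane. Let $\bar t=e_1$, $\theta=\arccos(1-\delta)$, and
\[
t_\pm=\cos\theta\, e_1\pm\sin\theta\, e_2 .
\]
Both have inner product exactly $1-\delta$ with the anchor, and
\[
\norm{t_+-t_-}=2\sin\theta=2\sqrt{1-(1-\delta)^2}=2\sqrt{2\delta-\delta^2}=2\sqrt{2\delta}\,\bigl(1-O(\delta)\bigr).
\]
Choose the image embedding $\img(I_s)=e_2$, which is the unit vector along $t_+-t_-$; then $M(s)=2\sin\theta$, so both inequalities are attained up to the factor $1-O(\delta)$.

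To make this a genuine configuration, we realize it as a placement. Assign $\txt(c(s))=t_+$, $\txt(c(\sigma s))=t_-$, and let $\bar t$ be the embedding of the atom-list caption. All remaining captions and images go in orthogonal directions; this requires $N\ge3$, or any $N$ if we only need the local configuration. Since the construction attains $2\sqrt{2\delta}$ to first order, the constant $2\sqrt2$ and the rate $\sqrt\delta$ cannot be improved.

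The only delicate point is the normalization convention for the anchor. Everything else is two lines, so we expect no real obstacle.
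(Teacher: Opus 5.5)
Your proposal is correct and follows the paper's proof essentially verbatim. The upper bound is Cauchy--Schwarz plus the triangle inequality through the anchor via $\norm{x-\bar t_s}^2=2-2\ip{x}{\bar t_s}\le2\delta$; tightness uses the same two-plane configuration $t_\pm=\cos\theta\,\bar t\pm\sin\theta\,w$ with $\cos\theta=1-\delta$ and image vector $w$, giving $M(s)=2\sqrt{2\delta-\delta^2}$, and your digression on non-unit anchors is unnecessary since Definition~\ref{def:anchor} already requires $\bar t_s$ to be a unit vector.
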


The anchor-free inner inequality $|M(s)|\le d(s):=\norm{\txt(c(s))-\txt(c(\sigma s))}$
is a per-item ceiling, and the theorem induces a text-only diagnostic: measure
$d$, the smoothness gap $\delta$ realized against the atom-list anchor, the ceiling
$\mathrm{cap}=2\sqrt{2\delta}$, and the \emph{frontier usage} $d/\mathrm{cap}\in[0,1]$.
Usage near $1$ means binding is smoothness-limited --- further gains must be paid for in
paraphrase geometry; usage far below $1$ means the ceiling is not the binding
constraint.

\begin{figure}[t]
\centering
\begin{tikzpicture}
\begin{axis}[width=8.8cm, height=8.8cm, axis lines=left,
  xlabel={cap $=2\sqrt{2\delta}$}, ylabel={swap separation $d$},
  xmin=0, xmax=1.35, ymin=0, ymax=1.35,
  tick label style={font=\scriptsize}, label style={font=\small},
  legend style={font=\scriptsize, at={(0.03,0.97)}, anchor=north west, draw=black!30},
  legend cell align=left]
\addplot[dashed, black] coordinates {(0,0) (1.35,1.35)};
\addlegendentry{frontier $d=\mathrm{cap}$ (usage $=1$)}
\addplot[dotted, black!60] coordinates {(0,0) (1.35,0.4725)};
\addlegendentry{usage $=0.35$}
\addplot[dotted, black!35] coordinates {(0,0) (1.35,0.3375)};
\addlegendentry{usage $=0.25$}
\addplot[only marks, mark=*, mark size=1.7pt, blue!65!black] coordinates
  {(0.789,0.195) (1.117,0.280) (0.936,0.237) (1.252,0.321) (1.131,0.293)
   (0.878,0.234) (1.058,0.285) (0.850,0.240) (0.877,0.252) (0.924,0.267)
   (0.775,0.228) (1.019,0.303) (0.940,0.299)};
\addlegendentry{13 base models}
\addplot[only marks, mark=triangle*, mark size=2.4pt, orange!85!black] coordinates
  {(0.745,0.202) (0.933,0.262) (0.748,0.229) (0.744,0.230)};
\addlegendentry{LoRA-4 compositional fine-tunes}
\addplot[only marks, mark=star, mark size=3.4pt, black] coordinates {(0.916,0.320)};
\addlegendentry{NegCLIP (full ft, swap negatives)}
\end{axis}
\end{tikzpicture}
\caption{The model zoo against the frontier (the data of Table~\ref{tab:zoo};
axes to equal scale). Every deployed model sits in the narrow wedge between usage $\approx0.25$ and $0.35$,
far below the $d=\mathrm{cap}$ line: no model is smoothness-limited. Scale, data, and
objective move points \emph{along} the wedge; only NegCLIP (star) climbs \emph{across}
it, by raising $d$ at roughly constant cap.}
\label{fig:zoo}
\end{figure}

\begin{table}[t]
\caption{The frontier diagnostic across the text encoders of $18$ deployed models:
swap separation $d$, smoothness gap $\delta$, ceiling $\mathrm{cap}=2\sqrt{2\delta}$,
usage $=d/\mathrm{cap}$, and the largest per-quartet usage; rows sorted by usage.
Model references: \cite{clip,openclip,align,siglip,siglip2,metaclip,aro,tsvlc,dac}.
Protocol,
abbreviations, and rigor checks in Appendix~\ref{sec:zoo}.}
\label{tab:zoo}
\centering
\footnotesize
\begin{adjustbox}{max width=\linewidth}
\begin{tabular}{llccccc}
\toprule
Model & Family / objective & $d$ & $\delta$ & cap & usage & max \\
\midrule
ALIGN-base        & ALIGN (BERT text)     & 0.195 & 0.080 & 0.789 & 0.248 & 0.54\\
CLIP L/14-336     & OpenAI CLIP           & 0.280 & 0.158 & 1.117 & 0.251 & 0.39\\
CLIP B/32         & OpenAI CLIP           & 0.237 & 0.112 & 0.936 & 0.253 & 0.42\\
SigLIP-large      & SigLIP (sigmoid)      & 0.321 & 0.199 & 1.252 & 0.256 & 0.55\\
CLIP L/14         & OpenAI CLIP           & 0.293 & 0.162 & 1.131 & 0.259 & 0.41\\
MetaCLIP B/16     & MetaCLIP              & 0.234 & 0.098 & 0.878 & 0.267 & 0.50\\
OpenCLIP H/14     & LAION-2B              & 0.285 & 0.141 & 1.058 & 0.270 & 0.46\\
SVLC LLM+RB       & CC3M, LoRA-4 ft       & 0.202 & 0.070 & 0.745 & 0.270 & 0.47\\
DAC-LLM           & CC3M, LoRA-4 ft       & 0.262 & 0.110 & 0.933 & 0.281 & 0.48\\
CLIP B/16         & OpenAI CLIP           & 0.240 & 0.091 & 0.850 & 0.283 & 0.49\\
OpenCLIP B/32     & LAION-2B              & 0.252 & 0.097 & 0.877 & 0.287 & 0.52\\
OpenCLIP L/14     & LAION-2B              & 0.267 & 0.108 & 0.924 & 0.289 & 0.44\\
MetaCLIP B/32     & MetaCLIP              & 0.228 & 0.076 & 0.775 & 0.294 & 0.51\\
SigLIP-base       & SigLIP (sigmoid)      & 0.303 & 0.132 & 1.019 & 0.297 & 0.55\\
SVLC RB           & CC3M, LoRA-4 ft       & 0.229 & 0.071 & 0.748 & 0.306 & 0.51\\
DAC-SAM           & CC3M, LoRA-4 ft       & 0.230 & 0.070 & 0.744 & 0.309 & 0.47\\
SigLIP2-base      & SigLIP2 (sigmoid)     & 0.299 & 0.112 & 0.940 & 0.318 & 0.64\\
\textbf{NegCLIP}  & COCO full ft, swap negs & \textbf{0.320} & 0.107 & 0.916 & \textbf{0.349} & 0.57\\
\midrule
\multicolumn{2}{l}{13 base models: mean 0.275, median 0.270} & & & & 0.248--0.318 & \\
\bottomrule
\end{tabular}
\end{adjustbox}
\end{table}

We measured the diagnostic on the text encoders of $18$ deployed models --- CLIP
variants across architectures and pretraining corpora, SigLIP \cite{siglip}, ALIGN
\cite{align}, and the compositionality fine-tunes NegCLIP \cite{aro}, SVLC \cite{tsvlc}, and DAC \cite{dac}
--- on a fixed grid of $675$ \emph{quartets}: a quartet fixes two objects and two color
attributes, determining a scene and its swap; every model receives the identical grid
and anchor template (protocol and rigor checks in
Appendices~\ref{sec:zoo}--\ref{sec:bench}). Table~\ref{tab:zoo} reports the grid
aggregates, and Figure~\ref{fig:zoo} plots them: every model sits at \emph{aggregate
usage $0.248$--$0.349$}, a factor of three or more below the ceiling for all but
NegCLIP (which comes closest, at $2.9\times$) --- so binding
failure in the deployed models is \emph{not} a smoothness limit; by the throttle and
the code-structure results it is an incentive and code limit. How models move within the
figure's wedge is informative: three of the four LoRA-rank-4 compositional fine-tunes move \emph{along} the wedge
(spending smoothness, not gaining separation; DAC-LLM raises $d$ modestly), while NegCLIP --- a full fine-tune with
caption-level swap negatives, i.e.\ raised $\pi$ --- is the only model that climbs
\emph{across} it (usage $0.349$). And the per-item ceiling is predictive on a real
benchmark: across $14$ models $\times$ $4{,}757$ SugarCrepe items ($66{,}598$
decisions), subset-level mean $d$ tracks accuracy at $r=0.99$, with the swap subsets
lowest exactly as the theory orders them (Appendix~\ref{sec:bench}).

\section{What does not explain the failure}\label{sec:mainnot}

Two natural explanations are eliminated. \emph{Dimension.} The axioms are satisfiable,
and the threshold is known exactly:
\begin{proposition}[Dimension dichotomy; Appendix~\ref{sec:possibility},
Proposition~\ref{prop:dichotomy}]\label{prop:main-dichotomy}
For a concept system with at least $N+2$ objects, the constituent-ranking axioms for
all scene sizes $j\le k$
are satisfiable on $\mathbb S^{N-1}$ iff $k\le\lfloor N/2\rfloor$, and in that regime
placements exist satisfying margin retrieval $(C_\gamma)$ as well, for some $\gamma>0$.
\end{proposition}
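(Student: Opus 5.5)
The plan is to reduce constituent ranking to a statement about affine separability of a finite point configuration, and then use neighborly polytopes for sufficiency and Radon's theorem for necessity.

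\emph{Reduction.} Write $u_o:=\txt(c(o))\in\Sph$ for the single-object caption of object $o$. For a scene with object set $A$ of size $j$, the axiom $(K_j)$ asks for a unit vector $x=\img(I_s)$ with $\ip{x}{u_o}>\ip{x}{u_{o'}}$ for all $o\in A$ and $o'\notin A$. Because the index set is finite, this is equivalent to the existence of a linear functional $x$ and a threshold $\theta$ such that $\ip{x}{u_o}>\theta>\ip{x}{u_{o'}}$. In other words, $\{u_o\}_{o\in A}$ must be strictly separable by an affine hyperplane from the remaining points. Since the inequalities are homogeneous, $x$ may be rescaled to unit length.

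So constituent ranking for all $j\le k$ holds exactly when every subset of at most $k$ of the points $u_o$ is strictly separable from its complement. For points in convex position this means every such subset spans a face: the configuration is $k$-neighborly.

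\emph{Sufficiency, $k\le\lfloor N/2\rfloor$.} I will place the objects on the trigonometric moment (Carathéodory) curve in $\R^{2k}$:
\[
\theta\mapsto k^{-1/2}(\cos\theta,\sin\theta,\dots,\cos k\theta,\sin k\theta),
\]
using distinct angles $\theta_o$. This curve lies on the unit sphere. For any set $A$ with $|A|\le k$, the nonnegative trigonometric polynomial $\prod_{o\in A}\bigl(1-\cos(\theta-\theta_o)\bigr)$ has degree at most $k$ and vanishes exactly on $A$. Its coefficients give an affine functional that is zero on $A$ and positive elsewhere; negating it yields the required separator. Since $2k\le N$, the configuration embeds in $\Sph$ by padding with zeros.

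For margin retrieval, note that the set of admissible image vectors for each scene is a nonempty open subset of the sphere. I will therefore choose pairwise distinct image vectors $\img(I_s)$, one per scene, which also distinguishes a scene from its swap since they share an object set. I then set $\txt(c(s)):=\img(I_s)$ for full scene captions. This gives
\[
\ip{\img(I_s)}{\txt(c(s))}=1>\ip{\img(I_s)}{\txt(c')}\qquad\text{for every } c'\ne c(s).
\]
Finiteness then gives a uniform margin $\gamma:=1-\max_{s\ne s'}\ip{\img(I_s)}{\img(I_{s'})}>0$.

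\emph{Necessity.} Suppose $k\ge\lfloor N/2\rfloor+1$, and take any $N+2$ objects, which exist by hypothesis. By Radon's theorem, their vectors $u_o\in\R^N$ split into nonempty disjoint sets $P$ and $Q$ with $\conv P\cap\conv Q\neq\emptyset$. Because $|P|+|Q|=N+2$, the smaller part has size at most $\lfloor N/2\rfloor+1\le k$. Call it $A$; it is a valid object set for some $j$-scene with $j\le k$. The separator for $A$ gives a functional exceeding $\theta$ on $\conv A$ and falling below $\theta$ on the other part, and hence on its convex hull. This contradicts the common point of the two hulls.

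\emph{Expected obstacles.} The argument needs two bookkeeping points: objects must be distinct, and scene object sets of every size $j\le k$ must be realizable, which should follow from the setup in Appendix~\ref{sec:setup}. The main step to verify carefully is the sufficiency construction: that the product polynomial really has degree at most $k$ in the trigonometric basis, and that it yields strict separation from every point not in $A$, not just nonnegativity.
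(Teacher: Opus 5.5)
Your proof is correct and follows the paper's own route. Necessity is Radon's theorem applied to any $N+2$ object embeddings. Sufficiency places the objects on the (normalized) trigonometric moment curve, then chooses $\img(I_s)=\txt(c(s))$ distinctly inside each scene's open cone of separating witnesses, with a uniform $\gamma$ by finiteness. Two details differ, both improvements: your product polynomial $\prod_{o\in A}(1-\cos(\theta-\theta_o))$ proves outright the neighborliness the paper only cites, and zero-padding from $\R^{2k}$ handles odd $N$ without the paper's equatorial-sphere remark.
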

The mathematics is published, twice independently \cite{msp,med}; random factored sign
codes realize the possibility direction cheaply at deployed scales
(Appendix~\ref{sec:possibility}). So no dimension bound at deployed scales explains
swap failure. \emph{Exposure-graph connectivity.} The \emph{exposure graph} is the
bipartite graph recording which object--attribute pairs occur in training; it is
tempting to conjecture that binding cannot generalize across its disconnected
components (the gauge ambiguity of rank-one completion \cite{kiraly} --- codes are
identified only up to a per-component sign --- supports this for learners that commit
to a sign). The conjecture is false as an information bound: a sign-agnostic matcher,
scoring candidates by the \emph{absolute value} of the inner product, achieves accuracy
$1.000$ on cross-component swap tests even with zero same-component test pairs;
coverage --- how many pairs have been seen --- not connectivity, is the threshold
(Appendix~\ref{sec:notlimit}). Learners that do commit to a per-component sign drop
below chance ($0.27$--$0.37$) --- matching the below-chance binding that the throttle
predicts is cheap.

\section{Related work}\label{sec:mainrelated}

Kang et al.~\cite{kang} initiated the ideal-encoder program; we formalize and extend
it (Appendix~\ref{sec:setup}). Dimension/capacity limits for retrieval are characterized in the LIMIT line
\cite{limit} and its possibility counterpart \cite{med} --- an orthogonal axis: those bounds vanish as $N$ grows,
while the frontier of \S\ref{sec:mainfrontier} does not involve $N$, and the depth
ceiling of \S\ref{sec:maindepth} \emph{grows} only logarithmically in it. Finite
semantic resolution limits identification under non-compositional representations
\cite{semanticity}, without swap/margin content; Chen et al.~\cite{chen} prove
existence of swap-insensitive contrastive optima (existence only --- our throttle
quantifies the price and its reversal); SugarCrepe++ \cite{sugarcrepepp} documents the
paraphrase/hard-negative dissociation that Theorem~\ref{thm:main-frontier} formalizes.
For depth, the sequence-memory theory of Frady et al.~\cite{frady} supplies closely
related crosstalk accounting, but no per-level margin law and no discrimination
between a structure and its minimal edit --- the gap our Theorems~\ref{thm:main-depthlaw}--\ref{thm:main-finiteN}
close; Plate's classical discussion \cite{plate} is qualitative and centers the
chunking escape. Compositional-generalization theory
\cite{lubana,schug,abbe} concerns learners and data; our connectivity refutation
(\S\ref{sec:mainnot}) sharpens which graph property carries the information. On the
constructive side, compositional tensor-network encoders for vision--language ---
scoring along the syntactic derivation \cite{discoclip}, and tensor-based
compositional semantics for concept generalization \cite{vqccg} --- implement the
structured escape that the depth analysis motivates. Full
positioning is in Appendix~\ref{sec:synthesis}.

\section{Discussion}\label{sec:maindiscussion}

\paragraph{What the results support.} The contingent causes of today's failures --- vanishing $\pi$ (the throttle), the pooled
code's leakage (Appendix~\ref{sec:thmC}), and the training distribution --- are all
fixable in principle, and the
zoo measurements show the fixes moving models exactly as predicted (only raised $\pi$
climbs the wedge). Behind them stand limits that survive all fixes: the
smoothness--binding frontier (with measured slack today), and the depth ceiling
$D^{*}=\Theta(\log N/\log b)$, which sits at natural-language depth for CLIP-sized
models and binds the recursive role-binding class no matter the training.

\paragraph{Limitations.} Our no-gos are class- or hypothesis-relative by necessity ---
the possibility theorem forbids anything stronger. The depth theorem's finite-$N$ form
rests on one explicitly flagged concentration estimate (its expectation form is proved,
and the multiplicative-error signature is confirmed in simulation); the frontier
measurement instantiates two-object color binding, not relations or counting; and the
NegCLIP comparison carries a three-way confound (full fine-tune, corpus, negatives)
that a rank-matched control would break. Open problems, including the margin-robust
dimension question and the naturalistic frontier, are collected in
Appendices~\ref{sec:synthesis} and~\ref{sec:map2}.

\paragraph{Reproducibility.} Every measured claim carries its protocol in the
appendices; measurement scripts and raw results are available from the author on
request.

\clearpage
\appendix
\part*{Appendix}
\noindent The appendices are a self-contained technical development: complete
definitions and axioms, all proofs, measurement protocols with their raw numbers, and
negative results. Results stated in the main text reappear here inside their full
development, with the same notation. One proof gap is left deliberately explicit: the
concentration estimate inside Lemma~\ref{lem:ellfour} (Appendix~\ref{sec:depth}), which
is the single unproved step behind Theorem~\ref{thm:depthB}.
\medskip

\section{Notation and glossary}\label{sec:glossary}

Throughout, $\Sph=\{x\in\R^N:\norm{x}=1\}$ is the unit sphere in $\R^N$; $\ip{\cdot}{\cdot}$
is the Euclidean inner product and $\norm{\cdot}$ the Euclidean norm. For unit vectors the
inner product equals the cosine similarity. We use $c$, $c_1$, $c_2$ for absolute constants
and $O(\cdot)$, $\Theta(\cdot)$ with their usual meaning.

\begin{center}
\small
\begin{adjustbox}{max width=\linewidth}
\begin{tabular}{llp{8.6cm}}
\toprule
Symbol & Name & Meaning (first defined in) \\
\midrule
$O,A$ & objects, attributes & finite sets, $|O|=M$, $|A|=K$ (\S\ref{sec:setup}) \\
$s$ & scene & a set of $k$ object--attribute pairs with distinct objects (\S\ref{sec:setup}) \\
$c(s),I_s$ & caption, image & canonical linguistic/visual realization of $s$ (\S\ref{sec:setup}) \\
$\mathcal C(s),\mathcal I(s)$ & realization classes & all captions / all depictions of $s$ (\S\ref{sec:setup}) \\
$\sigma s$ & swap & the scene with two attributes exchanged (\S\ref{sec:setup}) \\
$\mathcal N_{\mathrm{full}}(s),\mathcal N_1(s)$ & negative families & all other scenes' captions; hard core = one edit (replace / add--drop / swap) (\S\ref{sec:setup}) \\
$\img,\txt$ & encoders & image/text embedding maps into $\Sph$ (\S\ref{sec:setup}) \\
$M(s)$ & binding margin & $\ip{\img(I_s)}{\txt(c(s))-\txt(c(\sigma s))}$ (\S\ref{sec:setup}) \\
$d(s)$ & swap separation & $\norm{\txt(c(s))-\txt(c(\sigma s))}$ (\S\ref{sec:frontier}) \\
$\bar t_s$ & pairing-neutral anchor & a unit vector depending only on the symbol multiset of $s$ (\S\ref{sec:frontier}) \\
$\delta$ & smoothness gap & both pairings within cosine $1-\delta$ of $\bar t_s$ (\S\ref{sec:frontier}) \\
$\mathrm{cap}$ & frontier ceiling & $2\sqrt{2\delta}$ (\S\ref{sec:frontier}) \\
usage & frontier position & $d/\mathrm{cap}\in[0,1]$ (\S\ref{sec:frontier}) \\
$g,\ell,\ell_0,L$ & gain, leakage, cross-talk & code-quality functionals for pooled codes (\S\ref{sec:thmC}) \\
$\pi$ & swap-negative rate & probability the training negative is a swap (\S\ref{sec:thmD}) \\
\bottomrule
\end{tabular}
\end{adjustbox}
\end{center}

\section{The formal setup}\label{sec:setup}

\subsection{Concept systems, scenes, and negatives}

\begin{definition}[Concept system and scenes]\label{def:scene}
Fix finite sets $O$ (\emph{objects}, $|O|=M$) and $A$ (\emph{attributes}, $|A|=K$).
An \emph{atom} is a pair $(o,a)\in O\times A$, read ``object $o$ has attribute $a$.''
A \emph{$k$-scene} is a set
$s=\{(o_1,a_1),\dots,(o_k,a_k)\}$
with the $o_j$ pairwise distinct. Where a swap is discussed we additionally require the
$a_j$ pairwise distinct (otherwise the swap can be the identity). We write $V(s)=\{o_1,\dots,o_k\}$
for the object set of $s$. Each scene has a \emph{caption} $c(s)$ (a linguistic description
asserting exactly the atoms of $s$) and an \emph{image} $I_s$ (a depiction of exactly those
atoms). Example with $k=2$: $s=\{(\text{car},\text{red}),(\text{dog},\text{blue})\}$,
$c(s)=$ ``a red car and a blue dog.'' The \emph{symbol multiset} of $s$ is the multiset of
object and attribute symbols occurring in $s$ \emph{with the pairing forgotten} (here:
$\{$car, dog, red, blue$\}$). Atoms are bound pairs, so a scene and its swap (below) have
\emph{different} atom sets but the \emph{same} symbol multiset; captions realize the symbol
multiset as a word multiset.
\end{definition}

\begin{remark}[Why $c(s)$ but $I_s$]\label{rem:notation}
The asymmetric notation is deliberate. The caption constructor is used \emph{as a function}
throughout: the axioms apply it to \emph{edited} scenes ($c(\sigma s)$ below, and every
hard negative is the caption of an edited scene)---including scenes that are never
depicted; a false caption needs no image to exist. The image realization, by contrast, is
never applied to an edited scene: no axiom will mention $I_{\sigma s}$. So $c(\cdot)$ is an
operator composed with scene transformations, while $I_s$ labels a given datum. This
mirrors the benchmarks themselves, whose negatives are synthesized on the text side.
\end{remark}

\begin{definition}[Swap and the negative families]\label{def:negatives}
For a scene $s$ with $k\ge 2$ and two chosen positions, the \emph{swap} $\sigma s$ is the
scene with the two attributes exchanged: from
$\{(o_1,a_1),(o_2,a_2),\dots\}$ to $\{(o_1,a_2),(o_2,a_1),\dots\}$ --- a single
\emph{transposition} of the pairing (general permutations are compositions of
transpositions; see Remark~\ref{rem:negatives}).
Note $c(s)$ and $c(\sigma s)$ contain the \emph{same multiset of words}; they differ only
in which attribute is bound to which object.

Two negative families are distinguished. The \emph{full family}
$\mathcal N_{\mathrm{full}}(s)$ consists of \emph{every caption that describes a scene
other than $s$} (formalized by realization classes in
Definition~\ref{def:realizations}). The \emph{hard core}
$\mathcal N_1(s)\subset\mathcal N_{\mathrm{full}}(s)$ consists of the captions of scenes
\emph{one edit away} from $s$:
(i) \emph{replace} one atom's object or attribute by one not occurring in $s$;
(ii) \emph{add} an atom whose object does not occur in $s$, or \emph{drop} one atom;
(iii) \emph{swap}: one transposition of the pairing.
The axiom $(C_\gamma)$ below quantifies over the full family; the hard core is the
binding-critical worst case and the part that benchmarks instantiate
(Remark~\ref{rem:negatives}).

The hard core maps onto the hard-negative benchmarks as follows.
SugarCrepe's seven subsets are exactly \textsc{replace}-object/attribute/relation,
\textsc{add}-object/attribute, and \textsc{swap}-object/attribute \cite{sugarcrepe}: its
taxonomy coincides with (i)--(iii) minus \emph{drop} (on which see
Remark~\ref{rem:negatives}). The swap family (iii) is the canonical content of ARO
\cite{aro}: VG-Attribution exchanges the two attributes between two objects, and
VG-Relation exchanges the two arguments of a relation. Winoground \cite{winoground}
realizes pairing permutations with \emph{identical word multisets} (each caption pair
contains the same words, differently bound)---i.e.\ swap-type negatives in our sense,
though a fraction of its items are known to require capabilities beyond pairing \cite{diwan}. One
benchmark family is deliberately \emph{absent} from $\mathcal N_{\mathrm{full}}(s)$: ARO's
COCO-Order/Flickr-Order subsets, whose negatives are ungrammatical word \emph{shuffles};
those are detectable from the text alone, with no image---the ``hackability'' SugarCrepe
was constructed to remove \cite{sugarcrepe}---and correspond to no scene transformation.
This identification is what makes the axioms below a formalization of ``benchmark
competence'' rather than an arbitrary list of geometric conditions.
\end{definition}

\begin{remark}[What exactly is a negative]\label{rem:negatives}
Four clarifications fix the ambiguities latent in ``negative.''
(1) \emph{Retrieval, not falsity.} $(C_\gamma)$ is an argmax-with-margin condition: given
$I_s$, the \emph{exact} description $c(s)$ must out-score every caption of every other
scene. Negatives need not be false in the natural-language sense; they need to describe a
different scene. This matters for \emph{drop}: under the closed-world convention of
Definition~\ref{def:scene} (captions assert their atoms \emph{exactly}), the dropped
caption ``a red car'' describes the one-object scene, which $I_s$ does not depict, so
preferring the full caption is a completeness-of-description demand. Under open-world
semantics a dropped caption is simply \emph{true}, which is why no benchmark tests drops
(SugarCrepe has no drop subset). Our zoo probe (\S\ref{sec:zoo}) uses swaps; the
benchmark protocol of \S\ref{sec:bench} uses the five swap/replace subsets (the two add
subsets would serve equally---their omission is scoping, not principle). Drop
is an axiom-internal negative, not an empirical one.
(2) \emph{Why the hard core is the right worst case.} Within the pooled class of
\S\ref{sec:thmC}, margins are monotone in edit distance: single-add
$\approx\tfrac1{2(k+1)}$ $<$ single-replace $\approx\tfrac1k$ $<$ swap $\approx\tfrac2k$
$<$ multi-edit negatives (an $m$-edit difference has $m$ matched terms and margin
$\approx m/k$ by an outlined extension of the route of Theorem~\ref{thm:Cswap}). The minimum over
$\mathcal N_{\mathrm{full}}$ is attained on $\mathcal N_1$, so for such codes proving the
hard core proves the family. \emph{Caveat stated plainly:} for \emph{arbitrary} encoders,
hard-core margin implies nothing about further captions --- which is exactly why the axiom
quantifies over $\mathcal N_{\mathrm{full}}$. No theorem is weakened: Theorems~\ref{thm:B}
and~\ref{thm:E} use only the swap instance, Theorem~\ref{thm:Cswap} proves the hard core
and inherits the rest by the monotonicity above, and the possibility construction of
Proposition~\ref{prop:dichotomy}(ii) satisfies the full family outright
($\ip{\img(I_s)}{\txt(c(s))}=1$ beats every distinct unit vector).
(3) \emph{Freshness.} Replacement atoms must not occur in $s$: for objects this is forced
(scenes have distinct objects), for attributes it preserves the distinct-attribute
convention.
(4) \emph{Swaps are transpositions.} A general permutation of the pairing rebinds $\ge3$
atoms and, in the pooled class, separates \emph{more} than a transposition
($\approx m/k$ for an $m$-cycle); the transposition is the binding-critical case, and it
is what ARO/SugarCrepe instantiate.
\end{remark}

\begin{definition}[Realization classes]\label{def:realizations}
A scene admits many linguistic and visual realizations. Write $\mathcal C(s)$ for the set
of captions asserting exactly the atoms of $s$---order variants (``a red car and a blue
dog'' / ``a blue dog and a red car''), determiner and phrasing variants, paraphrases---and
$\mathcal I(s)$ for the set of depictions of exactly $s$. Since captions assert their atoms
exactly, the classes $\mathcal C(s)$ are pairwise disjoint across scenes; in particular the
swap caption is \emph{not} an order variant: it lies in the different class
$\mathcal C(\sigma s)$. We fix canonical representatives $c(s)\in\mathcal C(s)$ and
$I_s\in\mathcal I(s)$, and adopt the convention that every axiom instance quantifies over
realizations: $(C_\gamma)$ reads ``for all $I\in\mathcal I(s)$, all
$c^{+}\in\mathcal C(s)$, and all $c'\in\mathcal N_{\mathrm{full}}(s):=\bigcup_{s'\ne s}\mathcal C(s')$'' (the
hard core $\mathcal N_1(s)$ is the union over one-edit scenes only). Every theorem below holds verbatim for
each choice of realizations---each proof manipulates one caption pair and one image at a
time---so statements are written with representatives; the class structure is revisited
where it carries content (Remark~\ref{rem:invariance},
Corollary~\ref{cor:classfrontier}). Image multiplicity, in particular, changes nothing
anywhere: all bounds are per-image, and the constructions place images inside open cones
with room for any number of them.
\end{definition}

\begin{remark}[Required vs.\ forbidden invariance collide on the vocabulary]
\label{rem:invariance}
The similarity job (a deployment demand, formalized in \S\ref{sec:frontier}; not an axiom of Definition~\ref{def:axioms}) requires $\txt$ to (approximately) identify $\mathcal C(s)$---in
particular its order variants; the binding job requires $\txt$ to separate $\mathcal C(s)$
from $\mathcal C(\sigma s)$. At the level of \emph{word multisets} these demands are
indistinguishable: the order variant ``a blue dog and a red car''
$\in\mathcal C(s)$ and the swap ``a blue car and a red dog'' $\in\mathcal C(\sigma s)$
contain the identical words; they differ only in syntactic attachment. Consequently
(a) any word-multiset-invariant (bag-of-words) encoder performs the required conflation
and the forbidden one \emph{simultaneously}---Theorem~\ref{thm:B} is the formal statement;
(b) any encoder that separates swaps must compute something about the \emph{parse}, not
the vocabulary. This is the sharpest form we know of ``binding requires syntax,'' and it
is visible only once realization classes are explicit; Figure~\ref{fig:parses} draws the collision.
\end{remark}

\subsection{Dual encoders and the axioms}

\begin{definition}[Dual encoder]\label{def:encoder}
A \emph{dual encoder} is a pair of injective maps
$\img:\{\text{images}\}\to\Sph$, $\txt:\{\text{captions}\}\to\Sph$.
The \emph{score} of an image--caption pair is $\ip{\img(I)}{\txt(c)}$ (cosine similarity;
this is the deployed scoring rule of CLIP-class models). We first treat $\img,\txt$ as
\emph{placements}---arbitrary assignments of unit vectors, the ``oracle'' reading of
Kang et al.~\cite{kang}---and only in \S\ref{sec:thmD} as maps produced by training.
\end{definition}

\begin{definition}[The axioms]\label{def:axioms}
Fix a margin $\gamma>0$ and a maximal scene size $k_{\max}$. Convention: throughout the
document $k$ denotes the size of the scene at hand ($k=|s|$); the axioms quantify over all
scenes with $k\le k_{\max}$.
\begin{itemize}
\item[$(C_\gamma)$] \textbf{Margin retrieval.} For every scene $s$ with at most $k_{\max}$ objects
and every $c'\in\mathcal N_{\mathrm{full}}(s)$, the full negative family of
Definition~\ref{def:negatives}:
\[
\ip{\img(I_s)}{\txt(c(s))}\;\ge\;\ip{\img(I_s)}{\txt(c')}+\gamma .
\]
\item[$(K_k)$] \textbf{Constituent ranking.} For every $k$-scene $s$, every $x\in V(s)$ and
every $z\in O\setminus V(s)$:
\[
\ip{\img(I_s)}{\txt(x)}\;>\;\ip{\img(I_s)}{\txt(z)},
\]
where $\txt(x)$ denotes the embedding of the single-object caption for $x$ (the caption domain includes such object-only captions; $\txt$ is defined on all of them).
\end{itemize}
\end{definition}

$(C_\gamma)$ says the image of a scene prefers its own caption to every hard negative, with
margin. $(K_k)$ says the image of a scene ranks all objects \emph{present} above all objects
\emph{absent}---the ``does the embedding know what is in the picture'' condition.

\begin{definition}[Binding margin]\label{def:margin}
For a scene $s$ and its swap, the \emph{binding margin} is
\[
M(s)\;:=\;\ip{\img(I_s)}{\,\txt(c(s))-\txt(c(\sigma s))\,}.
\]
The swap instance of $(C_\gamma)$ is precisely $M(s)\ge\gamma$. Benchmark accuracy on a swap
item is $\mathbf 1[M(s)>0]$; $M(s)$ is the quantity every theorem below controls.
Figure~\ref{fig:setup} shows the whole pipeline on the running example.
\end{definition}

\begin{figure}[t]
\centering
\begin{adjustbox}{max width=\linewidth}\begin{tikzpicture}[font=\small,
  box/.style={draw=black!60, rounded corners=2pt, inner sep=4pt, align=center},
  enc/.style={draw=black!70, fill=black!8, rounded corners=3pt, inner sep=6pt, align=center},
  vec/.style={draw=black!60, fill=blue!5, rounded corners=2pt, inner sep=4pt, align=center},
  arr/.style={-{Stealth[length=2.2mm]}, black!70}]
\node[box, minimum width=32mm, minimum height=17mm] (img) at (0,0) {};
\node[font=\LARGE, text=red!75!black]  at (-0.55,0.18) {\faCarIcon};
\node[font=\scriptsize] at (-0.55,-0.32) {red car};
\node[font=\LARGE, text=blue!65!black] at (0.62,0.18) {\faDogIcon};
\node[font=\scriptsize] at (0.62,-0.32) {blue dog};
\node[font=\scriptsize, anchor=north west, inner sep=2pt] at (img.north west) {$I_s$};
\node[box] (cap1) at (0,-1.85)
  {``a \textcolor{red!75!black}{red} car and a \textcolor{blue!65!black}{blue} dog''};
\node[font=\scriptsize, anchor=west] at (2.38,-1.85) {$=c(s)$};
\node[box] (cap2) at (0,-3.05)
  {``a \textcolor{blue!65!black}{blue} car and a \textcolor{red!75!black}{red} dog''};
\node[font=\scriptsize, anchor=west] at (2.38,-3.05) {$=c(\sigma s)$};
\node[enc] (ienc) at (5.1,0)     {image\\encoder $\img$};
\node[enc] (tenc) at (5.1,-2.45) {text\\encoder $\txt$};
\node[vec] (iv)  at (8.3,0)     {$\img(I_s)\in\Sph$};
\node[vec] (tv1) at (8.3,-1.85) {$\txt(c(s))$};
\node[vec] (tv2) at (8.3,-3.05) {$\txt(c(\sigma s))$};
\draw[arr] (img)       -- (ienc);
\draw[arr] (ienc)      -- (iv);
\draw[arr] (cap1.east) -- (tenc.170);
\draw[arr] (cap2.east) -- (tenc.190);
\draw[arr] (tenc.10)   -- (tv1.west);
\draw[arr] (tenc.-10)  -- (tv2.west);
\node[align=center, font=\small] at (4.3,-4.35)
  {binding margin \; $M(s)=\ip{\img(I_s)}{\txt(c(s))-\txt(c(\sigma s))}$;
   \quad the benchmark item is correct $\iff M(s)>0$.};
\end{tikzpicture}\end{adjustbox}
\caption{The dual-encoder binding problem (Definitions~\ref{def:scene}--\ref{def:margin}).
One image of the scene $s$; two captions with the \emph{identical word multiset}, differing
only in which attribute is bound to which object. Both towers embed into the unit sphere,
and the deployed score is a single inner product. Every theorem in this paper is a
statement about the margin $M(s)$.}
\label{fig:setup}
\end{figure}

\begin{figure}[t]
\centering
\begin{adjustbox}{max width=\linewidth}\begin{tikzpicture}[font=\small,
  bag/.style={draw=black!55, fill=black!4, rounded corners=3pt, inner sep=6pt, align=center},
  surf/.style={rounded corners=2pt, inner sep=2pt, align=center, font=\scriptsize},
  neu/.style={black!45},
  arr/.style={-{Stealth[length=2.0mm]}, black!45}]

\node[bag] (bag) at (0,0.15)
  {word multiset\\[2pt]
   $\{$\,\textcolor{red!75!black}{red},\ \textcolor{blue!65!black}{blue},\ car,\ dog\,$\}$\\[4pt]
   {\LARGE\textcolor{red!75!black}{\faCarIcon}\hspace{5pt}\textcolor{blue!65!black}{\faDogIcon}}};

\node[draw=blue!65!black, fill=blue!65!black!6, rounded corners=3pt, inner sep=6pt] (lp) at (-5.0,0.15) {%
  \begin{adjustbox}{max width=\linewidth}\begin{tikzpicture}[font=\normalsize, baseline]
    \node[text=red!75!black]  (lr)   at (0,0.62)   {red};
    \node[text=blue!65!black] (lb)   at (1.6,0.62) {blue};
    \node                     (lcar) at (0,-0.62)  {car};
    \node                     (ldog) at (1.6,-0.62){dog};
    \draw[red!75!black, line width=1.1pt] (lr) -- (lcar);
    \draw[blue!65!black, line width=1.1pt] (lb) -- (ldog);
    \foreach \n in {lr,lb,lcar,ldog}{\fill[black!55] (\n) circle (0.6pt);}
  \end{tikzpicture}\end{adjustbox}};
\node[font=\scriptsize, blue!65!black, anchor=south] at (lp.north) {parse of $c(s)$};

\node[surf] (ls1) at (-5.0,-1.85) {``a \textcolor{red!75!black}{red} car and a \textcolor{blue!65!black}{blue} dog''};
\node[surf] (ls2) at (-5.0,-2.45) {``a \textcolor{blue!65!black}{blue} dog and a \textcolor{red!75!black}{red} car''};
\draw[arr, blue!65!black!65] (ls1.north) -- (lp.-65);
\draw[arr, blue!65!black!65] (ls2.north) .. controls (-4.2,-1.55) .. (lp.-45);
\draw[decorate, decoration={brace, amplitude=4pt}, blue!65!black]
  (-6.75,-1.62) -- (-6.75,-2.68);
\node[font=\scriptsize, blue!65!black, anchor=east, align=right] at (-6.85,-2.15)
  {similarity\\\emph{must merge}\\order variants};
\node[blue!65!black, font=\scriptsize, anchor=north] at (-5.0,-2.98) {class $\mathcal C(s)$};

\node[draw=orange!85!black, fill=orange!85!black!7, rounded corners=3pt, inner sep=6pt] (rp) at (5.0,0.15) {%
  \begin{adjustbox}{max width=\linewidth}\begin{tikzpicture}[font=\normalsize, baseline]
    \node[text=red!75!black]  (rr)   at (0,0.62)   {red};
    \node[text=blue!65!black] (rb)   at (1.6,0.62) {blue};
    \node                     (rcar) at (0,-0.62)  {car};
    \node                     (rdog) at (1.6,-0.62){dog};
    \draw[red!75!black, line width=1.1pt] (rr) -- (rdog);
    \draw[blue!65!black, line width=1.1pt] (rb) -- (rcar);
    \foreach \n in {rr,rb,rcar,rdog}{\fill[black!55] (\n) circle (0.6pt);}
  \end{tikzpicture}\end{adjustbox}};
\node[font=\scriptsize, orange!85!black, anchor=south] at (rp.north) {parse of swap $c(\sigma s)$};

\node[surf] (rs1) at (5.0,-1.85) {``a \textcolor{blue!65!black}{blue} car and a \textcolor{red!75!black}{red} dog''};
\draw[arr, orange!85!black!65] (rs1.north) -- (rp.-65);
\node[orange!85!black, font=\scriptsize, anchor=north] at (5.0,-2.98) {class $\mathcal C(\sigma s)$};

\draw[arr] (bag.west) -- node[above, font=\scriptsize, black!55]{same words} (lp.east);
\draw[arr] (bag.east) -- node[above, font=\scriptsize, black!55]{same words} (rp.west);

\draw[{Stealth[length=2mm]}-{Stealth[length=2mm]}, black!70, line width=0.8pt]
  (lp.south east) .. controls (-2.6,-1.35) and (2.6,-1.35) .. (rp.south west);
\node[font=\scriptsize, black!75, align=center, fill=white, inner sep=1pt] at (0,-1.32)
  {binding \emph{must separate}\\$\mathcal C(s)$ vs.\ $\mathcal C(\sigma s)$};

\node[draw=black!45, fill=black!5, rounded corners=3pt, inner sep=5pt,
      font=\scriptsize, align=center] (thm) at (0,-4.3)
  {both demands share the \emph{same bag} $\Rightarrow$ a word-multiset (bag-of-words) map cannot tell them apart\\
   \big(Theorem~\ref{thm:B}: swap margin $\equiv 0$\big)\ \ ---\ \ separating swaps requires reading the \textbf{parse}, not the words};

\end{tikzpicture}\end{adjustbox}
\caption{Same vocabulary, two parses (Remark~\ref{rem:invariance}). An order variant of
$c(s)$ and the swap $c(\sigma s)$ share the identical word multiset; they differ only in the
binding (which colour attaches to which object). The invariance similarity \emph{requires}
---merge the order variants of $\mathcal C(s)$ (left)---and the invariance binding
\emph{forbids}---separate the swap class $\mathcal C(\sigma s)$ (right)---are
indistinguishable at the vocabulary level. A bag-of-words map performs both at once
(Theorem~\ref{thm:B}); separating swaps requires reading the parse.}
\label{fig:parses}
\end{figure}

\begin{remark}[Relation to Kang et al.]\label{rem:kang}
Kang et al.~\cite{kang} formulate ideal-encoder conditions closely related to
Definition~\ref{def:axioms} and derive from them an impossibility claim for binding. Our
formulation differs in three presentational respects, discussed further in Paper~1
(\S3): every condition is quantified explicitly over scenes and realization classes; the
condition list is organized around the single margin family $(C_\gamma)$, since a margin
subsumes the various distinctness conditions in one clause; and a uniform convention for
strict versus weak inequalities is fixed, which matters at the collapse point. We retain
their Condition~1.1 as $(K_k)$, which carries the interesting geometry. Every
theorem below is proved from Definition~\ref{def:axioms} directly, so the development is
self-contained.
\end{remark}

\section{Possibility: the axioms have solutions}\label{sec:possibility}

Before proving impossibility theorems one must know what is possible; otherwise an
``impossibility'' may simply be hiding in an inconsistent axiom system. The content of this
section is a dichotomy for $(K_k)$, which turns out to be \emph{published}---twice,
independently---so we state it as a cited proposition, prove the short direction (the
impossibility) in full because it is used later, and outline the construction.
The section's purpose in the larger argument is its Corollary~\ref{cor:satisfiable}:
\emph{the full axiom system, including all swap conditions, is satisfiable with a positive
margin}. Hence every impossibility theorem in this paper must---and does---enter through
an explicitly added hypothesis.

\subsection{Convex-geometry vocabulary}

\begin{definition}\label{def:convex}
Let $X\subset\R^N$ be finite. $\conv(X)$ is the convex hull. A point $x\in X$ is
\emph{extreme} if $x\notin\conv(X\setminus\{x\})$. A subset $T\subseteq X$ is
\emph{(strictly linearly) separable} from $X\setminus T$ if there is $w\in\R^N$ and
$\theta\in\R$ with $\ip{w}{x}>\theta$ for $x\in T$ and $\ip{w}{x}<\theta$ for
$x\in X\setminus T$. A \emph{face} of the polytope $P=\conv(X)$ is the intersection of $P$
with a supporting hyperplane. $P$ is \emph{$k$-neighborly} if every subset of at most $k$
vertices spans a face. The \emph{trigonometric moment curve} in $\R^{2m}$ is
$\gamma(\theta)=(\cos\theta,\sin\theta,\cos 2\theta,\sin 2\theta,\dots,\cos m\theta,\sin m\theta)$;
it lies on the sphere of radius $\sqrt m$, and the convex hull of any finite set of distinct
points on it is a \emph{cyclic polytope}, which is $m$-neighborly \cite{gale}.
\end{definition}

One elementary fact we use twice:

\begin{lemma}[Sphere points are extreme]\label{lem:extreme}
Distinct points on $\Sph$ are extreme points of their convex hull.
\end{lemma}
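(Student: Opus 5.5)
The claim is that distinct points on $\Sph$ are extreme points of their convex hull. The plan is to exhibit a separating linear functional: for each point $x$, its own direction serves as a witness. Let $X\subset\Sph$ be finite with distinct elements and fix $x\in X$.

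Take the functional $y\mapsto\ip{x}{y}$. For any $y\in X\setminus\{x\}$, Cauchy--Schwarz gives $\ip{x}{y}\le\norm{x}\norm{y}=1$. Equality in Cauchy--Schwarz between unit vectors forces $y=x$, so $\ip{x}{y}<1$ strictly. Since $X$ is finite, set $\theta:=\max_{y\in X\setminus\{x\}}\ip{x}{y}$. Then $\theta<1=\ip{x}{x}$.

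Now suppose, for contradiction, that $x\in\conv(X\setminus\{x\})$. Write $x=\sum_i\lambda_i y_i$ with $\lambda_i\ge0$ and $\sum_i\lambda_i=1$. Linearity gives
\[
1=\ip{x}{x}=\sum_i\lambda_i\ip{x}{y_i}\le\theta<1,
\]
which is a contradiction. Hence $x$ is extreme. The same computation shows that the hyperplane $\{\ip{x}{\cdot}=(1+\theta)/2\}$ strictly separates $\{x\}$ from $X\setminus\{x\}$ in the sense of Definition~\ref{def:convex}. This separation form is the one likely to be reused later, so I would state it alongside the extremality.

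There is no real obstacle in this argument. The only point requiring care is the strictness of the Cauchy--Schwarz inequality, which is where the hypotheses that the points are distinct and lie on the unit sphere are used. If $X$ is allowed to be infinite, the maximum should be replaced by a supremum, and then one argues directly with the strict inequality for each $y_i$ in the finite convex combination.
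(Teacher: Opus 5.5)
Your proof is correct, but it takes a slightly different route from the paper's.

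The paper argues through the norm. If $x=\sum_i\lambda_i y_i$ with $y_i\in\Sph$, $\lambda_i>0$ and $\sum_i\lambda_i=1$, then $1=\norm{x}\le\sum_i\lambda_i\norm{y_i}=1$. The equality case of the triangle inequality then forces all $y_i$ to be positive multiples of one another, hence all equal to $x$. You instead test the combination against the point's own direction, $1=\ip{x}{x}=\sum_i\lambda_i\ip{x}{y_i}<1$, using the equality case of Cauchy--Schwarz for unit vectors. Both are one-line expressions of the strict convexity of the Euclidean ball.

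The paper's version needs no functional, no maximum and no finiteness. It yields exactly extremality, which is all that the Radon argument of Proposition~\ref{prop:dichotomy}(i) consumes: a Radon side cannot be a singleton.

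Your version yields more. Each point is \emph{exposed}, with the explicit strict separator $\ip{x}{\cdot}=(1+\theta)/2$ in the sense of Definition~\ref{def:convex}. That is exactly the fact used without comment in the sketch of Proposition~\ref{prop:dichotomy}(ii): for four points on a circle every $3$-subset is separable ``since its complement is a single extreme point.'' From bare extremality, that step needs the extra polytope fact that vertices are exposed.

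Your fallback for infinite $X$, applying the strict inequality termwise inside the finite combination, is sound. It is also the cleanest form of your argument for extremality alone, since it makes $\theta$ unnecessary there (and sidesteps the empty maximum when $X=\{x\}$). You need $\theta$ only for the separation statement.
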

\begin{proof}
If $x=\sum_i\lambda_i y_i$ with $y_i\in\Sph$, $\lambda_i>0$, $\sum\lambda_i=1$, then
$1=\norm{x}\le\sum_i\lambda_i\norm{y_i}=1$, with equality in the triangle inequality only
if all $y_i$ are positive multiples of one another, hence (being unit) all equal to $x$.
\end{proof}

\subsection{The dimension dichotomy}

Observe that $(K_k)$ constrains only the \emph{single-object} embeddings
$\{\txt(x):x\in O\}$: for the scene $s$, the linear functional $y\mapsto\ip{\img(I_s)}{y}$
must strictly separate $\{\txt(x):x\in V(s)\}$ from the embeddings of all absent objects.
So $(K_j)$ for all scenes of size $j$ requires: \emph{every $j$-subset of the $M$ points
$\{\txt(x)\}\subset\Sph$ is strictly separable from its complement.}
Figure~\ref{fig:separation} shows the demand on one scene; the dichotomy below asks for
\emph{which} $(N,k)$ it can hold for \emph{all} scenes at once.

\begin{figure}[t]
\centering
\begin{tikzpicture}[scale=1.5, font=\small,
  arr/.style={-{Stealth[length=2.2mm]}, black!75}]
\coordinate (O) at (0,0);
\begin{scope}
  \clip (O) circle (1.6);
  \fill[teal!8] (-1.75,1.0) rectangle (1.75,1.72);
\end{scope}
\draw[black!45] (O) circle (1.6);
\draw[dashed, black!70] (-1.72,1.0) -- (1.72,1.0);
\node[font=\scriptsize, black!70, anchor=west] at (1.63,1.0) {$\theta$};
\node[font=\scriptsize, black!60, anchor=south west] at (-1.56,1.03) {hyperplane};
\draw[arr] (O) -- (0,1.92);
\node[font=\scriptsize, anchor=west] at (0.06,1.86) {$\img(I_s)$};
\foreach \a in {60,88,120} \fill[teal!55!black] (\a:1.6) circle (1.6pt);
\node[teal!50!black, font=\scriptsize] at (122:2.14) {present $V(s)$};
\node[teal!45!black, font=\scriptsize, anchor=west] at (1.12,1.20) {$>\theta$};
\foreach \a in {18,162,232,270,312} \fill[black!55] (\a:1.6) circle (1.6pt);
\node[black!55, font=\scriptsize] at (300:2.02) {absent $O\setminus V(s)$};
\node[black!55, font=\scriptsize, anchor=west] at (1.12,0.55) {$<\theta$};
\end{tikzpicture}
\caption{What $(K_k)$ demands, for one scene (the reduction opening this subsection).
The single-object embeddings $\{\txt(x)\}$ lie on the sphere; the scene's image
vector $\img(I_s)$ acts as the linear functional $y\mapsto\ip{\img(I_s)}{y}$---``height
along the arrow.'' Condition $(K_k)$ requires this functional to rank every
\emph{present} object $x\in V(s)$ above every \emph{absent} one, i.e.\ the dashed
hyperplane $\ip{\img(I_s)}{\cdot}=\theta$ strictly separates $\{\txt(x):x\in V(s)\}$ (teal,
in the shaded accept-region) from its complement (grey). So ``$(K_j)$ for all $j$-scenes''
is exactly ``every $j$-subset of the $M$ points is linearly separable''---a demand
Radon's theorem forbids once $j>\lfloor N/2\rfloor$ (Figure~\ref{fig:radon} is the same
picture when separation \emph{fails}).}
\label{fig:separation}
\end{figure}

\begin{proposition}[Dimension dichotomy]\label{prop:dichotomy}
Let $M\ge N+2$ and consider the axioms $(K_j)$ for all scene sizes $j\le k$ (the
\emph{cumulative} reading: a system competent on $k$-scenes is competent on smaller ones).
\begin{itemize}
\item[(i)] If $k>\lfloor N/2\rfloor$, no placement of $\{\txt(x)\}_{x\in O}$ on $\Sph$
satisfies them.
\item[(ii)] If $k\le\lfloor N/2\rfloor$, placements exist satisfying $(K_j)$ for all
$j\le k$ \emph{and} $(C_\gamma)$ for all scenes with at most $k$ objects, for some
$\gamma>0$.
\end{itemize}
\end{proposition}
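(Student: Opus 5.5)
We prove (i) by Radon's theorem and (ii) by placing the objects on the trigonometric moment curve and exploiting neighborliness.

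\emph{Part (i).} Suppose $k>\lfloor N/2\rfloor$, so $k\ge\lfloor N/2\rfloor+1$. Since $M\ge N+2$, pick any $N+2$ of the points $\{\txt(x)\}\subset\Sph$. By Radon's theorem they split into two disjoint parts $T_1,T_2$ whose convex hulls intersect. We may arrange $|T_1|,|T_2|\le k$: a Radon partition of $N+2$ points has one part of size at most $\lfloor (N+2)/2\rfloor\le k$. If the other part is larger than $k$, we shrink the partition to a minimal Radon subset. Any $N+2$ points in $\R^N$, even degenerate ones, contain a Radon subset; a minimal one has at most $N+2$ points, and one checks that both of its parts can be taken of size at most $\lfloor N/2\rfloor+1\le k$.

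Take the part $T$ that serves as a scene of size $j\le k$; distinct objects are automatic. Then $(K_j)$ requires a functional $y\mapsto\ip{\img(I_s)}{y}$ strictly separating $T$ from its complement, which contains the other part $T'$. But any $p\in\conv(T)\cap\conv(T')$ would then satisfy both $\ip{w}{p}>\theta$ and $\ip{w}{p}<\theta$, a contradiction. The cumulative reading ($j\le k$) is exactly what lets us use the smaller Radon part even when its size differs from $k$.

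\emph{Part (ii).} Let $m=\lfloor N/2\rfloor\ge k$. Place the $M$ objects at distinct points of the trigonometric moment curve in $\R^{2m}\subseteq\R^N$, rescaled by $1/\sqrt m$ so that they lie on $\Sph$. Their hull is a cyclic polytope, hence $m$-neighborly (Definition~\ref{def:convex}). By Lemma~\ref{lem:extreme} every point is a vertex. So every $j$-subset with $j\le k$ spans a face: there is a supporting hyperplane $\ip{w}{\cdot}=\theta$ with equality exactly on the face and strict inequality elsewhere.

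Adding a tiny tilt of $w$ toward the face's points turns this into a strict separation. Moreover, because every point lies on $\Sph$, the separation can be written with $w$ a unit vector. Setting $\img(I_s):=w/\norm{w}$ then gives $(K_j)$ for every scene of size $j$.

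It remains to satisfy $(C_\gamma)$ without disturbing $(K)$. The idea is to embed the scene-level captions in extra room, but $N$ is fixed and $2m$ may equal $N$. So instead we use the freedom that $(K)$ constrains only single-object captions, together with the fact that each image vector lies in an open cone of valid separators: the strict inequalities survive small perturbations.

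The construction takes finitely many scenes (at most $k_{\max}=k$ objects). For each, choose $\img(I_s)$ in the interior of its open cone of separators, with all these vectors distinct, and set $\txt(c(s)):=\img(I_s)$ for every realization $c\in\mathcal C(s)$. Then $\ip{\img(I_s)}{\txt(c(s))}=1$. Every caption $c'$ of another scene $s'$ maps to the distinct unit vector $\img(I_{s'})$, so its score is strictly less than $1$. Taking
\[
\gamma:=1-\max_{s\ne s'}\ip{\img(I_s)}{\img(I_{s'})}>0,
\]
a minimum over finitely many pairs, yields $(C_\gamma)$.

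Injectivity of $\txt$ across realizations within one class needs a further tiny perturbation, which costs a small amount of $\gamma$. Single-object captions are also scenes (with $j=1$), so for them we keep their curve placement, which is consistent with the rest of the construction.

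\emph{Main obstacle.} The delicate step is the partition-size bookkeeping in (i): ensuring the Radon parts fit within the scene sizes allowed by the cumulative hypothesis, handling points in degenerate position and the parity of $N$. If the minimal-Radon-subset argument is awkward, the fallback is to appeal to the published dichotomy \cite{msp,med} directly.
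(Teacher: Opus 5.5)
Your argument is the paper's proof. In (i) you apply Radon's theorem to $N+2$ of the embedded points and take the smaller part $A$, with $|A|\le\lfloor N/2\rfloor+1\le k$, as a scene that cannot be separated from a complement containing the other part. In (ii) you use the trigonometric moment curve in $\R^{2m}\subseteq\R^N$, open cones of separating witnesses, and $\txt(c(s))=\img(I_s)$ chosen generically inside them, with $\gamma>0$ by finiteness.

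Delete the minimal-Radon-subset claim: it is false, and your argument never uses it. The poles $\pm e_4$ of $\mathbb S^3$, together with a regular tetrahedron inscribed in the equatorial $\mathbb S^2$, form a minimal Radon set whose only partition has parts of sizes $2$ and $4>\lfloor 4/2\rfloor+1$. Only the smaller part ever needs to be a scene, so the ``main obstacle'' you flag is not there. Radon's theorem also needs no general position, and $\lfloor (N+2)/2\rfloor=\lfloor N/2\rfloor+1$ for either parity of $N$.

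Likewise, the attribute-free caption of $x$ is not the caption of a one-atom scene. The $K$ scenes $\{(x,a)\}$ need $K$ distinct generic vectors in the witness cone of $\{x\}$. Only $\txt(x)$ itself stays on the curve; placing the scene captions there would break injectivity and give zero margin between $\{(x,a)\}$ and $\{(x,a')\}$.
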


\noindent\emph{Attribution.} This dichotomy is published twice: as the \emph{convex
dimension of complete $k$-uniform hypergraphs} (Mart\'inez-Sandoval and Padrol
\cite{msp}: $\mathrm{cd}(K_n^{(k)})=2k$ for $n\ge 2k+2$, with the same Radon lower bound
and neighborly upper bound), and as the \emph{minimal embeddable dimension} for top-$k$
retrieval (\cite{med}: $\mathrm{MED}(m,k)=\min\{2k,m-1\}$ for linear scoring, their Prop.~A.2; $\Theta(k)$ for inner-product/Euclidean/cosine in their main text). Neither literature cites the
other; the bibliographic bridge appears to be new, the mathematics is not. We claim nothing
here beyond the use we make of it.

\begin{proof}[Proof of (i)]
Radon's theorem states: any $N+2$ points in $\R^N$ admit a partition into two nonempty sets
$A,B$ with $\conv(A)\cap\conv(B)\ne\emptyset$. Apply it to any $N+2$ of the $M$ embedding
points. By Lemma~\ref{lem:extreme}, no single point lies in the hull of the others, so both
Radon sides have at least $2$ points. The smaller side $A$ has
$|A|\le\lfloor (N+2)/2\rfloor=\lfloor N/2\rfloor+1\le k$.
Now consider any scene $s$ with $V(s)=A$ (such a scene exists since $2\le|A|\le k$). Strict
separation of $\{\txt(x):x\in A\}$ from the rest is impossible: any functional constant-sign
on $A$ and opposite-sign on its complement would separate $\conv(A)$ from
$\conv(\text{rest})\supseteq\conv(B)$, but these intersect. So $(K_{|A|})$ fails.
Figure~\ref{fig:radon} shows the smallest case.
\end{proof}

\begin{figure}[t]
\centering
\begin{tikzpicture}[scale=1.35, font=\small,
  arr/.style={-{Stealth[length=2mm]}, black!70}]
\draw[black!45] (0,0) circle (1.5);
\coordinate (p1) at (45:1.5);
\coordinate (p2) at (135:1.5);
\coordinate (p3) at (225:1.5);
\coordinate (p4) at (315:1.5);
\draw[blue!65!black, thick]   (p1) -- (p3);
\draw[orange!85!black, thick] (p2) -- (p4);
\fill[blue!65!black]   (p1) circle (1.7pt) node[above right, font=\scriptsize] {$\txt(x_1)$};
\fill[orange!85!black] (p2) circle (1.7pt) node[above left,  font=\scriptsize] {$\txt(x_2)$};
\fill[blue!65!black]   (p3) circle (1.7pt) node[below left,  font=\scriptsize] {$\txt(x_3)$};
\fill[orange!85!black] (p4) circle (1.7pt) node[below right, font=\scriptsize] {$\txt(x_4)$};
\fill[black] (0,0) circle (1.4pt);
\node[font=\scriptsize, anchor=west] at (0.12,-0.18) {$\conv(A)\cap\conv(B)$};
\node[blue!65!black,   font=\scriptsize] at (0,-1.95) {$A=\{x_1,x_3\}$};
\node[orange!85!black, font=\scriptsize] at (0,-2.30) {$B=\{x_2,x_4\}$};
\end{tikzpicture}
\caption{The Radon impossibility (Proposition~\ref{prop:dichotomy}(i)) in its smallest
instance: $N=2$, $M=4=N+2$ object embeddings on the circle $\mathbb S^1$. Radon's theorem
partitions them into $A$ and $B$ whose hulls (the two chords) intersect; no linear
functional is therefore constant-sign on $\{\txt(x_1),\txt(x_3)\}$ and opposite on the
rest, so no image vector can rank the $2$-scene $\{x_1,x_3\}$ above its complement:
$(K_2)$ fails. In general the smaller Radon side has at most
$\lfloor N/2\rfloor+1$ points, giving the threshold $k\le\lfloor N/2\rfloor$.}
\label{fig:radon}
\end{figure}

\begin{proof}[Proof sketch of (ii)]
Two steps. \emph{Step 1 (single-object layer).} Cumulative separability of all $\le k$-subsets
is equivalent to $k$-neighborliness of the hull (for point sets in the regime $M\ge 2k+2$,
which holds here since $M\ge N+2\ge 2k+2$; the equivalence genuinely requires the cumulative
quantifier: for $4$ points on a circle \emph{every} $3$-subset is separable, since its
complement is a single extreme point, yet \emph{none} is a face; cumulative separability
already fails at the $2$-subsets, the Radon diagonals). Place the $M$ object embeddings on the trigonometric moment
curve: for even $N$ its cyclic polytopes are $\lfloor N/2\rfloor$-neighborly and inscribed;
for odd $N$ use the construction in an equatorial $\mathbb S^{N-2}$, which loses nothing
because $\lfloor(N-1)/2\rfloor=\lfloor N/2\rfloor$ for odd $N$. Every $\le k$-subset now
spans a face, whose supporting functional can be perturbed to a strict separator with an
open cone of witnesses.
\emph{Step 2 (composite layer).} For each scene $s$ place $\txt(c(s))$ at a generic point of
the (open, nonempty) witness cone of $V(s)$, and set $\img(I_s):=\txt(c(s))$ (cross-modal
coincidence is permitted---injectivity is per-map; if undesired, perturb $\img(I_s)$ within
the cone). Then $(K_j)$ holds because $\img(I_s)$ is a separating witness for $V(s)$, and
each $(C_\gamma)$ instance holds because $\ip{\img(I_s)}{\txt(c(s))}=1$ strictly exceeds the
inner product with any \emph{distinct} unit vector, including the swap caption; positivity
of a uniform $\gamma$ follows from finiteness.
\end{proof}

\begin{corollary}[No free-placement obstruction to binding]\label{cor:satisfiable}
For $k\le\lfloor N/2\rfloor$ (e.g.\ any realistic caption length at $N=512$), there exist
placements satisfying every axiom of Definition~\ref{def:axioms} with $\gamma>0$. In
particular the swap conditions---binding---are satisfiable. Any binding impossibility
theorem on this setup must therefore introduce an additional hypothesis, and is exactly as
strong as that hypothesis is natural.
\end{corollary}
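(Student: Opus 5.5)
The corollary follows by combining the two directions of Proposition~\ref{prop:dichotomy}(ii) with the definitions; the plan is to check, axiom by axiom, that the placement built in the proof sketch of (ii) meets every clause of Definition~\ref{def:axioms}.

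\emph{Step 1.} Fix $k_{\max}=k\le\lfloor N/2\rfloor$ and assume $M\ge N+2$. If $M$ is smaller, embed the object set into a larger concept system or use the neighborly construction directly; for $M\le 2k+1$ the cyclic polytope on $M$ points is still $k$-neighborly whenever $M$ points fit on the moment curve. Place the single-object embeddings on the trigonometric moment curve, normalized to $\Sph$ (or to an equatorial $\mathbb S^{N-2}$ for odd $N$). By $\lfloor N/2\rfloor$-neighborliness \cite{gale}, every $V(s)$ with $|V(s)|\le k$ spans a face. Perturbing the supporting functional gives a nonempty open cone $W_s$ of strict separators.

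\emph{Step 2.} For each scene $s$, choose $\txt(c(s))\in W_s\cap\Sph$. Make the choices pairwise distinct across scenes, including $s$ versus $\sigma s$, which share a cone $W_s=W_{\sigma s}$ because $V(s)=V(\sigma s)$. Distinctness is possible because the cones are open and there are finitely many scenes. It gives injectivity of $\txt$ on canonical captions. Set $\img(I_s):=\txt(c(s))$, perturbing within $W_s$ if cross-modal injectivity is wanted.

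\emph{Step 3.} Verify the axioms. $(K_j)$ holds because $\img(I_s)\in W_s$ strictly separates present from absent objects. For $(C_\gamma)$, every $c'\in\mathcal N_{\mathrm{full}}(s)$ maps to a unit vector different from $\txt(c(s))$, so $\ip{\img(I_s)}{\txt(c')}<1=\ip{\img(I_s)}{\txt(c(s))}$. Finitely many strict gaps give a uniform $\gamma>0$. In particular the swap instance $M(s)\ge\gamma>0$ holds for every $s$, which is the binding clause.

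\emph{Step 4.} The remaining sentence of the corollary, that any impossibility needs an added hypothesis, is then a logical consequence. A theorem deriving $M(s)\le0$ from Definition~\ref{def:axioms} alone would contradict the exhibited placement.

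The main obstacle is the realization-class convention of Definition~\ref{def:realizations}. Here $(C_\gamma)$ quantifies over all $I\in\mathcal I(s)$, all $c^+\in\mathcal C(s)$, and all $c'$ in the other classes. I would handle this by mapping every realization in $\mathcal C(s)$ to a small cluster inside $W_s$ of radius $r\ll$ the minimum pairwise gap among the scene vectors, and likewise every image in $\mathcal I(s)$. A triangle-inequality argument then keeps a margin of at least $\gamma-4r>0$. A second, minor point is the $M<N+2$ edge case, which Proposition~\ref{prop:dichotomy} excludes and which I would dispose of by the embedding remark in Step~1.
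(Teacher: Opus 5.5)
Your proposal is correct and is essentially the paper's argument: the paper reads the corollary off from Proposition~\ref{prop:dichotomy}(ii). That proposition's proof sketch is exactly your construction: a moment-curve neighborly layer; each $\txt(c(s))$ at a generic point of the open witness cone of $V(s)$ with $\img(I_s):=\txt(c(s))$; and $\ip{\img(I_s)}{\txt(c')}<1$ plus finiteness for a uniform $\gamma$. Your explicit handling of $s$ versus $\sigma s$ sharing a cone, and of realization classes via small clusters, spells out what the paper leaves to ``generic'' choice and to the closing remark of Definition~\ref{def:realizations}; the $M<N+2$ case needs no embedding trick, since distinct points on the trigonometric moment curve span an $\lfloor N/2\rfloor$-neighborly polytope for any number of points.
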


This corollary is central to the whole development. Where Kang et al.\ emphasize
impossibility, the picture here is complementary: ideal CLIP is possible under the
axioms alone, and impossibility appears exactly when one adds a structural,
statistical, or semantic hypothesis. The next three sections add those hypotheses one
at a time.

\section{Obstruction I: pairing-blind encoders}\label{sec:thmB}

\begin{definition}[Pairing-blind maps]
A caption map $\txt$ is \emph{pairing-blind} (or \emph{additive}) if $\txt(c)$ depends only
on the \emph{symbol multiset} of the described scene (Definition~\ref{def:scene})---
equivalently, on the word multiset of $c$---not on which attribute is bound to which
object. The canonical example: any encoder whose caption representation is a sum or mean of
per-\emph{word} vectors (bag-of-words pooling). A sum of per-\emph{atom} codes is
\emph{not} pairing-blind---atoms are bound pairs---and \S\ref{sec:thmC} is built on exactly
that distinction.
\end{definition}

\begin{theorem}[Exact structural impossibility]\label{thm:B}
If $\txt$ is pairing-blind then for every scene $s$, $\txt(c(\sigma s))=\txt(c(s))$, hence
$M(s)=0$ identically: the swap instances of $(C_\gamma)$ fail for every $\gamma>0$ and every
image map, and swap benchmark accuracy is chance in expectation under any
label-independent tie-breaking. For a pairing-blind \emph{image} map the conclusion holds
in pair form: $M(s)+M(\sigma s)=0$, so the swap instances for $s$ and $\sigma s$ cannot
both hold (though a single one may).
\end{theorem}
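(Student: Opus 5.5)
The proof is a direct unwinding of the definitions, in three short steps plus the image-side variant.

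\emph{Step 1 (text side).} By Definition~\ref{def:negatives}, a scene $s$ and its swap $\sigma s$ have the same symbol multiset: the swap only exchanges which attribute is paired with which object. A pairing-blind $\txt$ depends on that multiset alone, so $\txt(c(\sigma s))=\txt(c(s))$. Substituting into Definition~\ref{def:margin} gives $M(s)=\ip{\img(I_s)}{0}=0$ for every image map and every scene $s$.

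\emph{Step 2 (axiom failure).} The swap instance of $(C_\gamma)$ is $M(s)\ge\gamma$. Since $M(s)=0<\gamma$, it fails for every $\gamma>0$. Note that $c(\sigma s)\in\mathcal N_{\mathrm{full}}(s)$ because $\sigma s\ne s$, which follows from the distinct-attribute convention; so the swap instance is genuinely one of the instances the axiom quantifies over.

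\emph{Step 3 (benchmark accuracy).} A swap item's two candidate captions receive identical scores, so the decision is made entirely by the tie-breaking rule. Call a rule label-independent if it chooses between the two captions with a distribution that does not depend on which one is correct. Averaging over the label (equivalently, over the symmetric presentation of the pair), the chosen caption is correct with probability $1/2$. Hence the expected accuracy is exactly chance. The main point to state carefully is what ``label-independent'' means. I would formalize it as: the rule's choice depends only on the unordered pair of captions and the scores, plus possibly independent randomness, and the correct position is uniform or unknown to the rule.

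\emph{Image-side pair form.} Here I would assume that $\img(I_s)$ depends only on the symbol multiset of $s$. Since $s$ and $\sigma s$ share that multiset, $\img(I_{\sigma s})=\img(I_s)=:u$. Swapping twice returns the original pairing, $\sigma(\sigma s)=s$, so
\[
M(\sigma s)=\ip{u}{\txt(c(\sigma s))-\txt(c(s))}=-M(s).
\]
Thus $M(s)+M(\sigma s)=0$, and $M(s)\ge\gamma>0$ and $M(\sigma s)\ge\gamma$ cannot both hold. A single one may hold: choose any $\txt$ with $\txt(c(s))\ne\txt(c(\sigma s))$ and take $u$ along their difference, normalized.

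There is one subtlety. This pair form evaluates $I_{\sigma s}$, whereas Remark~\ref{rem:notation} notes that the axioms never mention it. It is legitimate here only because the swap instance of $(C_\gamma)$ at the scene $\sigma s$ is itself an axiom instance, and that instance does involve $I_{\sigma s}$. I expect this bookkeeping point, rather than any computation, to be the only real obstacle.
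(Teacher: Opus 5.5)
Your proposal is correct and follows the paper's proof. The shared symbol multiset forces $\txt(c(\sigma s))=\txt(c(s))$ and hence $M(s)=0$, and on the image side $\img(I_s)=\img(I_{\sigma s})$ makes the two margins cancel; your additions (the tie-breaking formalization, the check that $c(\sigma s)\in\mathcal N_{\mathrm{full}}(s)$, the single-instance example, and the $I_{\sigma s}$ bookkeeping) fill in points the paper leaves implicit.
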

\begin{proof}
$c(s)$ and $c(\sigma s)$ share the same symbol multiset (Definition~\ref{def:scene}), so
a pairing-blind $\txt$ maps them to the same point; the margin
$M(s)=\ip{\img(I_s)}{\txt(c(s))-\txt(c(\sigma s))}=\ip{\img(I_s)}{0}=0$. For a
pairing-blind image map, $\img(I_s)=\img(I_{\sigma s})$, and $M(s)+M(\sigma s)=
\ip{\img(I_s)}{\txt(c(s))-\txt(c(\sigma s))}+\ip{\img(I_s)}{\txt(c(\sigma s))-\txt(c(s))}=0$.
\end{proof}

Three lines, drawn in Figure~\ref{fig:collapse}---and yet this is, we contend, the true core of the ``CLIP cannot bind''
phenomenon in the literature. Everything else in derivations of that flavor (sphere
geometry, superposition lemmas, argmax selections) is scaffolding around the observation
that \emph{a multiset function cannot distinguish rearrangements}. The theorem also
calibrates what an impossibility result must look like: it holds with no dimension
assumption, no training assumption, and an \emph{exact} conclusion---because the hypothesis
(additivity) is maximally strong. The next section weakens the hypothesis to realistic
compositional codes and watches the exact impossibility become a quantitative margin theory.

\begin{figure}[t]
\centering
\begin{tikzpicture}[font=\small,
  box/.style={draw=black!60, rounded corners=2pt, inner sep=4pt, align=center},
  enc/.style={draw=black!70, fill=black!8, rounded corners=3pt, inner sep=6pt, align=center},
  arr/.style={-{Stealth[length=2.2mm]}, black!70}]
\node[box] (cap1) at (0,0.82)
  {``a \textcolor{red!75!black}{red} car and a \textcolor{blue!65!black}{blue} dog''};
\node[font=\scriptsize, anchor=west, inner sep=2pt] at (cap1.east) {$\,=c(s)$};
\node[box] (cap2) at (0,-0.82)
  {``a \textcolor{blue!65!black}{blue} car and a \textcolor{red!75!black}{red} dog''};
\node[font=\scriptsize, anchor=west, inner sep=2pt] at (cap2.east) {$\,=c(\sigma s)$};
\node[enc] (tenc) at (5.75,0) {bag-of-words /\\ additive $\txt$};
\coordinate (sph) at (9.5,0);
\coordinate (tdot) at ($(sph)+(150:0.82)$);   
\coordinate (itip) at ($(sph)+(-32:1.02)$);    
\draw[black!45] (sph) circle (0.82);
\node[font=\scriptsize, black!55, anchor=north] at ($(sph)+(0.35,-0.86)$) {$\Sph$};
\draw[arr] (sph) -- (itip);
\node[font=\scriptsize, anchor=west, inner sep=1.5pt] at (itip) {$\img(I_s)$};
\fill[teal!55!black] (tdot) circle (2.1pt);
\node[teal!45!black, font=\scriptsize, anchor=south, align=center]
  at ($(tdot)+(0.05,0.92)$) {$\txt(c(s))=\txt(c(\sigma s))$\\[-1pt]{\color{black!55}(one point)}};
\draw[black!45, thin] ($(tdot)+(0.05,0.88)$) -- (tdot);
\draw[arr] (cap1.east) .. controls +(0.7,-0.25) and +(-0.7,0.3) .. (tenc.160);
\draw[arr] (cap2.east) .. controls +(0.7,0.25) and +(-0.7,-0.3) .. (tenc.200);
\draw[arr] (tenc.32)  .. controls +(0.85,0.45) and +(-0.85,0.18) .. (tdot);
\draw[arr] (tenc.-18) .. controls +(1.05,0.0)  and +(-0.7,-0.5) .. (tdot);
\node[align=center, font=\small] at (4.55,-2.15)
  {$M(s)=\ip{\img(I_s)}{\txt(c(s))-\txt(c(\sigma s))}=\ip{\img(I_s)}{\mathbf 0}=0$};
\node[align=center, font=\scriptsize, text=black!65] at (4.55,-2.62)
  {same word multiset $\Rightarrow$ same point $\Rightarrow$ zero margin, for \emph{every} image encoder};
\end{tikzpicture}
\caption{Theorem~\ref{thm:B} (bag-of-words collapse). A pairing-blind text map sends both pairings
to the \emph{same} point---they share a word multiset---so the binding margin is exactly zero
for every image encoder, dimension, and training. Contrast Figure~\ref{fig:setup}, where the
two captions are distinct points; the whole ``CLIP cannot bind'' phenomenon is this one
observation that a multiset function cannot distinguish rearrangements.}
\label{fig:collapse}
\end{figure}

\section{Obstruction II: pooled codes and leakage}\label{sec:thmC}

Real caption towers are neither free placements nor exactly additive: they compose
\emph{per-atom codes} into a single vector.

\subsection{The homomorphism form: what ``compositional'' means here}\label{sec:homomorphism}

The classical formalization of compositionality (Montague's) is a homomorphism condition:
\[
(H)\qquad \txt(\text{whole}) \;=\; \Phi\bigl(\{\txt(\text{part}_i)\}_i\bigr)
\]
for a fixed combination operator $\Phi$ (reserving $g$ for the gain of Definition~\ref{def:leakage}): the representation of an expression is a function of
the representations of its parts. Two calibrations before we use it. First, $(H)$ has
content only relative to a decomposition into parts that is \emph{fixed in advance}: with
$\Phi$ and the parts both free, every map is trivially compositional \cite{zadrozny}; and
real transformer encoders are contextual (the vector of ``car'' inside a caption is not a
fixed $\txt(\text{car})$), so $(H)$ is an idealization---precisely the idealization the
pooled codes below make exact. Second, the choice of parts is the whole game:
\begin{itemize}
\item \textbf{Parts $=$ words}, $\Phi$ symmetric. The scene and its swap present the
identical word multiset, so $(H)$ forces $\txt(c(s))=\txt(c(\sigma s))$: this \emph{is}
pairing-blindness, and Theorem~\ref{thm:B} is the one-line corollary.
\item \textbf{Parts $=$ bound constituents} (the parse's phrases), via an inner
$h(\txt(\text{attribute}),\txt(\text{object}))$ and an outer symmetric $\Phi$ (notation of the definitions below). The swap now
changes \emph{which} phrases exist, so the margin can survive. The pooled factored codes
of this section are exactly this two-level homomorphism: $h=\odot$ on typed codebooks,
$\Phi=$ normalized sum.
\end{itemize}
This restates ``it is the code, not the pooling'' algebraically: the outer pooling may be
symmetric---what decides binding is whether the pooled units are words or \emph{bracketed
pairs}. Note also what need \emph{not} be asymmetric: $\odot$ is commutative, role
information is carried by the typed codebooks ($u$ for objects, $v$ for attributes), and
the load-bearing structure is the bracketing (Figure~\ref{fig:parses}). The recursion
implicit in $(H)$ also marks the one axis this paper leaves open---nesting depth
(\S\ref{sec:synthesis}, open problems).

\subsection{Pooled codes, gain, leakage}

The natural intermediate class is:

\begin{definition}[Pooled codes]\label{def:pooled}
A \emph{pooled code} assigns to each atom $(o,a)$ unit vectors $W_\img(o,a)$,
$W_\txt(o,a)\in\R^N$ (the \emph{codebooks}) and embeds
\[
\img(I_s)=\frac{S_\img}{\norm{S_\img}},\quad S_\img=\sum_{j=1}^{k}W_\img(o_j,a_j),
\qquad
\txt(c)=\frac{S_\txt}{\norm{S_\txt}},\quad S_\txt=\sum_{j=1}^{k}W_\txt(o_j,a_j).
\]
The code is \emph{tied} if $W_\img=W_\txt=:W$.
\end{definition}

\begin{definition}[Gain, leakage, cross-talk]\label{def:leakage}
For a pooled code define
\[
g:=\ip{W_\img(o,a)}{W_\txt(o,a)} \quad\text{(exact-match gain)},
\]
\[
\ell:=\max_{\text{one shared atom}}\bigl|\ip{W_\img(o,a)}{W_\txt(o,a')}\bigr|\vee
\bigl|\ip{W_\img(o,a)}{W_\txt(o',a)}\bigr| \quad\text{(leakage)},
\]
\[
\ell_0:=\max_{\text{no shared atom}}\bigl|\ip{W_\img(o,a)}{W_\txt(o',a')}\bigr|
\quad\text{(cross-talk)},\qquad L:=\max(\ell,\ell_0).
\]
Leakage is the score a code grants a \emph{partially} matching atom---right object, wrong
attribute, or vice versa. Paper~1 identified accumulated leakage as the mechanism of the
anti-generalization of learned MLP composition codes (trained codes develop
$\ell\approx+0.17$ while random factored codes have $\ell\approx 0.00$); here it becomes
the organizing functional of a theorem. Figure~\ref{fig:pooling} draws the code (a) and the functional (b).
\end{definition}

\begin{figure}[t]
\centering
\begin{minipage}[b]{0.62\linewidth}
\centering
\begin{adjustbox}{max width=\linewidth}\begin{tikzpicture}[
  font=\small,
  >={Latex[length=1.4mm]},
  code/.style={draw=black!45, rounded corners=1.2pt, inner sep=2.6pt,
               fill=white, line width=0.4pt},
  op/.style={draw=black!45, circle, inner sep=0.5pt, minimum size=5mm,
             fill=white, line width=0.4pt},
  ar/.style={-{Latex[length=1.4mm]}, black!45, line width=0.4pt},
]
\node[code] (u1) at (0, 1.35) {$u(o_1)$};
\node[code] (v1) at (0, 0.55) {$v(a_1)$};
\node[code] (u2) at (0,-0.55) {$u(o_2)$};
\node[code] (v2) at (0,-1.35) {$v(a_2)$};
\node[op] (b1) at (1.75, 0.95) {$\odot$};
\node[op] (b2) at (1.75,-0.95) {$\odot$};
\node[code] (w1) at (3.55, 0.95) {$W(o_1,a_1)$};
\node[code] (w2) at (3.55,-0.95) {$W(o_2,a_2)$};
\node[op] (sum) at (5.45, 0) {$\Sigma$};
\node[black!55, font=\scriptsize, right=0.5pt of sum] {$S$};
\node[code] (nrm) at (6.9,0) {$/\norm{S}$};
\begin{scope}[shift={(8.5,0)}]
  \draw[black!45, line width=0.4pt] (0,0) circle (0.62);
  \draw[teal!55!black, -{Latex[length=1.5mm]}, line width=0.8pt] (0,0) -- (38:0.62);
  \node[teal!55!black, font=\footnotesize] at (0.05,0.42) {$\txt(c)$};
  \node[black!45, font=\scriptsize] at (0.0,-0.32) {$\Sph$};
\end{scope}
\draw[ar] (u1) -- (b1); \draw[ar] (v1) -- (b1);
\draw[ar] (u2) -- (b2); \draw[ar] (v2) -- (b2);
\draw[ar] (b1) -- (w1); \draw[ar] (b2) -- (w2);
\draw[ar] (w1.east) -- (sum); \draw[ar] (w2.east) -- (sum);
\draw[ar] (sum) -- (nrm);
\draw[ar] (nrm) -- (7.6,0);
\node[black!55, font=\scriptsize] at (1.75,-1.95) {bind};
\node[black!55, font=\scriptsize] at (5.45,-1.95) {pool (sum)};
\node[black!55, font=\scriptsize] at (8.0,-1.95) {normalize};
\end{tikzpicture}\end{adjustbox}
\\[2pt]
{\footnotesize (a) bind $\rightarrow$ pool $\rightarrow$ normalize}
\end{minipage}\hfill
\begin{minipage}[b]{0.37\linewidth}
\centering
\begin{adjustbox}{max width=\linewidth}\begin{tikzpicture}[font=\small]
  \def\s{0.86}
  \def\hA{$(o,a)$}\def\hB{$(o,a')$}\def\hC{$(o',a)$}
  \fill[teal!80] (0,2*\s) rectangle (1*\s,3*\s);
  \fill[teal!18] (1*\s,2*\s) rectangle (2*\s,3*\s);
  \fill[teal!18] (2*\s,2*\s) rectangle (3*\s,3*\s);
  \fill[teal!18] (0,1*\s) rectangle (1*\s,2*\s);
  \fill[teal!80] (1*\s,1*\s) rectangle (2*\s,2*\s);
  \fill[black!4]  (2*\s,1*\s) rectangle (3*\s,2*\s);
  \fill[teal!18] (0,0) rectangle (1*\s,1*\s);
  \fill[black!4]  (1*\s,0) rectangle (2*\s,1*\s);
  \fill[teal!80] (2*\s,0) rectangle (3*\s,1*\s);
  \draw[black!45, line width=0.4pt] (0,0) grid[step=\s] (3*\s,3*\s);
  \node[white,font=\footnotesize] at (0.5*\s,2.5*\s) {$g$};
  \node[white,font=\footnotesize] at (1.5*\s,1.5*\s) {$g$};
  \node[white,font=\footnotesize] at (2.5*\s,0.5*\s) {$g$};
  \node[teal!55!black,font=\footnotesize] at (1.5*\s,2.5*\s) {$\ell$};
  \node[teal!55!black,font=\footnotesize] at (2.5*\s,2.5*\s) {$\ell$};
  \node[teal!55!black,font=\footnotesize] at (0.5*\s,1.5*\s) {$\ell$};
  \node[teal!55!black,font=\footnotesize] at (0.5*\s,0.5*\s) {$\ell$};
  \node[black!45,font=\scriptsize] at (2.5*\s,1.5*\s) {$\ell_0$};
  \node[black!45,font=\scriptsize] at (1.5*\s,0.5*\s) {$\ell_0$};
  \foreach \j/\lab in {1/\hA,2/\hB,3/\hC}{
    \node[black!70,font=\scriptsize] at ({(\j-0.5)*\s},{3*\s+0.28}) {\lab};}
  \foreach \i/\lab in {1/\hA,2/\hB,3/\hC}{
    \node[black!70,font=\scriptsize,anchor=east] at (-0.08,{(3.5-\i)*\s}) {\lab};}
  \node[black!55,font=\scriptsize] at (1.5*\s,{3*\s+0.62}) {$W_\txt$};
  \node[black!55,font=\scriptsize,rotate=90] at (-1.42,1.5*\s) {$W_\img$};
\end{tikzpicture}\end{adjustbox}
\\[3pt]
{\footnotesize (b) atom-code inner products $\ip{W_\img}{W_\txt}$}
\end{minipage}

\caption{Pooled factored codes and leakage (Section~\ref{sec:thmC}). \textbf{(a)} Each atom
$(o_j,a_j)$ is bound by a Hadamard product $u(o_j)\odot v(a_j)$; the bound codes are
summed and normalized to the caption embedding $\txt(c)$. \textbf{(b)} The code's
quality is set by \emph{leakage} $\ell$ --- the inner product a \emph{partial} match
(right object, wrong attribute) receives --- against the exact-match gain $g$. Random
tied factored codes achieve $g=1$, $\ell\approx0$; additive (bag-of-words) codes sit at
$\ell=\Theta(g)$ and cannot bind.}
\label{fig:pooling}
\end{figure}

Throughout this section scenes have distinct objects \emph{and} distinct attributes, and
$k\ge 2$.

\subsection{The norm-cancellation lemma}

Margins of pooled codes are differences of two cosines whose numerators \emph{and
denominators} both change under the edit. Bounding the two norms independently is fatally
lossy; the following trivial identity is what rescues the computation.

\begin{lemma}[Norm cancellation]\label{lem:C0}
Let $S'=S-D$. Then
\[
\norm{S'}^2-\norm{S}^2=\norm{D}^2-2\ip{S}{D}.
\]
Consequently, if $D$ involves at most $4$ atom codes and all pairwise inner products between
atom codes are bounded by $L$, then $\norm{D}^2=O(1)$, $\ip{S}{D}=O(1)+O(kL)$, and
\[
\bigl|\norm{S'}-\norm{S}\bigr|
=\frac{\bigl|\norm{S'}^2-\norm{S}^2\bigr|}{\norm{S'}+\norm{S}}
=O\!\Bigl(\frac{1+kL}{\sqrt k}\Bigr).
\]
\end{lemma}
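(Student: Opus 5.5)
The identity is pure algebra, so I will prove it by expanding the square, and the rest of the lemma then reduces to estimating $\norm{D}^2$, $\ip{S}{D}$ and $\norm{S}$ from the pairwise bound $L$. First, writing $S'=S-D$ and expanding bilinearly gives $\norm{S'}^2=\norm{S}^2-2\ip{S}{D}+\norm{D}^2$, which is exactly the displayed identity; no hypothesis is used. Second, the factorisation $\norm{S'}^2-\norm{S}^2=(\norm{S'}-\norm{S})(\norm{S'}+\norm{S})$ gives the middle equality in the final display whenever $\norm{S'}+\norm{S}>0$.

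For the order estimates, I will write $D=\sum_{i\le 4}\epsilon_i W_i$ with signs $\epsilon_i\in\{\pm1\}$ and unit atom codes $W_i$. In a swap, for instance, $D$ removes the two swapped atoms and adds the two rebound ones. Then $\norm{D}^2\le\bigl(\sum_i\norm{W_i}\bigr)^2\le 16$, so $\norm{D}^2=O(1)$.

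For $\ip{S}{D}$, I will expand $S=\sum_{j=1}^k W_j$ and split the double sum $\sum_{i,j}\epsilon_i\ip{W_j}{W_i}$ into two parts:
\begin{itemize}
\item the $O(1)$ pairs in which $W_j$ coincides with one of the codes in $D$ (at most $4$ of the $j$'s), each contributing at most $1$ in magnitude;
\item the remaining at most $4k$ pairs, each bounded by $L$ in magnitude.
\end{itemize}
This gives $\ip{S}{D}=O(1)+O(kL)$. Together with the identity, it yields $\bigl|\norm{S'}^2-\norm{S}^2\bigr|=O(1+kL)$.

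The remaining point is a lower bound $\norm{S'}+\norm{S}\gtrsim\sqrt k$, and this is where I expect the only real difficulty. The statement implicitly needs a regime for it, and I would make explicit the standing assumption $kL\le c<1$ (small crosstalk), matching the regime of the pooled-code theorems. Under that assumption, $\norm{S}^2=k+\sum_{j\ne j'}\ip{W_j}{W_{j'}}\ge k-k(k-1)L\ge k(1-c)$, hence $\norm{S}\ge\sqrt{k(1-c)}$. The same bound holds for $S'$, which is again a sum of $k$ distinct atom codes. Dividing then gives $\bigl|\norm{S'}-\norm{S}\bigr|=O\bigl((1+kL)/\sqrt k\bigr)$.

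If one prefers not to assume $kL$ small, the bound still holds with the constant depending on a lower bound for $\norm{S}/\sqrt k$, and I would state the lemma with that caveat.
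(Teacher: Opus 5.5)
Your proposal is correct and is essentially the paper's proof: expand $\norm{S-D}^2$, then divide by $\norm{S'}+\norm{S}$ using $\norm{S}^2\ge k-k(k-1)L=\Theta(k)$ under the standing assumption that $kL$ is bounded away from $1$. That is exactly the regime caveat the paper's one-line proof invokes, via the bound $\norm{S}^2\in k[1-kL,1+kL]$ established in Theorem~\ref{thm:Cswap}; note also that only the lower bound on $\norm{S}$ is needed for the denominator, since $\norm{S'}\ge 0$, so you need not argue separately about $S'$ (which for add/drop negatives is a sum of $k\pm1$ codes rather than $k$).
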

\begin{proof}
Expand $\norm{S-D}^2$. The consequence follows since $\norm{S},\norm{S'}=\Theta(\sqrt k)$
when $kL$ is bounded away from $1$ (shown below).
\end{proof}

The point: the \emph{naive} bound $|\norm{S'}-\norm{S}|\le\norm{D}=O(1)$ loses a factor
$\sqrt k$, and a margin computed that way carries an error $O(1/\sqrt k)$ that swamps the
true $\Theta(1/k)$ signal. Every margin bound below routes through the identity.

\subsection{Random factored codes and their primitives}

\begin{definition}[Tied factored sign codes]\label{def:factored}
Draw $u(o)\in\{\pm1\}^N$ for each object and $v(a)\in\{\pm1\}^N$ for each attribute,
i.i.d.\ uniform. The \emph{factored code} is the tied pooled code with
\[
W(o,a):=\frac{u(o)\odot v(a)}{\sqrt N}\qquad(\odot=\text{coordinatewise product}),
\]
which is exactly unit-norm. (This is the classical Hadamard binding of vector-symbolic
architectures \cite{plate,kanerva}.)
\end{definition}

All quantities in the margin computations are linear in the following \emph{primitives}:
{\small\[
\frac{\ip{u(o)}{u(o')}}{N},\quad
\frac{\ip{v(a)}{v(a')}}{N},\quad
\frac{\ip{u(o)\odot u(o')}{v(a)\odot v(a')}}{N},\quad
\frac{\ip{u(o)\odot u(o')}{v(a)}}{N},\quad
\frac{\ip{v(a)}{\mathbf 1}}{N},
\]}
for distinct arguments. Each is an average of $N$ i.i.d.\ $\pm1$ variables (coordinatewise
products of independent Rademacher vectors are Rademacher), and there are at most $4(MK)^2$
of them.

\begin{lemma}[Primitive control]\label{lem:primitives}
Fix a failure probability $\eta\in(0,1)$ (reserving $\delta$ for the smoothness gap of \S\ref{sec:frontier}) and set
\[
\Lstar:=\sqrt{\frac{2\log\bigl(8(MK)^2/\eta\bigr)}{N}}.
\]
With probability at least $1-\eta$ over the codebooks, \emph{every} primitive is at most
$\Lstar$ in absolute value simultaneously.
\end{lemma}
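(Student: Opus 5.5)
The plan is a Hoeffding bound for each primitive followed by a union bound over all of them. The first step is to confirm that each primitive is exactly an average of $N$ i.i.d.\ Rademacher variables. For distinct arguments, a coordinate of $u(o)\odot u(o')$ is the product of two independent uniform signs, and such a product is again a uniform sign. The same holds for coordinates of $v(a)\odot v(a')$, of $u(o)\odot u(o')\odot v(a)\odot v(a')$, of $u(o)\odot u(o')\odot v(a)$, and of $v(a)\odot\mathbf 1=v(a)$. Here the distinctness of the arguments is essential, because it guarantees that no factor cancels against itself through $r\odot r=\mathbf 1$; a cancelled factor could leave the constant vector $\mathbf 1$. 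Coordinates are independent across $i$ because the codebook entries are drawn independently coordinatewise. Each primitive therefore has the form $\frac1N\sum_{i=1}^N\xi_i$ with $\xi_i$ i.i.d.\ uniform on $\{\pm1\}$.

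Next, I apply Hoeffding's inequality for bounded variables in $[-1,1]$:
\[
\Pr\Bigl[\Bigl|\tfrac1N\textstyle\sum_i\xi_i\Bigr|>t\Bigr]\le 2e^{-Nt^2/2}.
\]
I then take a union bound over the at most $4(MK)^2$ primitives counted in the text. This count comes from at most $M^2$ object pairs and $K^2$ attribute pairs across the five families, which together fit under $4(MK)^2$. The total failure probability is at most $8(MK)^2e^{-Nt^2/2}$. Setting this equal to $\eta$ gives $t=\sqrt{2\log(8(MK)^2/\eta)/N}=\Lstar$, which is exactly the claimed threshold.

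The only step that needs care is the Rademacher claim for the mixed primitives. In particular, $\ip{u(o)\odot u(o')}{v(a)}/N$ involves three independent vectors, so I have to check that every coordinate of the mixed products involves at least one factor appearing exactly once. That factor is uniform and independent of the others, and this makes the whole product uniform. If the paper's count of $4(MK)^2$ turned out to be too tight for some family, I would enlarge the constant inside the logarithm. That would change $\Lstar$ only by an absolute factor, so it would not affect the argument.
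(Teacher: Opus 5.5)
Your proposal is correct and follows the paper's proof: Hoeffding's bound $2e^{-Nt^2/2}$ for an average of $N$ i.i.d.\ signs, a union bound over at most $4(MK)^2$ primitives, and solving $8(MK)^2e^{-Nt^2/2}=\eta$ to get $t=\Lstar$. Your check that distinct arguments make each coordinate a uniform sign, and your count of the primitives, spell out what the paper only asserts. The count does fit once distinctness is used (e.g.\ $M(M-1)$ and $K(K-1)$ pairs rather than $M^2$ and $K^2$), so you do not need the fallback of enlarging the constant inside the logarithm, which would in any case change the stated $\Lstar$.
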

\begin{proof}
Hoeffding: an average of $N$ i.i.d.\ $\pm1$'s exceeds $t$ in absolute value with probability
at most $2e^{-Nt^2/2}$. Union bound over at most $4(MK)^2$ primitives:
$4(MK)^2\cdot 2e^{-Nt^2/2}\le\eta$ iff $t\ge\Lstar$.
\end{proof}

\begin{remark}
The union bound runs over the $O((MK)^2)$ \emph{primitives}, not over the exponentially many
axiom instances---every instance margin is a fixed linear combination of primitives. This is
why the dimension requirement below scales as $\log(MK)$ and not $k\log(MK)$.
\end{remark}

On the event of Lemma~\ref{lem:primitives}: the factored code has $g=1$ exactly (tied), and
every pairwise inner product between distinct atom codes is a primitive, hence
$\ell,\ell_0\le\Lstar$. Note also the necessity of tying: if $W_\img$ and $W_\txt$ are drawn
\emph{independently}, then $g$ itself is a primitive-sized quantity $O(\Lstar)$---the same
order as the leakage---and every margin collapses. Tying (or training toward alignment) is a
real hypothesis, not a convenience.

\subsection{The swap margin, in full}\label{sec:swapmargin}

\begin{theorem}[Swap margin of the factored code]\label{thm:Cswap}
Let all primitives be bounded by $L$ (a bound that dominates the $\ell,\ell_0$ of Definition~\ref{def:leakage}) and suppose $kL\le 1/5$, $k=|s|\ge2$. For every scene
(distinct objects and attributes) and every swap,
\[
\Bigl|\,M_{\mathrm{swap}}-\frac{2}{k}\Bigr|\;\le\;19\,L .
\]
In particular, for $L<2/(19k)$ the margin is $\Theta(1/k)$ \emph{from both sides}: pooled
codes do not merely achieve a positive margin, they are pinned at $2/k$. (At the edge of
the stated regime $kL\le1/5$ the interval is vacuous; the typical case
$L\approx\sqrt{1/N}$ is far inside it.)
\end{theorem}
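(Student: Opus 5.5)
We plan to compute the margin exactly in terms of pooled sums and then control each piece by primitives. The starting point is the tied factored code (Definition~\ref{def:factored}). Write $W_j:=W(o_j,a_j)$ and $S:=\sum_{j=1}^kW_j$, so that $\img(I_s)=\txt(c(s))=S/\norm{S}$. Take the swap on positions $1,2$, so that the swap caption has pooled sum $S'=S-D$ with
\[
D:=W(o_1,a_1)+W(o_2,a_2)-W(o_1,a_2)-W(o_2,a_1).
\]
Since $\ip{S}{S'}=\norm{S}^2-\ip{S}{D}$, the margin has the closed form
\[
M_{\mathrm{swap}}=1-\frac{\norm{S}^2-\ip{S}{D}}{\norm{S}\,\norm{S'}}
=1-\Bigl(1-\frac{\ip{S}{D}}{\norm{S}^2}\Bigr)\frac{\norm{S}}{\norm{S'}} .
\]
The whole proof is then three estimates: $\norm{S}^2$, $\ip{S}{D}$, and the ratio $\norm{S'}/\norm{S}$. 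The ratio is where Lemma~\ref{lem:C0} enters.

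\textbf{Step 1: reduce every inner product to primitives.} Every inner product between distinct atom codes is a single primitive in absolute value. Same object with different attributes gives $\ip{v(a)}{v(a')}/N$; different objects with the same attribute give $\ip{u(o)}{u(o')}/N$; both different give $\ip{u\odot u'}{v\odot v'}/N$. Each of these is bounded by $L$, and $\norm{W}=1$ exactly. Hence:
\begin{itemize}
\item $\norm{S}^2=k+E_1$ with $|E_1|\le k(k-1)L$.
\item For $\ip{S}{D}$, the terms $\ip{W_1}{D}$ and $\ip{W_2}{D}$ each equal $1$ plus at most three primitives. Each $W_j$ with $j\ge3$ contributes at most $4L$. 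So $\ip{S}{D}=2+E_2$ with $|E_2|\le 6L+4(k-2)L\le 4kL$ (for $k\ge 2$ this needs only $6\le 8$ when $k=2$).
\item $\norm{D}^2=4+E_3$ with $|E_3|\le 12L$.
\end{itemize}

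\textbf{Step 2: norm cancellation.} By Lemma~\ref{lem:C0},
\[
\norm{S'}^2-\norm{S}^2=\norm{D}^2-2\ip{S}{D}=E_3-2E_2 .
\]
The leading constants $4-2\cdot2$ cancel exactly, which is the heart of the argument: the naive bound would leave an $O(1)$ discrepancy. So $|\norm{S'}^2/\norm{S}^2-1|\le(12L+8kL)/\norm{S}^2$. Using $kL\le1/5$ we have $\norm{S}^2\ge k(1-(k-1)L)\ge \tfrac45 k$, and this ratio deviation is then $O(L)$ with an explicit constant (roughly $10L+15L/k$). Consequently $\norm{S}/\norm{S'}=1+E_4$ with $|E_4|\le c_4L$.

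\textbf{Step 3: assemble.} Substitute into the closed form to get
\[
M=1-\bigl(1-\rho\bigr)(1+E_4)=\rho-E_4+\rho E_4,\qquad \rho:=\frac{\ip{S}{D}}{\norm{S}^2}.
\]
Write $\rho-2/k=\bigl(k(2+E_2)-2(k+E_1)\bigr)/(k\norm{S}^2)=(kE_2-2E_1)/(k\norm{S}^2)$. Its absolute value is at most $(4k^2L+2k^2L)/(k\cdot\tfrac45k)=7.5L$. Also $\rho\le 1$ in the regime, so $|\rho E_4|$ is absorbed into the $E_4$ budget. Summing gives $|M-2/k|\le 7.5L+2c_4L$.

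The main obstacle is purely bookkeeping: keeping every constant explicit so that the total fits under $19L$ using only $kL\le1/5$ and $k\ge2$. In particular we must bound $\norm{S'}/\norm{S}$ through the square root via $|\sqrt{1+x}-1|\le|x|$ for $|x|\le1$, and not lose the $k$ that cancels in Step 2. If the constants overshoot, we will tighten $E_1$ using that it only enters divided by $\norm{S}^2$.
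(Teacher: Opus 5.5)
Your route is the paper's: the same closed form $M_{\mathrm{swap}}=1-(p-a)/\sqrt{pq}$ with $p=\norm{S}^2$, $q=\norm{S'}^2$, $a=\ip{S}{D}$, the same primitive counts, and Lemma~\ref{lem:C0} for $|p-q|$. Your regrouping $M=\rho-(1-\rho)E_4$, with $\rho=a/p$ and $1+E_4=\sqrt{p/q}$, is the paper's split $M=(1-\sqrt{p/q})+a/\sqrt{pq}$ rearranged. However, your last two steps do not go through as written.

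First, you never bound $q$ from below. From $\norm{S}^2\ge\frac45k$ and $|q-p|\le(12+8k)L$ alone, $q$ may be close to $0$ at the edge of the regime: for $k=2$, $L=\frac1{10}$ one has $\frac45k-(12+8k)L<0$. So $|E_4|\le c_4L$ is unjustified there.

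Second, even granting $E_4=O(L)$, your tally $7.5L+2c_4L$ cannot reach $19L$, because it needs $c_4\le5.75$. To first order $|E_4|\approx\frac12|p-q|/q$, and your cancellation estimate only gives $|p-q|/q\le15L/k+10L$, i.e.\ $17.5L$ at $k=2$. So $c_4\approx8.75$ at best, putting the total near $25L$. With the tool you name, $|\sqrt{1+x}-1|\le|x|$, you get $c_4=17.5$ and a total of $42.5L$.

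The claim $\rho\le1$ is also false in general: $1-\rho=\ip{S}{S'}/\norm{S}^2$, which for $k=2$ is a sum of four signed primitives and can be negative. Your proposed remedy, tightening $E_1$, acts only on the $7.5L$ term and the denominators, so it cannot remove the doubled $E_4$.

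Both defects have short fixes, and they are exactly the paper's bookkeeping. Since $S'$ is itself a sum of $k$ atom codes with distinct objects and attributes, $q\in k[1-kL,1+kL]$ just like $p$. For $x:=(p-q)/q$, this crude bound gives $|x|\le2.5\,kL\le\frac12$, while the cancellation bound gives $|x|\le17.5L$. Concavity then gives $\sqrt{1+x}\le1+\frac x2\le1+8.75L$. On $x\ge-\frac12$, $1-\sqrt{1+x}=-x/(1+\sqrt{1+x})\le0.586|x|\le10.26L$. Hence $|E_4|\le10.26L$.

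Finally, keep the product $-E_4+\rho E_4=-(1-\rho)E_4$ and use $0<\rho<2$, which follows from $a\ge2-4kL\ge\frac65$ and $\rho\le\frac2k+7.5L\le\frac74$. Then $|M_{\mathrm{swap}}-\frac2k|\le7.5L+10.26L<19L$.

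The paper instead bounds the other piece as $a/\sqrt{pq}\in[\frac2k-6L,\frac2k+8L]$ and lands at $18.26L$. Once repaired, your grouping is marginally sharper, because the square-root error enters with coefficient $|1-\rho|\approx1-\frac2k$ rather than $1$.
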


\begin{proof}
Write $W_j:=W(o_j,a_j)$, $S:=\sum_{j\le k}W_j$, and for the swap on positions $1,2$:
\[
S'=S-D,\qquad D=W(o_1,a_1)+W(o_2,a_2)-W(o_1,a_2)-W(o_2,a_1).
\]
Because the code is tied, $\img(I_s)=\txt(c(s))=S/\norm S$ and $\txt(c(\sigma s))=S'/\norm{S'}$, so
\[
M_{\mathrm{swap}}
=1-\frac{\ip{S}{S'}}{\norm S\,\norm{S'}}
=1-\frac{p-a}{\sqrt{pq}},\qquad
p:=\norm S^2,\;\; q:=\norm{S'}^2,\;\; a:=\ip{S}{D}.
\]
\emph{Bounds on the pieces.} (1) Distinct positions share no object and no attribute, so
$\ip{W_j}{W_l}$ ($j\ne l$) is a quadruple primitive; hence
$p\in[k-k(k-1)L,\;k+k(k-1)L]\subseteq k[1-kL,\,1+kL]$, and---since $S'$ is also a sum of $k$
atom codes with distinct objects and attributes---likewise $q\in k[1-kL,\,1+kL]$.
(2) $a=\sum_j\ip{W_j}{D}$. For $j=1$: the four inner products are $1$ (exact match), one
quadruple primitive, one attribute primitive, one object primitive, so
$\ip{W_1}{D}\in[1-3L,\,1+3L]$; same for $j=2$; for $j\ge3$ all four are primitives:
$\ip{W_j}{D}\in[-4L,4L]$. Hence $a\in[\,2-(4k-2)L,\;2+(4k-2)L\,]\subseteq[2-4kL,\,2+4kL]$.
(3) $\norm D^2=4+2(\text{six signed primitives})\in[4-12L,\,4+12L]$.
(4) By Lemma~\ref{lem:C0}, $p-q=2a-\norm D^2\in[-(8k+12)L,\,(8k+12)L]$; for $k\ge2$,
$(8k+12)L\le14kL$.

\emph{Assembling.} Split
\[
M_{\mathrm{swap}}=1-\sqrt{p/q}+\frac{a}{\sqrt{pq}} .
\]
Set $x:=(p-q)/q$, so $\sqrt{p/q}=\sqrt{1+x}$. From $q\ge k(1-kL)\ge\frac45k$ and the two
bounds on $|p-q|$ (the cancellation bound $14kL$, and the crude bound
$|p-q|\le 2k^2L$ from $p,q\in k[1\pm kL]$):
\[
|x|\;\le\;\frac{14kL}{\frac45 k}=17.5\,L,
\qquad
|x|\;\le\;\frac{2k^2L}{\frac45 k}=2.5\,kL\;\le\;\tfrac12 .
\]
For the upper side, concavity gives $\sqrt{1+x}\le1+\frac x2\le1+8.75\,L$. For the lower
side use the identity $1-\sqrt{1+x}=\frac{-x}{1+\sqrt{1+x}}$: with $x\ge-\frac12$ we have
$\sqrt{1+x}\ge\sqrt{1/2}$, hence
\[
1-\sqrt{1+x}\;\le\;\frac{|x|}{1+\sqrt{1/2}}\;\le\;0.586\,|x|\;\le\;10.26\,L,
\quad\text{i.e.}\quad \sqrt{p/q}\;\ge\;1-10.26\,L .
\]
For the last term, $\sqrt{pq}\in k[1-kL,1+kL]$, so
\[
\frac{a}{\sqrt{pq}}\;\ge\;\frac{2-4kL}{k(1+kL)}\;\ge\;\frac{(2-4kL)(1-kL)}{k}\;\ge\;\frac2k-6L,
\qquad
\frac{a}{\sqrt{pq}}\;\le\;\frac{2+4kL}{k(1-kL)}\;\le\;\frac2k+8L,
\]
using $kL\le1/5$ in the last step ($\frac{1}{1-kL}\le 1+\frac54 kL$). Combining,
\[
\frac2k-6L-8.75L\;\le\;M_{\mathrm{swap}}\;\le\;\frac2k+8L+10.26L,
\]
i.e.\ $|M_{\mathrm{swap}}-2/k|\le18.26L\le19L$. Constants are not optimized.
\end{proof}

\begin{remark}[Numerical check]\label{rem:numerics}
The bound was verified against simulation with real random sign codes:
$M{=}K{=}30$, $400$ scenes per configuration.
At $N=4096$ the measured mean swap margins are $0.9995$, $0.4999$, $0.2493$ for
$k=2,4,8$, against $2/k=1$, $0.5$, $0.25$: concentration \emph{at} $2/k$ to three
decimals, consistent with the two-sided pinch (typical primitive size is
$\approx\sqrt{1/N}$, far below the worst-case $\Lstar$).
\end{remark}

The same computation with a replace negative ($D=W(o_1,a_1)-W(o_1,a')$, two codes instead of
four) gives $M_{\mathrm{repl}}\ge 1/k-7L$; an add negative gives, exactly at orthogonality
and stably under $O(L)$ perturbation,
\[
M_{\mathrm{add}}=1-\sqrt{\frac{k}{k+1}}-O(L)\;\ge\;\frac{1}{2(k+1)}-O(L).
\]
(The exact form $1-\sqrt{k/(k+1)}$ matters: simulation gives $0.1831,\,0.1056,\,0.0571$ for
$k=2,4,8$, matching it to three decimals, whereas the first-order approximation
$\frac1{2(k+1)}$ is visibly off at $k=2$.) Drop is symmetric to add. \emph{Swap and replace are proved in full; the add case is outlined --- derivative bookkeeping around the orthogonal point is
routine and omitted.}

Two structural readings (Figure~\ref{fig:margindecay} shows all three margins tracking their rates). First, \emph{all} single-edit margins are $\Theta(1/k)$: the margin
budget of a pooled representation is systemic, so $(C_\gamma)$ is meaningful only for
$\gamma\lesssim1/k$. Second, the swap margin is the \emph{largest} of the family ($2/k$: a
swap edits two atoms)---pooled codes are not intrinsically worse at swaps than at replaces;
what distinguishes swaps is the text side of \S\ref{sec:frontier} and the data side of
\S\ref{sec:thmD}.

\begin{figure}[t]
\centering
\begin{tikzpicture}
\begin{axis}[
    width=11cm, height=7.6cm,
    xlabel={scene size $k$ (number of pooled atoms)},
    ylabel={normalized margin $1-\cos$},
    xmin=1.6, xmax=16.4,
    ymin=0, ymax=1.05,
    xtick={2,4,6,8,12,16},
    ytick={0,0.2,0.4,0.6,0.8,1.0},
    tick align=outside,
    tick label style={font=\small},
    label style={font=\small},
    legend style={font=\footnotesize, at={(0.98,0.98)}, anchor=north east,
                  cells={anchor=west}, draw=black!40, fill=white, fill opacity=0.9,
                  text opacity=1, row sep=1pt},
    legend cell align=left,
    axis lines=left,
    every axis plot/.append style={line width=0.9pt},
    clip=false,
]
\addplot[teal!55!black, samples=200, domain=2:16] {2/x};
\addlegendentry{swap theory $2/k$}
\addplot[blue!60!black, samples=200, domain=2:16] {1/x};
\addlegendentry{replace theory $1/k$}
\addplot[orange!85!black, samples=200, domain=2:16] {1-sqrt(x/(x+1))};
\addlegendentry{add theory $1-\sqrt{k/(k+1)}$}
\addplot[only marks, mark=o, mark size=2.6pt, teal!55!black, line width=0.9pt]
    coordinates {(2,1.00205) (3,0.66666) (4,0.49992) (6,0.33296) (8,0.24937) (12,0.16661) (16,0.12474)};
\addlegendentry{swap measured}
\addplot[only marks, mark=square, mark size=2.3pt, blue!60!black, line width=0.9pt]
    coordinates {(2,0.49891) (3,0.33338) (4,0.25006) (6,0.16667) (8,0.12501) (12,0.08332) (16,0.06227)};
\addlegendentry{replace measured}
\addplot[only marks, mark=triangle, mark size=2.9pt, orange!85!black, line width=0.9pt]
    coordinates {(2,0.18366) (3,0.13413) (4,0.10559) (6,0.07420) (8,0.05718) (12,0.03921) (16,0.02974)};
\addlegendentry{add measured}
\end{axis}
\end{tikzpicture}
\caption{The margin ceiling is $\Theta(1/k)$ (Theorem~\ref{thm:Cswap}). Marks: measured
normalized margins of the tied factored sign code ($N=4096$, $M=K=40$, 300 scenes per size)
against swap, replace, and add negatives. Lines: the theoretical $2/k$, $1/k$, and
$1-\sqrt{k/(k+1)}$. Measured values sit on the predictions to plotting accuracy---in
particular the \emph{exact} add-margin form $1-\sqrt{k/(k+1)}$, not the loose
$1/(2(k+1))$.}
\label{fig:margindecay}
\end{figure}

\subsection{Constituent ranking fails for the plain code; augmentation}

A finding of the constants pass that corrects our own earlier sketch: the factored code, as
defined, \emph{cannot} satisfy $(K_k)$.

\begin{proposition}[The unbinding obstruction]\label{prop:unbind}
For the factored code, embed the single-object caption as $\txt(x)=u(x)/\sqrt N$. Then for
every scene $s$ and every object $x$ (present or absent),
\[
\ip{\img(I_s)}{\txt(x)}=\frac{1}{\norm S}\sum_{j\le k}\rho_j,\qquad
\rho_j:=\frac{\ip{u(o_j)\odot v(a_j)}{u(x)}}{N},
\]
and \emph{every} $\rho_j$ is a primitive-sized quantity: for $x=o_j$,
$\rho_j=\ip{v(a_j)}{\mathbf 1}/N$ (the mean of $v(a_j)$), and for $x\ne o_j$ a triple
primitive. Hence all scores, of present and absent objects alike, lie in
$[-c\sqrt k\,\Lstar,\,c\sqrt k\,\Lstar]$: the code offers \emph{no signal} separating
$V(s)$ from its complement, and $(K_k)$ cannot be certified at any fixed margin. In
simulation the $(K_k)$ event (all present objects outranking all absent ones) occurs at
rate $0.00$--$0.01$.
\end{proposition}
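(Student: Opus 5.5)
The plan is a direct expansion of the inner product followed by a case split on each summand, with a union-bound argument at the end. First, the formula itself. By Definition~\ref{def:pooled} with the tied factored code, $\img(I_s)=S/\norm S$ where $S=\sum_{j\le k}u(o_j)\odot v(a_j)/\sqrt N$. Pairing this with $\txt(x)=u(x)/\sqrt N$ and using linearity gives
\[
\ip{\img(I_s)}{\txt(x)}=\frac{1}{\norm S}\sum_{j\le k}\frac{\ip{u(o_j)\odot v(a_j)}{u(x)}}{N},
\]
which is the displayed expression with the $\rho_j$ as defined.

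Next I identify each $\rho_j$ as a primitive. The key identity is $\ip{a\odot b}{c}=\ip{b}{a\odot c}$ together with $u\odot u=\mathbf 1$ for sign vectors. This gives two cases.
\begin{itemize}
\item If $x=o_j$, then $\rho_j=\ip{v(a_j)}{u(o_j)\odot u(o_j)}/N=\ip{v(a_j)}{\mathbf 1}/N$, the fifth primitive type.
\item If $x\ne o_j$, then $\rho_j=\ip{u(o_j)\odot u(x)}{v(a_j)}/N$, the fourth (triple) primitive type with distinct object arguments.
\end{itemize}
The point to emphasise is that the case $x=o_j$ yields no exact-match term: unbinding by the object alone leaves $v(a_j)$, which is a mean-zero Rademacher vector. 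Present objects therefore receive no systematic boost.

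For the magnitude bound, I work on the event of Lemma~\ref{lem:primitives}, where every $|\rho_j|\le\Lstar$, so that $|\sum_j\rho_j|\le k\Lstar$. For the norm, step (1) of the proof of Theorem~\ref{thm:Cswap} gives $\norm S^2\ge k(1-k\Lstar)\ge\tfrac45 k$ under $k\Lstar\le1/5$. Combining these yields $|\text{score}|\le k\Lstar/\sqrt{4k/5}=c\sqrt k\,\Lstar$ with $c=\sqrt{5/4}$. The same bound holds for every present and every absent object.

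The remaining step is turning this into "no signal". The bound is symmetric in the present/absent split and does not depend on $x$, so no fixed positive margin separating present from absent scores can be guaranteed. Indeed, any margin $\gamma>2c\sqrt k\,\Lstar$ is impossible outright. The main subtlety is that a strict ranking might still occur by chance, because $(K_k)$ only asks for strict inequality. I would therefore phrase the conclusion as failure to certify at any fixed margin, rather than claim $(K_k)$ fails deterministically.

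To support the stated near-zero rate heuristically, I would note that for a random scene the $\rho_j$ are approximately independent and symmetric. The event that all $k$ present scores exceed all $M-k$ absent ones then has probability of order $1/\binom{M}{k}$. That explains the measured $0.00$--$0.01$, while the proposition itself rests on the simulation for the rate.
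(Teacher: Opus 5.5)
Your proposal is correct and follows the paper's own argument, which is carried out inline in the statement itself. You expand the inner product, reduce each $\rho_j$ via $u\odot u=\mathbf 1$ to the mean of $v(a_j)$ or to a triple primitive, and divide the resulting $k\Lstar$ bound by $\norm S=\Theta(\sqrt k)$ on the event of Lemma~\ref{lem:primitives}; your explicit regime assumption $k\Lstar\le 1/5$ (giving $c=\sqrt{5}/2$) and your reading of the conclusion as ``not certifiable at any fixed margin'' rather than deterministic failure spell out conventions the paper leaves implicit.
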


This is the classical vector-symbolic fact that bound products are dissimilar to their
factors---one \emph{unbinds} to query, one does not take inner products with raw factors
\cite{plate,kanerva}---resurfacing as an axiom failure. The repair is to pool an
object-presence term alongside the bound pair:

\begin{theorem}[Augmented factored code]\label{thm:Caug}
Define $\widehat W(o,a):=\bigl(u(o)\odot v(a)+u(o)\bigr)/\sqrt{2N}$ and pool as before
(each $\widehat W$ is unit-norm up to $O(\Lstar)$, absorbed into the constants). On
the event of Lemma~\ref{lem:primitives} with $k\Lstar\le1/5$:
\begin{itemize}
\item[(i)] $(K_k)$ holds with margin $\Theta(1/\sqrt k)$: present objects score
$\tfrac{1}{\sqrt{2k}}(1-O(k\Lstar))$, absent objects $O(\sqrt k\,\Lstar)$. \emph{(Outlined:
the computation is the display above plus the new primitive types; simulation pass rate
$1.00$ in-regime.)}
\item[(ii)] The \emph{attribute-edit} margins are exactly halved: the presence part
$u(o)$ cancels in any difference that preserves the object multiset (swap,
attribute-replace), while the $\sqrt{2}$ normalization dilutes the surviving bound part.
Add/drop and \emph{object}-replace margins are \emph{unchanged}---their differences carry a
presence term. Quantitatively, $|M_{\mathrm{swap}}-1/k|\le 27L$. \emph{(Proved by the route
of Theorem~\ref{thm:Cswap}, adjusted for the augmented code's weaker primitives
($p,q\in k[1\pm2kL]$, forcing the cruder $1-\sqrt{1+x}\le|x|$ step; constant unoptimized).
Simulation: augmented swap margin $0.1243$ vs.\ half $0.1247$ at $k=8$; augmented
object-replace and add margins match the \emph{plain} values to $3$ decimals.)}
\item[(iii)] A mixing weight ($u\odot v+\alpha u$) trades the two margins continuously; no
single pooled vector maximizes both. \emph{(Outlined: the $(K_k)$ margin is increasing and
the attribute-edit margins decreasing in $\alpha$.)}
\end{itemize}
\end{theorem}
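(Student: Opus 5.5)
The plan is to rerun the computation of Theorem~\ref{thm:Cswap} and Proposition~\ref{prop:unbind} with the augmented atom code $\widehat W(o,a)=(u(o)\odot v(a)+u(o))/\sqrt{2N}$, on the event of Lemma~\ref{lem:primitives}. The first step is an inner-product table for augmented atoms. Expanding $\ip{\widehat W(o,a)}{\widehat W(o',a')}$ gives four terms:
\[
\tfrac{1}{2N}\bigl(\ip{u\odot v}{u'\odot v'}+\ip{u\odot v}{u'}+\ip{u}{u'\odot v'}+\ip{u}{u'}\bigr).
\]
These evaluate as follows.
\begin{itemize}
\item Same atom: $1+O(\Lstar)$, which is where the ``unit-norm up to $O(\Lstar)$'' comes from, via the mean primitive $\ip{v}{\mathbf 1}/N$.
\item Same object, different attribute: $\tfrac12+O(\Lstar)$, since $\ip{u}{u}/N=1$ survives.
\item Same attribute, different object, or nothing shared: $O(\Lstar)$.
\end{itemize}
So the augmented code has a built-in object leakage of exactly $\tfrac12$. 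Every other cross term is a primitive of the listed types, or a new type such as $\ip{u(o)\odot u(o')}{v(a)}/N$, which is already in the list.

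For (i), I score a single-object caption $\txt(x)=u(x)/\sqrt N$ against $\widehat S=\sum_j\widehat W_j$. The contributions are:
\[
\ip{\widehat W_j}{\txt(x)}=\tfrac{1}{\sqrt2}\bigl(\ip{v(a_j)}{\mathbf 1}/N+1\bigr)\quad\text{if }x=o_j,
\]
and a sum of two primitives, divided by $\sqrt2$, if $x\ne o_j$. Scenes have distinct objects, so $\norm{\widehat S}^2=k(1+O(k\Lstar))$. Present objects therefore score $\tfrac1{\sqrt{2k}}(1-O(k\Lstar))$ and absent ones score $O(\sqrt k\,\Lstar)$. Because $k\Lstar\le1/5$ makes $\sqrt k\Lstar\ll 1/\sqrt k$ up to constants, this gives the separation.

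For (ii), the swap difference is
\[
\widehat D=\widehat W(o_1,a_1)+\widehat W(o_2,a_2)-\widehat W(o_1,a_2)-\widehat W(o_2,a_1).
\]
In it the $u(o_1)$ and $u(o_2)$ presence parts cancel exactly, so $\widehat D=D/\sqrt2$ with $D$ the plain swap difference. I then follow the proof of Theorem~\ref{thm:Cswap} step by step with $p,q,a$:
\begin{itemize}
\item $a=\ip{\widehat S}{\widehat D}$. For $j=1,2$, only the bound-part overlap $\tfrac12\cdot 2$ survives, giving $a=1+O(kL)$ instead of $2$.
\item $\norm{\widehat D}^2=2+O(L)$.
\item $p,q\in k[1\pm 2kL]$. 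The extra factor $2$ comes from pairs of augmented atoms now having cross terms of two primitives each.
\end{itemize}
Lemma~\ref{lem:C0} again gives $|p-q|=O(kL)$. With $x=(p-q)/q$, I use $1-\sqrt{1+x}\le|x|$ since $x\ge-1$ is all that the weaker bounds give cleanly. Assembling yields $|M_{\mathrm{swap}}-1/k|\le cL$, and I would track constants to land at $27$.

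Attribute-replace goes the same way: the presence term cancels, giving half of $1/k$. For object-replace and add/drop, a $u(o)$ term survives in $\widehat D$ with weight $1/\sqrt2$ alongside the bound part. The two halves of $\norm{\widehat D}^2$ add back to the plain value, which is why those margins are unchanged.

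For (iii), I would parametrize $\widehat W_\alpha=(u\odot v+\alpha u)/\sqrt{(1+\alpha^2)N}$. To leading order the present-object score is $\alpha/\sqrt{(1+\alpha^2)k}$, which increases in $\alpha$. The attribute-edit margins scale as $\tfrac{1}{1+\alpha^2}$ times the plain ones, which decreases in $\alpha$. This shows the trade-off.

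I expect the main obstacle to be the bookkeeping for $p,q$ in (ii). The new cross terms between augmented atoms with distinct objects now include triple primitives in both directions, and the object-leakage $\tfrac12$ enters $a$ only through positions $1,2$. Verifying that no $\Theta(1)$ term sneaks into $p-q$ beyond what Lemma~\ref{lem:C0} controls is the step to check first.
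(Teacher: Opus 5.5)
Your proposal is correct and follows the paper's own route: the Proposition~\ref{prop:unbind} display plus the new cross terms for (i); for (ii), the Theorem~\ref{thm:Cswap} bookkeeping with $\widehat D=D/\sqrt2$, $a=1\pm(4k-1)L$, $p,q\in k[1\pm 2kL]$, the exact identity of Lemma~\ref{lem:C0}, and the cruder step $1-\sqrt{1+x}\le|x|$; and the $\alpha$-reweighting for (iii). Carried through, your constants give $|p-q|\le(8k+4)L$, $|x|\le 16.7L$, and hence $|M_{\mathrm{swap}}-1/k|\le 26.7L\le 27L$; in (i), the separation at $k\Lstar\le 1/5$ closes cleanly if you compare numerators over the common $\norm{\widehat S}$ (present $\ge(1-(2k-1)\Lstar)/\sqrt2$ versus absent $\le\sqrt2\,k\Lstar$) rather than appealing to ``$\ll$ up to constants''.
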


Part (iii) deserves emphasis: \emph{within} the pooled class, the retrieval axiom $(K_k)$
and the binding axioms compete for the same normalization budget. This is a miniature,
class-internal echo of the global tension that \S\ref{sec:frontier} proves for arbitrary
encoders.

\subsection{Dimension rates and the leakage dichotomy}

\begin{corollary}[Binding is cheap if the code is right]\label{cor:rate}
Rates are keyed to the instance family they certify: the uniform margin over the whole
hard core is set by its weakest member---add/drop, $\approx\tfrac1{2(k+1)}$
(Remark~\ref{rem:negatives}(2))---so no bound of the form $\gamma=c/k$ with large $c$ can
cover the full family. For the plain factored code: the \emph{swap} instances hold at
$\gamma=c/k$, $c<2$, once $\Lstar\le(2-c)/(19k)$, i.e.
\[
N\;\ge\;\frac{722\,k^2\,\log\bigl(8(MK)^2/\eta\bigr)}{(2-c)^2};
\]
the \emph{replace} instances at $\gamma=c/k$, $c<1$, once $\Lstar\le(1-c)/(7k)$
($N\ge98\,k^2\log(8(MK)^2/\eta)/(1-c)^2$); and a \emph{uniform} margin
$\gamma=c/(2(k+1))$, $c<1$, over the whole hard core at $N=O(k^2\log MK)$ (the add/drop
constant is Outlined). For the augmented code the swap instances hold at $\gamma=c/k$,
$c<1$, once $\Lstar\le(1-c)/(27k)$, i.e.\
$N\ge1458\,k^2\log(8(MK)^2/\eta)/(1-c)^2$, with $(K_k)$ joining at margin
$\Theta(1/\sqrt k)$. In every case: logarithmic in the number of concepts, quadratic in
scene size. (Constants from Theorems~\ref{thm:Cswap} and~\ref{thm:Caug}(ii); none
optimized.)
\end{corollary}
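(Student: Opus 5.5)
The corollary is a bookkeeping consequence of the two-sided margin bounds already proved, combined with the primitive-control event of Lemma~\ref{lem:primitives}. I will treat each instance family separately, using the same template. First, identify the proved bound of the form $M\ge m_0-C_0L$, valid on the event that all primitives are at most $L$ and $kL\le1/5$. Second, ask that $m_0-C_0\Lstar\ge\gamma$. Third, invert the definition $\Lstar=\sqrt{2\log(8(MK)^2/\eta)/N}$ to obtain the dimension requirement. Since the union bound in Lemma~\ref{lem:primitives} runs over primitives rather than axiom instances, a single event of probability $1-\eta$ certifies every scene and every edit at once. This is why the requirement is logarithmic in $MK$.

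For the \emph{swap} instances of the plain code, Theorem~\ref{thm:Cswap} gives $M_{\mathrm{swap}}\ge 2/k-19\Lstar$. Imposing $2/k-19\Lstar\ge c/k$ is the same as $\Lstar\le(2-c)/(19k)$. Squaring gives $2\log(8(MK)^2/\eta)/N\le(2-c)^2/(361k^2)$, that is, $N\ge722k^2\log(8(MK)^2/\eta)/(2-c)^2$. This condition also implies $k\Lstar\le2/19<1/5$, so the hypothesis of the theorem is automatically satisfied.

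The \emph{replace} case runs the same way from $M_{\mathrm{repl}}\ge1/k-7L$. It gives $\Lstar\le(1-c)/(7k)$ and hence $N\ge98k^2\log(\cdot)/(1-c)^2$.

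For the augmented code, Theorem~\ref{thm:Caug}(ii) gives $|M_{\mathrm{swap}}-1/k|\le27L$. This yields $\Lstar\le(1-c)/(27k)$ and $N\ge1458k^2\log(\cdot)/(1-c)^2$. Part (i) of the same theorem then adds $(K_k)$ at margin $\Theta(1/\sqrt k)$ on the same event.

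For the \emph{uniform} hard-core claim, I will argue as follows. The minimum over the hard core is the minimum of the swap, replace and add/drop margins. By the ordering recorded in Remark~\ref{rem:negatives}(2), this minimum is the add/drop value, which is at least $1/(2(k+1))-O(L)$. Requiring the $O(L)$ term to be at most $(1-c)/(2(k+1))$ gives $\Lstar=O((1-c)/k)$, and therefore $N=O(k^2\log MK)$.

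The claim that no bound $\gamma=c/k$ with large $c$ can cover the whole family follows from the two-sided nature of the estimates. The add margin is pinned at $1-\sqrt{k/(k+1)}\approx1/(2(k+1))$ from above as well as below. So any uniform $\gamma$ must lie below roughly $1/(2k)$, which rules out $c\ge1/2$ asymptotically.

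The main obstacle is the add/drop step. Its constant rests only on the outlined expansion around orthogonality, so that rate can be stated only as $O(k^2\log MK)$ with an unspecified constant. I will flag it as inheriting the ``Outlined'' status. Everything else is a direct inversion of proved inequalities.
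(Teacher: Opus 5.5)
Your proposal is correct and follows the paper's route. The paper gives no separate proof beyond citing the constants of Theorems~\ref{thm:Cswap} and~\ref{thm:Caug}(ii); your inversion of $\Lstar\le(2-c)/(19k)$, $(1-c)/(7k)$, $(1-c)/(27k)$ into the constants $722$, $98$, $1458$, your check that these conditions imply $k\Lstar\le1/5$, your single primitive-level event for all instances, and your add-margin treatment of the uniform and ``no large $c$'' claims (correctly flagged as inheriting the Outlined add/drop status) are exactly that bookkeeping.
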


The encoder-class landscape is now a three-way dichotomy organized by leakage:
$\ell\approx0$ (factored codes: all axioms satisfiable at $N=O(k^2\log MK)$);
$\ell=\Theta(g)$ with pairing symmetry (additive codes: margin exactly $0$,
Theorem~\ref{thm:B}); $\ell$ uncontrolled (generic learned codes: no guarantee, and
Paper~1 measures trained MLP composition heads at $\ell\approx+0.17$ with below-chance
extrapolation to unseen pairings as the consequence). \emph{It is the code, not the
pooling}---now with a rate.

\begin{remark}[Pooled atom codes have exactly the required invariance]
\label{rem:orderinv}
In the language of Definition~\ref{def:realizations}: a pooled code over \emph{bound
pairs} is a set sum, hence invariant under the order variants within $\mathcal C(s)$
(every order variant of a scene receives the identical embedding---zero intra-class
spread) while separating $\mathcal C(\sigma s)$ at margin $\Theta(1/k)$
(Theorem~\ref{thm:Cswap}), because the swap changes \emph{which pairs} are summed. Pooling
over \emph{words} (the additive code) is invariant under both, fatally
(Remark~\ref{rem:invariance}). Pooling is thus compatible with binding precisely when the
pooled units are the bound pairs---a second, sharper sense of ``it is the code, not the
pooling.''
\end{remark}

\section{Obstruction III: the training distribution}\label{sec:thmD}

The placements of \S\S\ref{sec:possibility}--\ref{sec:thmC} answer ``could vectors exist'';
real encoders are \emph{selected by an objective on data}. This section states, at the level
of precision they currently have, the two data-side laws established in Paper~1, because the
measurements of \S\S\ref{sec:zoo}--\ref{sec:bench} constantly refer to them. Both are
ported from Paper~1: formal statements with proof outlines and quantitative empirical
support, not theorems of the present paper.

\begin{lemma}[What contrastive training optimizes; outline]\label{lem:infonce}
Population InfoNCE with matching pairs $(I,c)\sim p$ and negatives $c'\sim Q$ is minimized
(over unconstrained score functions) at $s^*(I,c)=\log\frac{p(c\mid I)}{Q(c)}+h(I)$ for an arbitrary per-image function $h$
\cite{infonce}. Consequently the objective carries signal for separating $c(s)$ from
$c(\sigma s)$ only insofar as the pair $(p,Q)$ distinguishes them on observed data.
\end{lemma}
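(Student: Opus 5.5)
The plan is to reduce population InfoNCE to a conditional cross-entropy over the \emph{index} of the positive among the candidates, and then apply Gibbs' inequality pointwise. Fix the standard setup: draw $(I,c_0)\sim p$ and $c_1,\dots,c_K\sim Q$ i.i.d.\ and independent of $(I,c_0)$. Place the positive at a uniformly random index $Y\in\{0,\dots,K\}$ among the candidate tuple $\mathbf c=(c_0,\dots,c_K)$; by symmetry of the loss this relabelling changes nothing. The loss is
\[
L(s)=-\mathbb E\log\frac{e^{s(I,c_Y)}}{\sum_{i}e^{s(I,c_i)}} .
\]
This is the expected log-loss of the softmax predictor $q_s(i\mid I,\mathbf c)\propto e^{s(I,c_i)}$ for the label $Y$.

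\emph{Step 1 (the Bayes posterior).} Compute the true posterior of $Y$ given $(I,\mathbf c)$. The joint density of the tuple given $Y=i$ and $I$ is $p(c_i\mid I)\prod_{j\ne i}Q(c_j)$. Dividing by $\prod_j Q(c_j)$ gives
\[
\Pr(Y=i\mid I,\mathbf c)=\frac{p(c_i\mid I)/Q(c_i)}{\sum_j p(c_j\mid I)/Q(c_j)} .
\]
This step needs $p(\cdot\mid I)\ll Q$ on the relevant support, which I will state as a standing assumption.

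\emph{Step 2 (pointwise optimality).} Write $\mathbb E[\,\cdot\mid I,\mathbf c]$ of the log-loss as the entropy of the posterior plus a KL divergence from the posterior to $q_s$. By Gibbs' inequality the loss is minimized over unconstrained $s$ exactly when $q_s$ equals the posterior for almost every $(I,\mathbf c)$. Substituting $s^*=\log\frac{p(c\mid I)}{Q(c)}+h(I)$ achieves this, because a candidate-independent shift cancels in the softmax.

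\emph{Step 3 (characterizing all minimizers).} Conversely, equality of softmaxes forces $s(I,c_i)-\log\frac{p(c_i\mid I)}{Q(c_i)}$ to be the same for all $i$ in almost every tuple. Since the tuple entries are drawn independently and $K\ge1$, a Fubini argument shows this common value cannot depend on the candidates, so it is a function $h(I)$. I expect this a.e./support bookkeeping to be the only delicate point. Outside the support of $Q$ the score is unconstrained, and I would state the claim ``up to null sets.''

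\emph{Step 4 (the consequence).} Evaluate the optimal contrast at the scene's own image:
\[
s^*(I_s,c(s))-s^*(I_s,c(\sigma s))=\log\frac{p(c(s)\mid I_s)}{p(c(\sigma s)\mid I_s)}-\log\frac{Q(c(s))}{Q(c(\sigma s))} .
\]
Here $h$ cancels. If the observed $(p,Q)$ does not separate the two captions, for instance if both conditionals are equal or neither caption is in the support, the optimal contrast is zero or undetermined. So the objective provides no binding signal in that case. This is the qualitative form of what \S\ref{sec:mainthrottle} later quantifies.
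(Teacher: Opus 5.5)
Your proposal is correct and follows essentially the same route the paper relies on: the paper gives no proof of its own beyond citing the InfoNCE result, which is the argument you reproduce (read the loss as cross-entropy for the positive's index, identify the Bayes posterior as a softmax of the ratios $p(c_i\mid I)/Q(c_i)$, conclude by Gibbs' inequality). Your Step~3 (uniqueness up to $h(I)$ via independence of the candidates) and Step~4 (the $h$-free optimal contrast) add detail the paper leaves implicit, but they need the extended-real bookkeeping you already flagged. Where $p(c\mid I)=0$ on the support of $Q$ (the typical case for $c(\sigma s)$ at $I_s$), the minimizer equals $-\infty$, so Step~3 should be phrased for extended-real scores (or as an infimum) rather than by subtracting logarithms there, and in Step~4 equal conditionals give zero contrast only if also $Q(c(s))=Q(c(\sigma s))$.
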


\begin{proposition}[Throttle and coverage; ported]\label{prop:D}
Let $\pi$ denote the rate at which the negative process presents the \emph{swap} of the
positive caption.
\begin{itemize}
\item[(i)] \textbf{(Throttle.)} The \emph{trainable} swap margin---the margin attainable by gradient training on $D$---scales linearly in $\pi$ for small
$\pi$. With \emph{in-batch} negatives (the other captions of the same training batch) over a caption corpus of size $|C|$, $\pi=O(1/|C|)$:
asymptotically no swap signal, and the selected optimum is a bag-of-words-equivalent
solution violating every swap instance of $(C_\gamma)$. \emph{Measured} (Paper~1, real-COCO
$\pi$-sweep): swap accuracy rises $0.62\to0.70$, concavely, as $\pi:0\to1$, with retrieval
metrics flat; a loss-weight dial $\lambda$ is near-equivalent to $\pi$.
\item[(ii)] \textbf{(Coverage.)} For atom pairs never co-observed in training, the loss is
flat in their relative placement: held-out binding is decided by the encoder class alone.
Factored-structure codes generalize along an occupancy law
$\mathrm{acc}(\rho)\approx\tfrac12+\tfrac12(1-e^{-\rho K})(1-e^{-\rho M})$ in the pair-density
$\rho$ (the expected number of training exposures per object--attribute pair), with
threshold at $\rho\approx2$--$3$; additive codes sit at chance
(Theorem~\ref{thm:B}); generic MLP codes extrapolate \emph{below} chance via the leakage
mechanism of Definition~\ref{def:leakage}. \emph{Measured} (Paper~1 five-arm code ablation;
external refit to two published training curves).
\end{itemize}
\end{proposition}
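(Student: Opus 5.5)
The plan is to treat the two parts separately, since they rest on different mechanisms: part~(i) is a statement about the \emph{size} of the gradient signal, and part~(ii) is a statement about the \emph{absence} of one. Both are outlines plus measurement, as the section says. The \emph{Measured} clauses are empirical and are not proved; I would only check that they agree with the derived forms.

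For (i) I would start from Lemma~\ref{lem:infonce}, specialized to the one-negative logistic form of $L(S)$, and condition on the negative type. The loss becomes
\[
L(S)=(1-\pi)\,L_{\mathrm{other}}(S)+\pi\,\mathbb E_s\,\ell\bigl(M_S(s)\bigr).
\]
Key step one: $L_{\mathrm{other}}$ depends on the swap margin only through the implicit collisions. This is because the other negatives come from independent scenes, which lie in a different symbol class except with probability $\pi_{\mathrm{impl}}$. So $L_{\mathrm{other}}$ is essentially minimized by a scorer that is constant on symbol classes. Step two: differentiate in the margin. We have $\partial L/\partial M_S(s)=\pi\,\ell'(M_S(s))+O(\pi_{\mathrm{impl}})$, and $\ell'(0)=-\tfrac12$. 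The restoring force toward positive margin is therefore linear in $\pi$. Under a fixed local curvature of the competing $L_{\mathrm{other}}$ (for example the smoothness cost of \S\ref{sec:frontier}), the stationary margin is therefore $\Theta(\pi)$ at small $\pi$. Concavity at larger $\pi$ would come from saturation of $\ell'$. Step three covers in-batch negatives: the chance that a uniformly drawn batch caption equals $c(\sigma s)$ is at most $1/|C|$, so $\pi=O(1/|C|)$. As $|C|\to\infty$ the swap term vanishes, and the limiting objective is minimized by a class-constant scorer. By Theorem~\ref{thm:B}, that scorer has $M\equiv0$ and violates every swap instance of $(C_\gamma)$. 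The weight dial $\lambda$ multiplies the same swap term, so it enters exactly as $\pi$ does, which explains their near-equivalence.

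For (ii), flatness is immediate. If no sampled positive or negative contains a given unseen pair, then $L$ does not depend on how that pair's code is placed relative to the rest. Any held-out margin is therefore fixed by the encoder class's structural constraints, not by the objective. In the factored class, the code $W(o,a)=u(o)\odot v(a)/\sqrt N$ is determined once $u(o)$ and $v(a)$ are each pinned by \emph{some} training exposure. Model exposures as Poisson: each object occurs in about $\rho K$ pairs and each attribute in about $\rho M$, so the probability that an object is pinned is $1-e^{-\rho K}$, and the probability that an attribute is pinned is $1-e^{-\rho M}$. I would then assume held-out accuracy is $1$ when both factors are pinned and chance otherwise. Treating the two pinning events as independent gives the occupancy law. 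The chance level for additive codes is Theorem~\ref{thm:B}. The below-chance behaviour of MLP codes follows by taking $\ell>0$ in Definition~\ref{def:leakage}, which favours partial matches, and these are exactly the swapped pairings.

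The main obstacle is step two of (i): turning ``gradient linear in $\pi$'' into ``trainable margin linear in $\pi$''. This needs a model of the opposing force, and with unrestricted capacity there is none, so the optimum binds for any $\pi>0$. I would first try a quadratic local model of $L_{\mathrm{other}}$ around the bag-of-words optimum. If that cannot be justified, I would fall back on stating the linearity for the regularized or smoothness-constrained problem only.
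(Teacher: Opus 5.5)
Part~(i) of your proposal proves a different thing from what the paper proves. Part~(ii) has a real gap, and it can be repaired with a result already in the paper.

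For part~(i), you aim at the literal claim that the trained margin is $\Theta(\pi)$. Your route is a local gradient $\pi\,\ell'(M)$ balanced against an assumed curvature. You correctly diagnose that this cannot close without an added opposing force, because with unrestricted capacity the optimum binds for every $\pi>0$ (Lemma~\ref{lem:infonce}). The paper takes that same fact as its starting point and never proves margin linearity. In Appendix~\ref{sec:throttle} it attaches two comparators to any scorer $S$: its class-blind average $\bar S$ and its reflection $\rho S$. It then proves two bounds.
\begin{itemize}
\item $L(\bar S)-L(S)\le(\pi+\pi_{\mathrm{impl}})\log2+e^{-(\Delta-2R)}$. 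On matched contrasts $\bar S$ pays exactly $\ell(0)=\log2$; on unmatched ones, $\Delta$-separation and spread $R$ keep its margin at least $\Delta-2R$.
\item $\bigl|L(\rho S)-L(S)-\pi\,\mathbb E[M_S]\bigr|\le\pi_{\mathrm{impl}}R+2e^{-(\Delta-2R)}$. This comes from the exact identity $\ell(-m)-\ell(m)=m$, which is the global, exact counterpart of your derivative $\ell'(0)=-\tfrac12$.
\end{itemize}
So ``linear dose--response'' becomes linearity of the objective's \emph{preference} for binding, with slope $\mathbb E[M]$. ``The selected optimum is bag-of-words'' becomes $\varepsilon$-optimality of $\bar S$ (Corollary~\ref{cor:throttle}). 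Which solution training actually selects inside the sliver is left open.

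Your route, once a regularizer is added, would also explain the concave shape of the measured curve through saturation of $\ell'$. But it proves a statement about a problem the proposition does not pose. The paper's route needs no model of the opposing force and gives explicit, architecture-free constants, but it says nothing about the size of the trained margin. Likewise, your $|C|\to\infty$ limit only shows that a class-constant scorer is among the minimizers of an objective that is flat in the binding direction. That is a tie, not a selection.

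Part~(ii) has a genuine gap at the step ``$W(o,a)$ is determined once $u(o)$ and $v(a)$ are each pinned.'' By rank-one completion, exposure fixes the factors only up to a gauge per connected component of the exposure graph. For a pair whose atoms lie in different components, the code is therefore known only up to sign. A learner that commits to its arbitrary gauge scores $0.27$--$0.37$ on such pairs, below chance, not $1$. Your step ``accuracy $1$ when both factors are exposed'' holds only for a gauge-invariant decision rule, which is exactly Proposition~\ref{prop:matcher}. Scoring by $|\ip{X}{t_1}|+|\ip{X}{t_2}|$ discards the sign because the correct and swapped candidates' templates have different supports, giving $2-O(k\Lstar)$ against $O(k\Lstar)$. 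With that lemma inserted, your Poisson computation becomes the coupon-collector derivation the paper flags in \S\ref{sec:notlimit}. This is also what makes coverage, rather than connectivity, the threshold.

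One more step in (ii) needs tightening: $\ell>0$ alone only shrinks the swap margin. The swapped caption collects four partial-match terms against the correct caption's two exact matches. Below-chance extrapolation therefore needs the untrained held-out gain to fall below roughly $2\ell$. Alternatively, in the paper's framing (Theorem~\ref{thm:T2}), the reversed scorer is only $\pi\,\mathbb E[M]$ away in loss, so a small leakage bias is enough to select it.
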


The reading that matters downstream: the axioms of Definition~\ref{def:axioms}, pushed
through training, become \emph{conditions on the data distribution}---a positive swap rate
$\pi$ on the contrasted families, and either pair coverage or a factored-class code. Neither
condition holds for standard web-scale contrastive training, which is Paper~1's explanation
of the status quo. The present paper asks what remains \emph{after} those conditions are
met; the answer is the frontier. (Appendix~\ref{sec:throttle} upgrades part (i) of the
proposition to proved, architecture-free theorems.)

\section{The smoothness--binding frontier}\label{sec:frontier}

\subsection{The two-jobs tension}\label{sec:twojobs}

An embedding space in deployment serves two masters:
\begin{itemize}
\item \textbf{Semantic similarity.} Paraphrases and near-variants should embed close
  together---this is what retrieval, similarity search, and deduplication consume, and what
  similarity benchmarks reward.
\item \textbf{Binding.} $c(s)$ and $c(\sigma s)$ should embed far apart---this is what swap
  benchmarks reward.
\end{itemize}
The tension is that a swap is a \emph{minimal} semantic edit: same words, same objects, same
attributes, different wiring. The SugarCrepe++ literature \cite{sugarcrepepp} documents the
dissociation empirically (strong hard-negative performance need not transfer to
paraphrase consistency, and several compositionality fine-tunes degrade on it), but to
our knowledge no quantitative statement existed. The
frontier theorem makes the trade-off exact.

\subsection{Anchors, smoothness, and the guard against tautology}

\begin{definition}[Pairing-neutral anchor; $\delta$-smoothness]\label{def:anchor}
Fix a scene $s$. A \emph{pairing-neutral anchor} is any unit vector $\bar t_s$ that depends
only on the \emph{symbol multiset} of $s$ (Definition~\ref{def:scene})---not on the pairing. (Examples: the embedding of
the bag-of-atoms caption ``a photo of a car and a dog, red and blue''; the mean embedding of
a paraphrase family generated without pairing information.) The caption map is
\emph{$\delta$-smooth at $s$ with respect to $\bar t_s$} if both pairings lie in the
spherical cap of cosine-radius $1-\delta$ about it:
\[
\ip{\txt(c(s))}{\bar t_s}\ \ge\ 1-\delta
\qquad\text{and}\qquad
\ip{\txt(c(\sigma s))}{\bar t_s}\ \ge\ 1-\delta .
\]
\end{definition}

Why the anchor must be pairing-neutral---and why the definition would otherwise be
vacuous---is exposed by the following computation, which we will reuse twice.

\begin{lemma}[Midpoint identity]\label{lem:midpoint}
Let $t_1,t_2\in\Sph$, $d:=\norm{t_1-t_2}$, and let
$\bar m:=(t_1+t_2)/\norm{t_1+t_2}$ be their normalized midpoint. Then both $t_i$ satisfy
$\ip{t_i}{\bar m}=1-\delta_{\mathrm{mid}}$ with
\[
d\;=\;2\sqrt{2\delta_{\mathrm{mid}}}\cdot\sqrt{1-\delta_{\mathrm{mid}}/2}.
\]
\end{lemma}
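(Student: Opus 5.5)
The plan is a direct computation in the plane spanned by $t_1$ and $t_2$. The case $t_1=-t_2$ is excluded, since there $\bar m$ is undefined, so $t_1+t_2\neq0$. Write $c:=\ip{t_1}{t_2}\in(-1,1]$. The two norms we need then follow from the unit-norm condition alone:
\[
\norm{t_1+t_2}^2 = 2+2c,\qquad d^2=\norm{t_1-t_2}^2 = 2-2c .
\]

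First I will show that the two inner products with the anchor coincide. Because $\ip{t_1}{t_1+t_2}=1+c=\ip{t_2}{t_1+t_2}$, we get
\[
\ip{t_i}{\bar m}=\frac{1+c}{\sqrt{2+2c}}=\sqrt{\frac{1+c}{2}}\qquad(i=1,2).
\]
This shows both $t_i$ have the same cosine with $\bar m$. Define $\delta_{\mathrm{mid}}$ as one minus that common value, so $1-\delta_{\mathrm{mid}}=\sqrt{(1+c)/2}$.

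Next I will eliminate $c$. Squaring the last display gives $(1-\delta_{\mathrm{mid}})^2=(1+c)/2$, hence $1-c=2-2(1-\delta_{\mathrm{mid}})^2$. Substituting into $d^2=2(1-c)$ yields
\[
d^2 = 4\bigl(1-(1-\delta_{\mathrm{mid}})^2\bigr)=4\,\delta_{\mathrm{mid}}(2-\delta_{\mathrm{mid}})=8\,\delta_{\mathrm{mid}}\bigl(1-\delta_{\mathrm{mid}}/2\bigr).
\]
Taking nonnegative square roots gives $d=2\sqrt{2\delta_{\mathrm{mid}}}\sqrt{1-\delta_{\mathrm{mid}}/2}$. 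This is legitimate because $\delta_{\mathrm{mid}}\in[0,1)$, so the factor $1-\delta_{\mathrm{mid}}/2$ is positive.

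No step is a real obstacle. The only points needing care are the degenerate antipodal case, which is excluded by the definition of $\bar m$, and the sign when taking square roots. It is worth noting in passing how this identity will be reused. It shows that the midpoint, which is a pairing-\emph{dependent} anchor, always makes the pair $\delta_{\mathrm{mid}}$-smooth with $d\approx2\sqrt{2\delta_{\mathrm{mid}}}$. This is both the tautology that the pairing-neutral requirement guards against and the extremal configuration behind the tightness claim of Theorem~\ref{thm:main-frontier}.
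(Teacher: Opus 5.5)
Your proof is correct and is essentially the paper's own computation. The paper writes $\ip{t_i}{\bar m}=\norm{t_1+t_2}/2$ and factors $d^2=4-\norm{t_1+t_2}^2$ as a difference of squares, while you parametrize by $c=\ip{t_1}{t_2}$ and eliminate it — the same algebra in a different variable — and your handling of the antipodal case and of $\delta_{\mathrm{mid}}\in[0,1)$ makes explicit what the paper leaves implicit.
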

\begin{proof}
$\ip{t_i}{\bar m}=\frac{1+\ip{t_1}{t_2}}{\norm{t_1+t_2}}=\frac{\norm{t_1+t_2}}{2}$, so
$2\delta_{\mathrm{mid}}=2-\norm{t_1+t_2}$. Then
$d^2=2-2\ip{t_1}{t_2}=4-\norm{t_1+t_2}^2
=(2-\norm{t_1+t_2})(2+\norm{t_1+t_2})=2\delta_{\mathrm{mid}}(4-2\delta_{\mathrm{mid}})$.
\end{proof}

So \emph{against its own midpoint, every pair sits exactly on the curve
$d=2\sqrt{2\delta}$ (to leading order)}. If the anchor were allowed to depend on the pair,
the theorem below would be an identity with zero content. The force of
Definition~\ref{def:anchor} is the quantifier: the bound holds for \emph{every}
pairing-neutral anchor, so exhibiting any \emph{one} independently defined anchor near both
variants yields a genuine, non-circular ceiling. Measurement protocols must (and ours do)
use an anchor constructed without reference to the pairing.

\subsection{The theorem}

\begin{theorem}[Smoothness--binding frontier]\label{thm:E}
Let $s$ be a scene and suppose the caption map is $\delta$-smooth at $s$ with respect to
some pairing-neutral anchor $\bar t_s$.
\begin{itemize}
\item[(i)] \textbf{(Upper bound.)} For \emph{every} image map, every dimension $N$, and
every training procedure,
\[
M(s)\;\le\;\norm{\txt(c(s))-\txt(c(\sigma s))}\;\le\;2\sqrt{2\delta}.
\]
\item[(ii)] \textbf{(Tightness.)} For every $\delta\in(0,1)$ there exist unit vectors
$t_1,t_2$, a pairing-neutral anchor, and a unit image vector achieving
\[
M(s)\;=\;2\sqrt{2\delta}\cdot\sqrt{1-\delta/2}\;=\;2\sqrt{2\delta}\,\bigl(1-O(\delta)\bigr),
\]
so the rate $\sqrt\delta$ and the constant $2\sqrt2$ are exact.
\end{itemize}
\end{theorem}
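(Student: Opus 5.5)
For (i), the first inequality is Cauchy--Schwarz. Since $\img(I_s)$ is a unit vector, $M(s)=\ip{\img(I_s)}{t_1-t_2}\le\norm{t_1-t_2}$, where $t_1:=\txt(c(s))$ and $t_2:=\txt(c(\sigma s))$. No property of the image map, the dimension, or the training is used, which accounts for the universal quantifiers in the statement.

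For the second inequality I would use the triangle inequality through the anchor, $\norm{t_1-t_2}\le\norm{t_1-\bar t_s}+\norm{t_2-\bar t_s}$. For unit vectors, $\norm{t_i-\bar t_s}^2=2-2\ip{t_i}{\bar t_s}\le2\delta$, so each term is at most $\sqrt{2\delta}$. This gives the bound $2\sqrt{2\delta}$. The only hypothesis used is that both captions lie in the cap, so pairing-neutrality of the anchor plays no role in the inequality itself. It matters only for the non-circularity of measurements, as discussed after Lemma~\ref{lem:midpoint}.

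For (ii), the natural construction is the extremal configuration of the triangle step: $t_1,t_2$ placed symmetrically about the anchor on opposite sides. Concretely, in a 2-plane take $\bar t_s=e_1$ and $t_{1,2}=(1-\delta)e_1\pm\sqrt{1-(1-\delta)^2}\,e_2$. Both are unit vectors with $\ip{t_i}{\bar t_s}=1-\delta$ exactly, and $\bar t_s$ is their normalized midpoint. Lemma~\ref{lem:midpoint} then gives $d=2\sqrt{2\delta}\sqrt{1-\delta/2}$; equivalently, $d=2\sqrt{1-(1-\delta)^2}=2\sqrt{\delta(2-\delta)}$. Taking $\img(I_s):=(t_1-t_2)/\norm{t_1-t_2}=e_2$ makes Cauchy--Schwarz an equality, so $M(s)=d$. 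Expanding $\sqrt{1-\delta/2}=1-O(\delta)$ shows that both the rate and the constant are exact.

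The one point needing care is the legitimacy of the anchor in (ii). The anchor must be pairing-neutral, meaning a function of the symbol multiset alone. It coincides with the midpoint here, but that is harmless: the placement is free, so I simply declare $\bar t_s:=e_1$ as the value assigned to this symbol multiset (for instance, as the embedding of the atom-list caption), and then place the two pairings around it. Because the anchor is fixed before the pairings are placed, no circularity arises. I would add a one-line remark to this effect. I expect this remark to be the only step requiring any thought; the rest is routine.
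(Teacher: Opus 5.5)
Your proposal is correct and follows the paper's proof essentially step for step. For (i) you both use Cauchy--Schwarz plus the triangle inequality through the anchor, via $\norm{x-\bar t_s}^2=2-2\ip{x}{\bar t_s}\le 2\delta$. For (ii) you use the same symmetric two-plane configuration $t_{1,2}=(1-\delta)\bar t_s\pm\sqrt{1-(1-\delta)^2}\,w$ with image vector $w$, and you treat the anchor the same way the paper does: declared as a fixed vector, realizable as the atom-list caption's embedding.
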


\begin{proof}
(i) First inequality: Cauchy--Schwarz with $\norm{\img(I_s)}=1$. Second: for any unit $x$
with $\ip{x}{\bar t_s}\ge1-\delta$,
\[
\norm{x-\bar t_s}^2=2-2\ip{x}{\bar t_s}\le2\delta,
\]
so both variants lie within $\sqrt{2\delta}$ of the anchor, and the triangle inequality
through $\bar t_s$ gives $\norm{\txt(c(s))-\txt(c(\sigma s))}\le2\sqrt{2\delta}$.
(The exact diameter of the cap is $2\sqrt{2\delta-\delta^2}$; we quote the clean form.)

(ii) Work in a $2$-plane spanned by orthonormal $\bar t,w$; put
$t_{1,2}=\cos\theta\,\bar t\pm\sin\theta\,w$ with $\cos\theta=1-\delta$. Declare $\bar t$
the anchor (it is a fixed vector; in the full-system construction of
Proposition~\ref{prop:dichotomy}(ii) it can be realized as the embedding of the atom-list
caption, which carries no pairing information). Both variants satisfy the smoothness
condition with equality (Lemma~\ref{lem:midpoint} is this configuration seen from the midpoint; Figure~\ref{fig:frontier} draws the cap geometry and the induced Pareto region). Take the image vector $\img:=w$. Then
\[
M(s)=\ip{w}{t_1-t_2}=2\sin\theta=2\sqrt{1-(1-\delta)^2}=2\sqrt{2\delta-\delta^2}. \qedhere
\]
\end{proof}

\begin{corollary}[Per-item ceiling; anchor-free]\label{cor:ceiling}
For every scene and every image map,
\[
\bigl|M(s)\bigr|\;\le\;d(s):=\norm{\txt(c(s))-\txt(c(\sigma s))}.
\]
\end{corollary}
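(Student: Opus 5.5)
The plan is to apply Cauchy--Schwarz directly to the definition of the binding margin (Definition~\ref{def:margin}), with no smoothness hypothesis and no anchor. This is the first inequality of Theorem~\ref{thm:E}(i), upgraded to an absolute value. Write $u:=\img(I_s)\in\Sph$ and $\Delta:=\txt(c(s))-\txt(c(\sigma s))$, so that $M(s)=\ip{u}{\Delta}$.

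Cauchy--Schwarz gives $|\ip{u}{\Delta}|\le\norm{u}\,\norm{\Delta}$. Since every image embedding is a unit vector (Definition~\ref{def:encoder}), $\norm{u}=1$, and therefore $|M(s)|\le\norm{\Delta}=d(s)$. This holds for every scene and every image map, because the only property of $\img$ used is that its values lie on the sphere.

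Nothing here is a real obstacle. The one point to state explicitly is why the bound is called anchor-free. Unlike Theorem~\ref{thm:E}, it invokes neither the smoothness hypothesis nor the triangle inequality through $\bar t_s$. The right-hand side $d(s)$ depends only on the text encoder, which is why the bound can serve as a text-only per-item ceiling.

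I would also note two facts in a remark. First, the absolute value shows the bound controls below-chance binding as well: $M(s)\ge -d(s)$. Second, the bound is attained by taking $\img(I_s)=\Delta/\norm{\Delta}$ whenever $d(s)>0$, so the constant $1$ cannot be improved. Finally, chaining the corollary with the second inequality of Theorem~\ref{thm:E}(i) recovers $|M(s)|\le 2\sqrt{2\delta}$ under $\delta$-smoothness.
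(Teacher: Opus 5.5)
Your proof is correct and is exactly the paper's argument: the corollary is the Cauchy--Schwarz half of Theorem~\ref{thm:E}(i), using only $\norm{\img(I_s)}=1$, and the absolute value comes for free from $|\ip{u}{\Delta}|\le\norm{u}\,\norm{\Delta}$. Your side remarks are also correct: the bound is two-sided, it is attained at $\img(I_s)=\Delta/\norm{\Delta}$, and chaining it with $d(s)\le 2\sqrt{2\delta}$ recovers the frontier.
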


\begin{figure}[t]
\centering
\begin{minipage}[c]{0.40\textwidth}
\centering
\begin{tikzpicture}[scale=1.28, font=\small,
  arr/.style={-{Stealth[length=2mm]}, black!70}]
\draw[black!40] (0,0) circle (1.6);
\coordinate (O) at (0,0);
\draw[very thick, black!85] (56:1.6) arc (56:124:1.6);
\draw[arr] (O) -- (90:1.6) node[above, yshift=1pt] {$\bar t_s$};
\draw[black!50, dotted] (O) -- (56:1.6);
\draw[black!50, dotted] (O) -- (124:1.6);
\fill[blue!65!black]   (68:1.6) circle (1.3pt) (74:1.6) circle (1.3pt) (80:1.6) circle (1.3pt);
\fill[orange!85!black] (103:1.6) circle (1.3pt) (110:1.6) circle (1.3pt) (117:1.6) circle (1.3pt);
\node[blue!65!black,   font=\scriptsize] at (60:1.95) {$\mathcal C(s)$};
\node[orange!85!black, font=\scriptsize] at (121:1.92) {$\mathcal C(\sigma s)$};
\draw[dashed, black!70] (80:1.6) -- (103:1.6);
\node[font=\scriptsize, black!80] at (91.5:1.22) {$d\le 2\sqrt{2\delta}$};
\node[font=\scriptsize, black!55, align=center] at (0,-0.55)
  {cap $\ip{x}{\bar t_s}\ge 1-\delta$\\ (thick arc)};
\node[font=\small] at (0,-2.2) {(a)};
\end{tikzpicture}
\end{minipage}\hfill
\begin{minipage}[c]{0.56\textwidth}
\centering
\begin{tikzpicture}
\begin{axis}[width=7.4cm, height=6.4cm, axis lines=left,
  xlabel={smoothness gap $\delta$}, ylabel={binding margin $M(s)$},
  xmin=0, xmax=0.5, ymin=0, ymax=2.05,
  xtick={0,0.1,0.2,0.3,0.4,0.5}, tick label style={font=\scriptsize},
  label style={font=\scriptsize}, domain=0:0.5, samples=120]
\addplot[name path=front, thick, black] {2*sqrt(2*x)};
\path[name path=xaxis] (axis cs:0,0) -- (axis cs:0.5,0);
\addplot[black!8] fill between[of=front and xaxis];
\node[font=\scriptsize, black!70] at (axis cs:0.36,0.72) {achievable};
\node[font=\scriptsize, black!70, align=center] at (axis cs:0.14,1.80)
  {forbidden: $M>2\sqrt{2\delta}$};
\node[font=\scriptsize, anchor=south, rotate=27] at (axis cs:0.30,1.50)
  {frontier $2\sqrt{2\delta}$};
\fill[black] (axis cs:0,0) circle (1.4pt);
\node[font=\scriptsize, anchor=west] at (axis cs:0.012,0.10) {$\delta=0$: exact collapse};
\addplot[only marks, mark=*, mark size=1.8pt, blue!65!black] coordinates {(0.112,0.237)};
\addplot[only marks, mark=o, mark size=1.8pt, black!60] coordinates {(0.112,0.947)};
\draw[dashed, black!60] (axis cs:0.112,0.237) -- (axis cs:0.112,0.947);
\node[font=\scriptsize, anchor=west, blue!65!black] at (axis cs:0.128,0.26) {CLIP B/32: $d$};
\node[font=\scriptsize, anchor=west, black!60] at (axis cs:0.128,0.93) {its cap};
\end{axis}
\node[font=\small] at (3.2,-1.45) {(b)};
\end{tikzpicture}
\end{minipage}
\caption{The smoothness--binding frontier. \textbf{(a)} Proof geometry of
Theorem~\ref{thm:E}(i) (drawn as a great-circle section): both caption classes lie in the
spherical cap of cosine-radius $1-\delta$ about the pairing-neutral anchor $\bar t_s$; any
two points of the cap are at distance $\le2\sqrt{2\delta}$, and by Cauchy--Schwarz no unit
image vector can extract a larger margin from them
(Corollary~\ref{cor:classfrontier} is exactly this picture).
\textbf{(b)} The induced Pareto region: pairs $(\delta,M)$ below the curve are achievable
(Theorem~\ref{thm:E}(ii) saturates it), pairs above are impossible for \emph{any} encoder,
dimension, or training. Kang et al.'s exact collapse is the $\delta=0$ endpoint. The
measured CLIP~B/32 point sits at $25\%$ of its cap---far inside; the dashed segment is the
unused smoothness budget (\S\ref{sec:zoo}). The open circle sits on the curve at
$2\sqrt{2\cdot0.112}=0.947$; the table of \S\ref{sec:zoo} reports the mean of per-quartet
caps, $0.936$---a Jensen gap.}
\label{fig:frontier}
\end{figure}

The corollary is trivial (it is the Cauchy--Schwarz half alone) but operationally central:
$d(s)$ is computable \emph{from the text tower alone}, yet it caps the margin of the full
image--text system on that item. \S\ref{sec:bench} tests precisely this quantity on a real
benchmark.

\begin{corollary}[Class-level frontier]\label{cor:classfrontier}
Suppose the caption map is $\delta$-smooth \emph{as a class} at $s$: every realization in
$\mathcal C(s)\cup\mathcal C(\sigma s)$ lies within cosine $1-\delta$ of a pairing-neutral
anchor $\bar t_s$. Then for every image realization $I\in\mathcal I(s)$ and every
cross-class pair $(c^{+},c')\in\mathcal C(s)\times\mathcal C(\sigma s)$,
\[
\ip{\img(I)}{\txt(c^{+})-\txt(c')}\;\le\;2\sqrt{2\delta}.
\]
\emph{(Proof: verbatim Theorem~\ref{thm:E}(i) applied to the pair.)} This is the formal
version of the two-jobs tension of \S\ref{sec:twojobs}: the similarity job asks each class to be
\emph{compact} (small spread around its meaning), the binding job asks the two classes to
be \emph{separated}---and both classes share the lexical neighborhood of the same anchor,
so compactness of the union caps the separation. The anchor-as-paraphrase-centroid used in
the measurements of \S\ref{sec:zoo} is the natural instance, not a convenience.
\end{corollary}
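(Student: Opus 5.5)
The plan is to reduce the statement to the single-pair argument of Theorem~\ref{thm:E}(i), applied once for each admissible choice of $(I,c^{+},c')$. Fix an image realization $I\in\mathcal I(s)$ and a cross-class pair $(c^{+},c')\in\mathcal C(s)\times\mathcal C(\sigma s)$, and write $t_1:=\txt(c^{+})$, $t_2:=\txt(c')$, $x:=\img(I)$. All three are unit vectors.

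\textbf{Step 1: Cauchy--Schwarz.} Since $\norm{x}=1$, Cauchy--Schwarz gives
\[
\ip{x}{t_1-t_2}\le\norm{t_1-t_2}.
\]
This is the anchor-free ceiling of Corollary~\ref{cor:ceiling}. It holds for any realizations, not only the canonical representatives, because that argument used nothing about $c(s)$ beyond its being a caption.

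\textbf{Step 2: cap geometry.} By the class-level smoothness hypothesis, both $c^{+}$ and $c'$ lie in $\mathcal C(s)\cup\mathcal C(\sigma s)$. Hence $\ip{t_i}{\bar t_s}\ge 1-\delta$ for $i=1,2$. Then
\[
\norm{t_i-\bar t_s}^2=2-2\ip{t_i}{\bar t_s}\le 2\delta,
\]
and the triangle inequality through $\bar t_s$ yields $\norm{t_1-t_2}\le 2\sqrt{2\delta}$.

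Chaining Steps 1 and 2 gives the claim. The bound is uniform over all $I$ and all cross-class pairs because neither step depends on which realizations were chosen.

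The only point needing care is bookkeeping rather than analysis. The anchor $\bar t_s$ must be the same vector for both classes. This holds because $\bar t_s$ depends only on the symbol multiset, which $s$ and $\sigma s$ share. The hypothesis is stated for the union of the two classes, so a single anchor serves every pair. I expect no genuine obstacle.
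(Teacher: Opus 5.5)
Your proposal is correct and matches the paper's proof, which is exactly Theorem~\ref{thm:E}(i) (Cauchy--Schwarz, then the cap bound $\norm{t_i-\bar t_s}^2\le2\delta$ and the triangle inequality through $\bar t_s$) applied to an arbitrary realization pair. Your remark on the anchor is harmless but not needed: the class-level hypothesis already supplies one anchor for the whole union $\mathcal C(s)\cup\mathcal C(\sigma s)$.
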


\begin{remark}[Scope: what the theorem does and does not say]\label{rem:scope}
(1) The bound is \emph{per scene} and \emph{anchor-relative}; its empirical content on a
model is only as strong as the independence of the anchor used to measure $\delta$.
(2) Being far inside the frontier is not a virtue and being on it is not a vice: the theorem
constrains the \emph{jointly achievable} pairs $(\delta,M)$, and where a model should sit
depends on its deployment mix of the two jobs.
(3) Nothing is claimed about \emph{which} models approach the bound---that is an empirical
question, answered in \S\ref{sec:zoo}: none do, yet.
(4) The exact-collapse claim of Kang et al.\ is recovered as the endpoint $\delta=0$
(anchor equal to both variants forces $M(s)\le0$); the frontier is its finite-$\delta$,
two-sided completion.
\end{remark}

\subsection{The diagnostic}

Theorem~\ref{thm:E} induces a text-only measurement. For a model and a scene family:
\[
d\;=\;\norm{\txt(c(s))-\txt(c(\sigma s))},\qquad
\delta\;=\;1-\min\bigl(\ip{\txt(c(s))}{\bar t_s},\,\ip{\txt(c(\sigma s))}{\bar t_s}\bigr),
\]
\[
\mathrm{cap}\;=\;2\sqrt{2\delta},\qquad
\text{\emph{frontier usage}}\;=\;\frac{d}{\mathrm{cap}}\in[0,1].
\]
Usage $\approx1$: the model spends its entire smoothness budget on separating the pairings
---binding is \emph{smoothness-limited}, and further binding gains must be paid for in
paraphrase geometry. Usage $\ll1$: the ceiling is far away, and binding is limited by
something else---by Paper~1's results, by the code and the data. Which regime deployed
models occupy is an empirical question.

\section{Measurement I: the frontier across the model zoo}\label{sec:zoo}

\subsection{Protocol}

Scene family: a \emph{quartet} is a choice of two objects and two attributes,
determining a $2$-scene and its swap; all $\binom{10}{2}\binom{6}{2}=45\cdot15=675$
quartets from $10$ objects $\times$ $6$ color attributes
(2-object scenes; captions ``a \{$a_1$\} \{$o_1$\} and a \{$a_2$\} \{$o_2$\}'' and the
attribute swap). Anchor: the embedding of the pairing-neutral atom-list caption
``a photo of a \{$o_1$\} and a \{$o_2$\}, \{$a_1$\} and \{$a_2$\}''---constructed from the
symbol multiset only. Every model receives the identical quartet grid and anchor template;
text towers are $\ell_2$-normalized (the geometry of Definition~\ref{def:encoder}).
In Table~\ref{tab:zoo}, `ft' abbreviates fine-tune, `LoRA-$4$' a rank-$4$ low-rank
adapter, CC3M the Conceptual Captions 3M corpus, and the `max' column is the model's
largest per-quartet usage. Aggregate usage is $\overline d/\overline{\mathrm{cap}}$ with a $2000$-resample bootstrap
$95\%$ CI. Implementation checks for NegCLIP and the LoRA models (SVLC \cite{tsvlc};
DAC \cite{dac}) are in \S\ref{sec:zoo-rigor}.

\subsection{Results}

The full grid results appear as Table~\ref{tab:zoo} in the main text;
Figure~\ref{fig:zoo} plots them.

\subsection{Findings}

\begin{enumerate}
\item \textbf{A universal band, far inside the frontier} (Figure~\ref{fig:zoo}). Thirteen base models spanning
three training objectives (InfoNCE softmax, sigmoid, and ALIGN's noisy InfoNCE), five
pretraining corpora, two text architectures (GPT-style Transformer, BERT), and a
$10\times$ scale range all land at usage $0.25$--$0.32$. \emph{No base model is within
$3\times$ of the ceiling} (NegCLIP, at usage $0.349$, reaches $2.9\times$). Binding failure in the current zoo is therefore \emph{not}
smoothness-limited---consistent with, and required by, Paper~1's account (code and data are
the binding constraints today). The frontier is the \emph{terminal} obstruction: the one
that will remain once those are fixed.
\item \textbf{Scale moves models along the frontier, not toward it.} Larger towers increase
$d$ and cap together (CLIP B/32 $\to$ L/14: $d$ $0.237\to0.293$, cap $0.936\to1.131$),
leaving usage nearly fixed.
\item \textbf{Real caption geometry traces the predicted direction.} Within every model,
quartets with larger $d$ sit at larger $\delta$ (correlation $+0.10$ to $+0.82$, positive
in all 18 models)---directionally consistent with the frontier's $d$--$\delta$ coupling
(no exponent was fitted).
\item \textbf{Mechanism separation in the fine-tuned family.} Usage can rise two ways:
raising $d$ (more binding signal) or shrinking the cap (smoother anchors, lower ceiling).
Only NegCLIP---a \emph{full} fine-tune with caption-level swap negatives, i.e.\ Paper~1's
throttle applied in the wild---raises $d$ ($0.237\to0.320$ against its architecture-matched
baseline at flat $\delta$), climbing \emph{toward} the frontier ($0.253\to0.349$, disjoint
CIs). Three of the four LoRA rank-4 compositional fine-tunes (DAC-SAM, both SVLC) leave
$d$ at or below baseline and move usage through $\delta$; DAC-LLM raises $d$ modestly
($0.262$) at flat $\delta$. \emph{Caveat}: NegCLIP differs from the LoRA
models in tuning depth, data, and negative type simultaneously; the adapter-capacity
explanation is a hypothesis, not a conclusion.
\end{enumerate}

\subsection{Measurement rigor}\label{sec:zoo-rigor}

Three checks guard the comparisons. (i) \emph{Architecture matching}: NegCLIP fine-tunes
OpenAI's QuickGELU ViT-B/32; loading it into a GELU architecture silently corrupts outputs,
so both it and its baseline use the QuickGELU graph. (ii) \emph{Framework neutrality}: the
OpenAI baseline evaluated through the open\_clip stack reproduces the HuggingFace-stack
number to four decimals (usage $0.2534$ in both). (iii) \emph{LoRA correctness gate}: DAC
and SVLC ship as rank-4 LoRA adapters whose fused-QKV forward cannot be reproduced by naive
weight merging; we run the authors' own fork, and verify that \emph{zeroing the adapters
reproduces the baseline} (usage $0.255$ on the $150$-quartet gate subset, equal to the
fork's own baseline there; the full-grid baseline is $0.253$)---so the reported deltas are
attributable to the adapters and not to the loading path.

\section{Measurement II: the ceiling on a real benchmark}\label{sec:bench}

Corollary~\ref{cor:ceiling} says $|M(s)|\le d(s)$ \emph{per item}, with $d(s)$ computable
from text alone. If the theory describes reality, $d$ should behave like a ceiling on real
benchmark decisions: not a predictor of success, but a bound---low-$d$ items \emph{must}
sit near chance (their margin is squeezed to noise), high-$d$ items are \emph{permitted}
to be solved. We test this on SugarCrepe \cite{sugarcrepe}.

\subsection{Protocol}

All five \textsc{swap}/\textsc{replace} subsets of SugarCrepe (add omitted; Remark~\ref{rem:negatives}): $4757$ items, each an image
with a positive caption and one hard negative; $1542$ distinct COCO val2017 images.
Fourteen of the eighteen models (four skipped for CPU compute: OpenCLIP L/14 and H/14,
SigLIP-large, and CLIP L/14-336, whose text tower duplicates CLIP L/14; the scale axis is
covered by CLIP L/14). For each model and item we record the actual decision
quantity and its text-side ceiling:
\[
\mathrm{margin}=\ip{\img(I)}{\txt(c_{\mathrm{pos}})}-\ip{\img(I)}{\txt(c_{\mathrm{neg}})},
\qquad
d=\norm{\txt(c_{\mathrm{pos}})-\txt(c_{\mathrm{neg}})},
\]
correct iff margin $>0$. Sanity anchor: CLIP B/32 reproduces published SugarCrepe accuracy
($0.775$ overall; swap-object $0.61$). In total $14\times4757=66{,}598$ decisions.

\subsection{Three levels of validation}

\textbf{Subset level: $d$ explains the benchmark's difficulty ordering ($r=0.990$;
Figure~\ref{fig:bench}b).} Pooling across models:

\begin{center}
\small
\begin{tabular}{lcc}
\toprule
SugarCrepe subset & mean $d$ & accuracy \\
\midrule
swap-object      & 0.266 & 0.649 \\
swap-attribute   & 0.305 & 0.684 \\
replace-relation & 0.339 & 0.701 \\
replace-attribute& 0.410 & 0.828 \\
replace-object   & 0.490 & 0.929 \\
\bottomrule
\end{tabular}
\end{center}

\noindent Pearson correlation of subset mean-$d$ with subset accuracy: $0.990$. The reading
is structural: swap negatives use the \emph{identical vocabulary} as the positive
(Definition~\ref{def:negatives}), so their $d$ is intrinsically small; replace negatives
change a word, so their $d$ is large. \emph{The canonical observation that swaps are the
hard subsets of SugarCrepe is, to within $r=0.99$, the statement that swaps are the
small-ceiling subsets.}

\begin{figure}[t]
\centering
\begin{adjustbox}{max width=\linewidth}\begin{tikzpicture}
\begin{axis}[width=7.0cm, height=5.8cm, axis lines=left,
  xlabel={within-model $d$-decile}, ylabel={$P(\mathrm{correct})$},
  xmin=0.5, xmax=10.5, ymin=0.4, ymax=1.0, xtick={1,2,...,10},
  tick label style={font=\scriptsize}, label style={font=\scriptsize},
  title={(a) item level: the ceiling curve}, title style={font=\small}]
\addplot[thick, mark=*, mark size=1.5pt, red!70!black] coordinates
  {(1,0.58)(2,0.65)(3,0.69)(4,0.77)(5,0.82)(6,0.85)(7,0.87)(8,0.90)(9,0.91)(10,0.93)};
\addplot[dashed, black!60] coordinates {(0.5,0.5)(10.5,0.5)};
\node[font=\scriptsize, black!60, anchor=west] at (axis cs:6.6,0.53) {chance};
\end{axis}
\begin{axis}[at={(7.9cm,0)}, width=7.0cm, height=5.8cm, axis lines=left,
  xlabel={subset mean $d$}, ylabel={subset accuracy},
  xmin=0.22, xmax=0.56, ymin=0.60, ymax=0.98,
  tick label style={font=\scriptsize}, label style={font=\scriptsize},
  title={(b) subset level: difficulty ordering}, title style={font=\small}]
\addplot[only marks, mark=*, mark size=2pt, green!45!black] coordinates
  {(0.266,0.649)(0.305,0.684)(0.339,0.701)(0.410,0.828)(0.490,0.929)};
\node[font=\scriptsize, anchor=west] at (axis cs:0.274,0.649) {swap-obj};
\node[font=\scriptsize, anchor=south] at (axis cs:0.298,0.690) {swap-att};
\node[font=\scriptsize, anchor=west] at (axis cs:0.347,0.701) {replace-rel};
\node[font=\scriptsize, anchor=south east] at (axis cs:0.408,0.832) {replace-att};
\node[font=\scriptsize, anchor=south east] at (axis cs:0.488,0.933) {replace-obj};
\node[font=\scriptsize, black!70] at (axis cs:0.29,0.93) {$r=0.990$};
\end{axis}
\end{tikzpicture}\end{adjustbox}
\caption{The per-item ceiling (Corollary~\ref{cor:ceiling}) on SugarCrepe, $14$ models,
$66{,}598$ decisions. \textbf{(a)} Accuracy by within-model $d$-decile, pooled: monotone
from near-chance to $0.93$; $73\%$ of all errors lie below the median $d$. The low end is
the theorem (small $d\Rightarrow$ margin within noise); the high end is
permitted-and-observed, not implied. \textbf{(b)} Pooled subset accuracy against subset
mean $d$: the benchmark's canonical difficulty ordering---swaps hardest---is the
statement that same-vocabulary negatives have small ceilings.}
\label{fig:bench}
\end{figure}

\medskip
\textbf{Item level: a monotone ceiling curve} (Figure~\ref{fig:bench}a). Within each
model, rank items by $d$ into deciles; pool the accuracy of each decile across models
($66{,}598$ decisions):
\[
0.58,\;0.65,\;0.69,\;0.77,\;0.82,\;0.85,\;0.87,\;0.90,\;0.91,\;0.93
\qquad\text{(deciles $1\to10$)} .
\]
Monotone from near-chance to near-ceiling, $+35$ points. Errors localize accordingly:
$73\%$ of all incorrect decisions occur on below-median-$d$ items. We stress the logically
correct, one-sided reading: the low end is the theorem \emph{plus a noise model} (small
$d$ caps $|M|$ by Corollary~\ref{cor:ceiling}; near-chance additionally assumes that
sub-ceiling margins are sign-symmetric---an empirical regularity, not part of the
corollary) (small $d$ $\Rightarrow$
margin within noise $\Rightarrow$ near-chance); the high end is \emph{permitted-and-
observed}, not implied---$d$ is necessary, not sufficient.

\medskip
\textbf{Model level: the ceiling logic, not naive correlation.} Correlating model-level
predictors with model accuracy ($n=14$):

\begin{center}
\small
\begin{tabular}{lcc}
\toprule
Predictor & Pearson $r$ (overall acc) & Pearson $r$ (swap acc) \\
\midrule
usage $d/\mathrm{cap}$ (frontier position)     & 0.45 & \textbf{0.59} \\
$d$ on the synthetic probe (\S\ref{sec:zoo})   & 0.40 & 0.26 \\
raw benchmark mean-$d$                         & 0.07 & $-0.15$ \\
$\delta$ (smoothness)                          & 0.09 & $-0.16$ \\
\bottomrule
\end{tabular}
\end{center}

\begin{itemize}
\item \emph{Raw cross-model $d$ predicts nothing}---models differ in global spread (CLIP
L/14 spreads all captions more without binding better). Normalizing by the cap removes
exactly that scale, and \emph{usage} becomes the best text-only cross-model signal for swap
accuracy.
\item \emph{$\delta$ predicts nothing---and must not.} Every model in \S\ref{sec:zoo} is far
inside the frontier, so its smoothness level is not its binding constraint. The null is the
\emph{regime test} passing: were a model on the frontier, $\delta$ would become the
predictor.
\item Case studies make necessity-vs-sufficiency concrete. \textbf{NegCLIP} raises the
ceiling and converts it into accuracy (swap $0.61\to0.76$). \textbf{DAC} binds well
($0.850$/$0.868$ overall) at (DAC-SAM) or barely above (DAC-LLM) the baseline
ceiling---better exploitation of the existing text separation by the image side, which
Corollary~\ref{cor:ceiling} permits and text-side $d$ cannot see. \textbf{SVLC-LLM+RB}, the
model whose $d$ fell \emph{furthest} below baseline ($0.202$), records the lowest overall
accuracy ($0.745$): the ceiling coming down is
visible in the benchmark.
\end{itemize}

\subsection{Caveats}
The quartet probe is synthetic 2-object color--object binding; naturalistic multi-relation
captions are future work. The anchor is one template; an anchor-robustness appendix
(a second independent anchor on several models) is planned before publication. All results
are text-side or single-benchmark; no claim is made that $d$ ranks models on unrelated
capabilities.

\section{Synthesis, related work, open problems}\label{sec:synthesis}

\subsection{The obstruction map: three contingent gates, one terminal wall}

\begin{center}
\small
\begin{adjustbox}{max width=\linewidth}
\begin{tabular}{lllll}
\toprule
Obstruction & Hypothesis added & Result & Strength & Status \\
\midrule
Geometric   & none (free placements) & satisfiable iff $k\le\lfloor N/2\rfloor$ & --- & Cited, \S\ref{sec:possibility} \\
Structural  & pairing-blind code     & margin $\equiv 0$, exact & absolute & Proved, \S\ref{sec:thmB} \\
Structural  & pooled code, leakage $L$ & margins $=\Theta(1/k)$, pinned & quantitative & Proved, \S\ref{sec:thmC} \\
Statistical & trained on $(p,Q)$     & swap signal $\propto\pi$; coverage law & quantitative & from \cite{paper1}, \S\ref{sec:thmD} \\
Semantic    & $\delta$-smooth anchors & $M(s)\le2\sqrt{2\delta}$, tight & terminal & Proved, \S\ref{sec:frontier} \\
\bottomrule
\end{tabular}
\end{adjustbox}
\end{center}

\begin{figure}[t]
\centering
\begin{tikzpicture}[
  font=\footnotesize,
  gatebox/.style={draw=teal!55!black, line width=0.7pt, fill=teal!8,
                  rounded corners=2pt, align=center, inner sep=3pt, text width=2.75cm},
  wallbox/.style={draw=black!80, line width=0.9pt, fill=black!4,
                  rounded corners=2pt, align=center, inner sep=3pt, text width=2.75cm},
  neutral/.style={black!45},
]

\def\ylane{1.55}      
\def\laneT{2.15}      
\def\laneB{0.95}      
\def\boxY{3.20}       
\def\wallL{11.45}\def\wallR{12.95}      
\def\pathL{0.55}      
\def\gA{2.0}\def\gB{5.4}\def\gC{8.8}    
\def\fX{12.2}         

\fill[black!3, rounded corners=3pt] (\pathL,\laneB) rectangle (\wallL,\laneT);
\draw[neutral, rounded corners=3pt, line width=0.5pt] (\pathL,\laneB) rectangle (\wallL,\laneT);

\draw[neutral, line width=1.5pt, -{Stealth[length=3.4mm]}]
  (\pathL+0.20,\ylane) -- (\wallL-0.05,\ylane);

\foreach \gx/\gt/\gc/\gr in {%
  \gA/{GEOMETRIC}/{need $k\le\lfloor N/2\rfloor$}/{removable: enlarge $N$ past $2k$ (\S3)},%
  \gB/{STRUCTURAL}/{need leakage $\ell\approx0$ (factored code)}/{removable: choose the code (\S4--5)},%
  \gC/{STATISTICAL}/{need $\pi>0$ \& pair coverage}/{removable: training data (\S6)}%
}{
  \node[gatebox] (gb) at (\gx,\boxY)
    {{\color{teal!45!black}\scriptsize\bfseries \gt}\\[1pt]\gc\\[2pt]%
     {\color{teal!45!black}\tiny\itshape \gr}};
  \draw[teal!55!black, line width=0.6pt] (\gx,\laneT) -- (\gx,\laneT+0.22);
  \draw[teal!55!black, dashed, line width=0.9pt] (\gx,\laneT-0.03) -- (\gx,\ylane+0.28);
  \draw[teal!55!black, dashed, line width=0.9pt] (\gx,\laneB+0.03) -- (\gx,\ylane-0.28);
  \draw[teal!55!black, dashed, line width=0.9pt, -{Stealth[length=1.8mm]}]
    (\gx,\ylane+0.28) to[out=22,in=112] (\gx+0.52,\ylane+0.10);
  \node[circle, draw=teal!55!black, fill=teal!12, inner sep=0pt,
        minimum size=3.6mm, line width=0.5pt] at (\gx-0.34,\ylane-0.02) {};
  \draw[teal!45!black, line width=0.7pt, line cap=round]
    (\gx-0.42,\ylane-0.02) -- (\gx-0.355,\ylane-0.10) -- (\gx-0.25,\ylane+0.08);
}

\node[wallbox] at (\fX,\boxY)
  {{\color{black!80}\scriptsize\bfseries SEMANTIC FRONTIER}\\[1pt]%
   $M(s)\le 2\sqrt{2\delta}$\\[2pt]%
   {\tiny \emph{every} encoder,\\ every $N$, every training}};
\draw[black!80, line width=0.6pt] (\fX,\laneT) -- (\fX,\laneT+0.16);
\fill[black!5] (\wallL,\laneB) rectangle (\wallR,\laneT);
\foreach \yy in {1.19,1.43,1.67,1.91}{
  \draw[black!35, line width=0.4pt] (\wallL,\yy) -- (\wallR,\yy);}
\foreach \yy/\off in {1.07/0, 1.31/1, 1.55/0, 1.79/1, 2.03/0}{
  \foreach \k in {0,1}{
    \draw[black!35, line width=0.4pt]
      ({\wallL+0.375+0.75*\k+0.375*\off},\yy-0.12) -- ({\wallL+0.375+0.75*\k+0.375*\off},\yy+0.12);}}
\draw[black!80, line width=1.2pt] (\wallL,\laneB) rectangle (\wallR,\laneT);
\node[fill=black!80, text=white, rounded corners=1.2pt, inner sep=1.6pt,
      font=\tiny\bfseries] at (\fX,\ylane) {IMMOVABLE};

\draw[decorate,decoration={brace,amplitude=4pt,raise=1pt},teal!55!black,line width=0.6pt]
  (0.55,\boxY+0.92) -- (10.30,\boxY+0.92);
\node[teal!45!black, font=\scriptsize\bfseries] at (5.4,\boxY+1.34)
  {THREE CONTINGENT GATES \textnormal{\;--\; engineer away}};
\node[black!80, font=\scriptsize\bfseries] at (\fX,\boxY+1.34) {ONE TERMINAL WALL};

\def\axY{0.42}
\pgfmathsetmacro{\axspan}{\wallL-\pathL}
\pgfmathsetmacro{\measX}{\pathL+0.30*\axspan}
\draw[neutral, line width=0.5pt] (\pathL,\axY) -- (\wallL,\axY);
\fill[teal!55!black] (\pathL,\axY-0.05) rectangle (\measX,\axY+0.05);
\draw[neutral, line width=0.7pt] (\pathL,\axY-0.10) -- (\pathL,\axY+0.10);
\draw[black!80, line width=1.0pt] (\wallL,\axY-0.12) -- (\wallL,\axY+0.12);
\draw[teal!45!black, line width=0.6pt] (\measX,\axY-0.05) -- (\measX,\axY-0.22);
\node[teal!35!black, font=\tiny, anchor=north] at ({(\pathL+\measX)/2},\axY-0.20)
  {today's models $\approx 25\text{--}35\%$ (\S8)};
\node[neutral, font=\tiny] at ({(\measX+\wallL)/2},\axY+0.24)
  {$\longleftarrow$ unclaimed headroom to the frontier $\longrightarrow$};
\node[black!70, font=\tiny, anchor=south east] at (\wallL-0.02,\axY+0.13) {frontier};

\end{tikzpicture}
\caption{The three-obstruction map (Section~10). Binding faces four obstructions.
Three are \emph{contingent} and can be engineered away---enlarge $N$ past $2k$
(geometry, Prop.~\ref{prop:dichotomy}), choose a low-leakage factored code
(structure, \S\ref{sec:thmC}), and supply swap negatives with pair coverage
(data, \S\ref{sec:thmD}). The fourth, the smoothness frontier
$M(s)\le 2\sqrt{2\delta}$ (Thm~\ref{thm:E}), binds \emph{every} encoder, dimension,
and training: it is terminal. The measurements (\S\ref{sec:zoo}) place today's models
far short of it, which is why the contingent obstructions govern the near term and the
frontier is the limit that remains.}
\label{fig:obstructions}
\end{figure}

The first three obstructions are \emph{contingent} (Figure~\ref{fig:obstructions}): they can be engineered away (choose a
factored code, raise $\pi$, cover the pairs). The frontier cannot: it binds \emph{any}
encoder, \emph{any} dimension, \emph{any} training, as long as the space is asked to keep
near-paraphrases close. That is the sense in which it is terminal---and the measurements say
no current model is near it, which returns the near-term agenda to the contingent
obstructions; the frontier becomes the binding constraint only after those are repaired.

\subsection{Relation to Paper 1}
Paper~1 (``It is the code, not the pooling'') establishes the contingent story
quantitatively: the five-arm code ablation and the partial-match-leakage mechanism
(here \S\ref{sec:thmC}'s measured counterpart), and the $\pi$-throttle with its
dose--response on real data (here Proposition~\ref{prop:D}). This paper supplies the
principled complement: the axiom system those experiments implicitly assume, the
theorems that organize the code classes, and the terminal bound that none of the
engineering can move. There is no claim overlap: Paper~1 never states the frontier;
this paper claims none of Paper~1's measurements as new.

\subsection{Related work and differentiation}
Three recent lines prove limits on dual encoders by \emph{other} mechanisms, and one
documents our tension empirically:
\begin{itemize}
\item \textbf{Dimension / capacity.} LIMIT \cite{limit} bounds which retrieval patterns fit
in dimension $N$ via thresholdable-rank arguments (sign-rank in their appendix); \cite{med} pins the minimal dimension for top-$k$ retrieval
(the mathematics of \S\ref{sec:possibility}). Orthogonal axis: those bounds vanish as
$N$ grows; the frontier does not involve $N$ at all.
\item \textbf{Resolution.} ``Bound by semanticity'' \cite{semanticity} shows
identification--generalization limits from finite semantic resolution, assuming
non-compositional representations; no swap/margin content, no converse.
\item \textbf{Optima structure.} Chen et al.~\cite{chen} prove existence of
swap-insensitive contrastive optima; existence only, no quantitative bound, no
measurement.
\item \textbf{Empirics.} SugarCrepe++ \cite{sugarcrepepp} documents the
paraphrase/hard-negative dissociation that Theorem~\ref{thm:E} formalizes; no bound.
\end{itemize}
Ours is, to our knowledge, the only \emph{tight two-sided} trade-off among these, and the
only one shipped with a measured frontier position for the deployed zoo. The upper bound's
proof is elementary---we present that as a feature, not a weakness: the content is in the
formulation (the anchor quantifier that evades tautology), the converse, and the
measurements.

\subsection{Open problems}
\begin{enumerate}
\item \textbf{Margin-robust dimension.} $(K_k)$ with margin $\gamma$ changes
\S\ref{sec:possibility}'s answer from $2k$ to (in related settings) $\Theta(k\log(M/k))$
\cite{robust}; the exact constant for the constituent-ranking family $(K_k)$ appears open.
\item \textbf{Smoothness vs.\ neighborliness.} Kang's optional similarity condition
(attribute variants cluster near their object) plausibly conflicts with the neighborliness
that $(K_k)$ requires at some $(M,K,N)$---a candidate \emph{second} impossibility theorem
with real content.
\item \textbf{Breaking the NegCLIP confound.} A LoRA-rank-4 model trained on COCO with
swap negatives would isolate which of NegCLIP's three differences moves $d$.
\item \textbf{Naturalistic frontier.} Extending the diagnostic beyond 2-object color
binding to relations and counting; anchors for multi-relation scenes are not canonical.
\item \textbf{Depth.} The concept system is flat: a scene is a \emph{set} of atoms, and
$k$ counts objects, not nesting. The homomorphism form $(H)$ of \S\ref{sec:homomorphism}
is recursive, and recursion is what makes language productive. Whether the $\Theta(1/k)$ margin
of Theorem~\ref{thm:Cswap} degrades gracefully or catastrophically under nesting was open
when this program began, and is the natural formal home for the empirical observation that
grammar-structured composers help precisely on deeper-than-trained inputs. (Resolved in
Appendix~\ref{sec:depth}: the margin law $2\,b^{-D}$ is proved ---
Theorem~\ref{thm:depthA} --- and measured across three orders of magnitude.)
\item \textbf{On-frontier models.} No current model is smoothness-limited. A model trained
with aggressive $\pi$ and a factored-structure head might be the first to reach the
frontier---at which point $\delta$, not code or data, becomes its binding constraint, and
the trade-off of Theorem~\ref{thm:E} becomes an engineering decision.
\end{enumerate}

\section{The throttle theorems: what the objective pays for binding}\label{sec:throttle}

Proposition~\ref{prop:D}(i) reported, as a ported empirical law, that the trainable swap
margin scales with the swap-negative rate $\pi$. This section proves the underlying
statements. The guiding structural fact --- easy to miss and essential for honesty --- is
that the population InfoNCE optimum with universal capacity \emph{does bind}
(Lemma~\ref{lem:infonce}: the optimum is a pointwise-mutual-information scorer, which
separates swaps). So the rigorous throttle is \emph{not} ``optima do not bind''; it is that
\textbf{the objective's preference for binding over blindness is a sliver whose width is
the swap-contrast rate}, so that anything else --- capacity limits, regularization,
early stopping, leakage-prone parameterizations --- outweighs the objective inside that
sliver. Figure~\ref{fig:sliver} draws the landscape.

\subsection{Setup}

Scenes $s\sim\mathcal D$ (finite support, the concept system of \S\ref{sec:setup});
caption $c(s)$; image $I_s\sim p(\cdot\mid s)$. A \emph{contrastive sample} is a positive
$(I_s,c(s))$ together with a negative caption $c^-$:
with probability $\pi$ the \emph{explicit swap} $c(\sigma s)$; otherwise a \emph{marginal
negative} $c(s')$, $s'\sim\mathcal D$ independent. The \emph{implicit collision rate} is
$\pi_{\mathrm{impl}}:=\Pr_{s,s'}\!\left[[c(s')]=[c(s)]\right]$ --- the chance a marginal
negative lands in the positive's symbol class (Definition~\ref{def:scene}); for web-scale
corpora this is vanishing, and in-batch training has $\pi=0$.

The loss is the one-negative logistic contrastive loss (binary InfoNCE),
\[
L(S)\;=\;\mathbb E\,\ell\big(S(I_s,c(s))-S(I_s,c^-)\big),\qquad \ell(m)=\log(1+e^{-m}),
\]
over \emph{arbitrary measurable scorers} $S$ --- no architecture assumption; every bound
below therefore applies to dual encoders a fortiori. We use: $\ell$ convex, decreasing,
1-Lipschitz, $\ell(0)=\log2$, $0\le\ell(m)\le e^{-m}$ for $m\ge0$, and the exact identity
\begin{equation}\label{eq:logisticid}
\ell(-m)-\ell(m)=m .
\end{equation}

Two comparators are attached to any scorer $S$:
\begin{itemize}
\item the \textbf{class-blind average} $\bar S(I,c):=\sum_{c'\in[c]}w_{[c]}(c')\,S(I,c')$
($w$ = the $\mathcal D$-conditional weights of the class) --- a pairing-blind scorer;
\item the \textbf{reflection} $(\rho S)(I,c):=S(I,\mathrm{swap}(c))$ --- the scorer with
its binding \emph{exactly reversed} (on classes with a unique swap partner).
\end{itemize}
Regularity parameters: \emph{$\Delta$-separation} --- for a.e.\ $(s,I_s)$ and every
negative-support $c'$ with $[c']\ne[c(s)]$, $S(I_s,c(s))-S(I_s,c')\ge\Delta$ (cross-class
contrasts are easy); \emph{within-class spread} $R$ --- $|S(I,c)-S(I,c')|\le R$ for
$c'\in[c]$; and the matched swap margin $M_S(s):=S(I_s,c(s))-S(I_s,c(\sigma s))\le R$.

\begin{figure}[t]
\centering
\begin{tikzpicture}[font=\small, arr/.style={-{Stealth[length=2mm]}, black!70}]
\draw[arr] (0,0) -- (12.3,0) node[below left=2pt, font=\scriptsize] {population loss $L$};
\fill[teal!55!black] (1.2,0) circle (2.2pt);
\node[teal!45!black, font=\scriptsize, align=center] at (1.2,0.55) {$S^*$\\ (binds)};
\fill[blue!65!black] (4.6,0) circle (2.2pt);
\node[blue!65!black, font=\scriptsize, align=center] at (4.6,0.55) {$\bar S^*$\\ (pairing-blind)};
\fill[orange!85!black] (8.4,0) circle (2.2pt);
\node[orange!85!black, font=\scriptsize, align=center] at (8.4,0.55) {$\rho S^*$\\ (binding reversed)};
\draw[decorate,decoration={brace,amplitude=4pt,mirror},black!70]
  (1.2,-0.28) -- (4.6,-0.28);
\node[font=\scriptsize, align=center] at (2.9,-0.85)
  {$\le(\pi+\pi_{\mathrm{impl}})\log2+\mathrm{tail}$\\ (Thm~\ref{thm:T1}: the \emph{entire} reward for binding)};
\draw[decorate,decoration={brace,amplitude=4pt,mirror},black!70]
  (1.2,-1.35) -- (8.4,-1.35);
\node[font=\scriptsize, align=center] at (4.8,-1.92)
  {$=\pi\,\mathbb E[M]$ up to collision+tail (Thm~\ref{thm:T2}: even \emph{reversed} binding is this cheap)};
\end{tikzpicture}
\caption{The throttle, schematically. With universal capacity the optimum $S^*$ binds ---
but its pairing-blind average and even its exact binding reversal sit within a loss sliver
whose width is set by the swap-contrast rate $(\pi+\pi_{\mathrm{impl}})$. Training
pressures larger than the sliver, not the objective, decide the binding behaviour of the
selected solution.}
\label{fig:sliver}
\end{figure}

\subsection{The theorems}

\begin{theorem}[Blindness is nearly optimal]\label{thm:T1}
For any scorer $S$ with $\Delta$-separation and spread $R$,
\[
L(\bar S)-L(S)\;\le\;(\pi+\pi_{\mathrm{impl}})\cdot\log 2\;+\;e^{-(\Delta-2R)} .
\]
\end{theorem}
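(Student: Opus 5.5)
The plan is to decompose the population loss by the type of the negative, and compare $L(\bar S)$ and $L(S)$ term by term, conditioning on $(s,I_s)$. The negative $c^-$ falls into one of three disjoint events: (a) the explicit swap $c(\sigma s)$, with probability $\pi$; (b) a marginal negative $c(s')$ with $[c(s')]=[c(s)]$, with probability $(1-\pi)\pi_{\mathrm{impl}}\le\pi_{\mathrm{impl}}$; (c) a marginal negative in a different symbol class, with probability at most $1$. On each event I would bound $\ell(\text{margin of }\bar S)-\ell(\text{margin of }S)$ from above, using only $\ell\ge0$ for the $S$-term. This discards any advantage $S$ has; an upper bound on the difference is all the theorem asks for.

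\emph{Events (a) and (b): same-class negatives.} In both cases $c^-\in[c(s)]$. For the swap this holds because $c(s)$ and $c(\sigma s)$ share a symbol multiset (Definition~\ref{def:scene}); for (b) it holds by definition. Since $\bar S(I,c)$ depends on $c$ only through its class $[c]$, the $\bar S$-margin is exactly $0$, so its loss is $\ell(0)=\log2$. The $S$-loss is nonnegative, so the per-sample excess is at most $\log 2$. Integrating over these events contributes at most $(\pi+(1-\pi)\pi_{\mathrm{impl}})\log2\le(\pi+\pi_{\mathrm{impl}})\log2$.

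\emph{Event (c): cross-class negatives.} Here $[c^-]\ne[c(s)]$, and I would transfer $\Delta$-separation from $S$ to $\bar S$ using the spread $R$. Because $\bar S(I,c)$ is a convex combination of values $S(I,c')$ with $c'\in[c]$, each within $R$ of $S(I,c)$, we get $|\bar S(I,c)-S(I,c)|\le R$ for every $c$. Hence
\[
\bar S(I_s,c(s))-\bar S(I_s,c^-)\;\ge\;S(I_s,c(s))-S(I_s,c^-)-2R\;\ge\;\Delta-2R .
\]
Assuming $\Delta\ge2R$ (otherwise the stated bound exceeds $1$ and the argument needs only the trivial bound $\ell\le\log2+|m|$ handled separately, or the statement is read in that regime), the bound $0\le\ell(m)\le e^{-m}$ for $m\ge0$ gives $\bar S$-loss at most $e^{-(\Delta-2R)}$. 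Again the $S$-loss is $\ge0$, and the event has probability at most $1$, so this contributes at most $e^{-(\Delta-2R)}$.

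Summing the three contributions gives the theorem. I expect the only delicate step to be event (c), namely checking that the class average does not erode separation by more than $2R$. This needs the spread condition to hold at the specific image $I_s$ and for every caption in the negative support's class, and it needs the class weights $w_{[c]}$ to be a genuine probability vector, so that the average stays within the convex hull. I would state those measurability and normalization conventions explicitly at that point.
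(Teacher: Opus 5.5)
Your proposal is correct and follows the paper's proof essentially step for step. You split by matched negatives (explicit swap or implicit collision) versus unmatched ones, use that $\bar S$ is constant on symbol classes to get loss exactly $\log 2$ on the matched part, use $|\bar S-S|\le R$ to carry $\Delta$-separation over to a margin of $\Delta-2R$ on the unmatched part, and discard $S$'s loss via $\ell\ge 0$.

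Your hedge about the regime $\Delta<2R$ is unnecessary. Since $\log(1+x)\le x$ for all $x>0$, you have $\ell(m)\le e^{-m}$ for every real $m$, so the unmatched bound $\ell(\Delta-2R)\le e^{-(\Delta-2R)}$ holds as written, with no case split.
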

\begin{proof}
Condition on the negative type. \emph{Matched contrasts} ($c^-$ in the positive's class:
explicit swaps w.p.\ $\pi$, implicit collisions w.p.\ $\le\pi_{\mathrm{impl}}$): $\bar S$
scores positive and negative identically, so its loss is exactly $\ell(0)=\log2$, while
$S$'s is $\ge0$; excess $\le\log2$. \emph{Unmatched contrasts}: $\bar S$ moves each
argument by at most $R$, so its margin is $\ge\Delta-2R$ and its loss
$\le\ell(\Delta-2R)\le e^{-(\Delta-2R)}$; excess at most that. Weight the cases.
\end{proof}

\begin{corollary}[The throttle]\label{cor:throttle}
For any population-optimal $S^*$ with the stated regularity, the pairing-blind $\bar S^*$
is $\varepsilon$-optimal with $\varepsilon\le(\pi+\pi_{\mathrm{impl}})\log2+e^{-(\Delta-2R)}$:
\emph{the entire objective value of binding --- everything beyond bag-of-words --- is at
most $(\pi+\pi_{\mathrm{impl}})\log2$ nats plus an exponential tail.} No training
guarantee of the form ``$\varepsilon$-optimality'' can imply binding unless $\varepsilon$
is below this threshold; at web scale the threshold collapses.
\end{corollary}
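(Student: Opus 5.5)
This follows by applying Theorem~\ref{thm:T1} to the optimizer, with almost no extra work. Fix a population-optimal $S^*$ with $\Delta$-separation and within-class spread $R$. Theorem~\ref{thm:T1} applies to any such scorer and gives $L(\bar S^*)-L(S^*)\le(\pi+\pi_{\mathrm{impl}})\log2+e^{-(\Delta-2R)}$. Since $L(S^*)=\inf_S L(S)$ over all measurable scorers, this is exactly the statement that $\bar S^*$ is $\varepsilon$-optimal with the stated $\varepsilon$. Two points need checking here.

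First, $\bar S^*$ must be an admissible scorer. It is measurable because it is a finite weighted average of measurable functions over a finite class (finite support of $\mathcal D$). Second, it must be pairing-blind. By construction $\bar S^*(I,c)$ depends on $c$ only through $[c]$, so it scores $c(s)$ and $c(\sigma s)$ identically and $M_{\bar S^*}\equiv0$. This is the bag-of-words competitor of Theorem~\ref{thm:B}, and it is what makes "everything beyond bag-of-words" literal.

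For the interpretive clause, I would phrase it contrapositively. Suppose a guarantee only certifies $L(\hat S)\le L(S^*)+\varepsilon$ with $\varepsilon\ge(\pi+\pi_{\mathrm{impl}})\log2+e^{-(\Delta-2R)}$. Then $\bar S^*$ itself satisfies that guarantee while having zero swap margin. So no such guarantee can force $M_{\hat S}>0$, and the reward for binding lies entirely within the sliver. For web scale, note that in-batch training has $\pi=0$ and $\pi_{\mathrm{impl}}$ is vanishing, so the threshold tends to the exponential tail.

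The only step that needs care is the regularity hypothesis. The corollary assumes $S^*$ has $\Delta$-separation and spread $R$. Unconstrained optima (the PMI scorer of Lemma~\ref{lem:infonce}) may be unbounded, so $R$ may be infinite. I would state explicitly that the bound is conditional on these parameters, as the corollary's wording "with the stated regularity" already does, rather than try to derive them. If one wants the tail to be small, I would add the remark that it is meaningful only when $\Delta\gg2R$.
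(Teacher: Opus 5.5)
Your proposal is correct and follows the paper's route: the corollary is Theorem~\ref{thm:T1} specialized to $S=S^*$, where $L(S^*)=\inf_S L(S)$ turns the loss gap into $\varepsilon$-optimality of $\bar S^*$, and the interpretive clause holds because $M_{\bar S^*}\equiv 0$ while $\bar S^*$ meets any $\varepsilon$-optimality guarantee at or above the threshold. Your side remarks are sound additions rather than departures: the measurability of $\bar S^*$, and the observation that the unbounded PMI optimum may violate the regularity, so the corollary is genuinely conditional on finite $\Delta$ and $R$.
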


\begin{corollary}[$\lambda$--$\pi$ equivalence]\label{cor:lambdapi}
Up-weighting matched terms by $\lambda$ multiplies the matched case of the proof exactly as
raising $\pi$ does: the sliver width is $(\lambda\pi+\pi_{\mathrm{impl}})$-scaled. The
empirically measured near-equivalence of the loss-weight dial and the rate dial is a
one-line consequence.
\end{corollary}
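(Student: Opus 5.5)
The plan is to make the corollary's informal phrase precise first and then rerun the proof of Theorem~\ref{thm:T1} case by case. I would define the $\lambda$-weighted objective as
\[
L_\lambda(S)\;=\;\mathbb E\bigl[w\,\ell\big(S(I_s,c(s))-S(I_s,c^-)\big)\bigr],
\]
where the weight is $w=\lambda$ when $c^-$ is the explicit swap $c(\sigma s)$ and $w=1$ otherwise. Splitting on the negative type gives
\[
L_\lambda(S)\;=\;\lambda\pi\,A(S)+(1-\pi)\,B(S),
\]
where $A(S)$ is the conditional loss on explicit swaps and $B(S)$ is the conditional loss on marginal negatives. The unweighted loss is $L(S)=\pi A(S)+(1-\pi)B(S)$, so the only change is that the coefficient $\pi$ of the swap term becomes $\lambda\pi$. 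This bookkeeping identity is the entire content of ``multiplies the matched case exactly as raising $\pi$ does.''

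The exact equivalence then comes from normalising. Put $Z:=\lambda\pi+1-\pi$ and $\pi_\lambda:=\lambda\pi/Z$. Then $L_\lambda(S)=Z\,L^{(\pi_\lambda)}(S)$, where $L^{(\pi_\lambda)}$ is the ordinary loss of the setup with swap rate $\pi_\lambda$. Since $Z$ is a positive constant independent of $S$, the two objectives have the same minimisers and the same ordering of scorers. In particular, $\varepsilon$-optimality under $L_\lambda$ is $(\varepsilon/Z)$-optimality under $L^{(\pi_\lambda)}$. For small $\pi$ we have $Z\approx1$ and $\pi_\lambda\approx\lambda\pi$, which is the claimed near-equivalence of the two dials.

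For the sliver width itself, I would repeat the case analysis of Theorem~\ref{thm:T1} directly on $L_\lambda$ rather than going through the normalisation.
\begin{itemize}
\item On explicit swaps, $\bar S$ scores the positive and the negative identically, so its loss there is exactly $\log2$ and its excess over $S$ is at most $\log 2$; this case now carries weight $\lambda\pi$.
\item On implicit collisions, which are marginal negatives inside the positive's class, the excess is again at most $\log2$; these carry weight at most $\pi_{\mathrm{impl}}$ and are not reweighted.
\item On unmatched marginal negatives, $\Delta$-separation together with spread $R$ gives excess at most $e^{-(\Delta-2R)}$.
\end{itemize}
Summing the cases yields
\[
L_\lambda(\bar S)-L_\lambda(S)\;\le\;(\lambda\pi+\pi_{\mathrm{impl}})\log2+e^{-(\Delta-2R)},
\]
which is the stated $(\lambda\pi+\pi_{\mathrm{impl}})$-scaling.

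The main obstacle is interpretive rather than technical. One has to decide whether implicit collisions count as ``matched terms'' to be up-weighted, since the practical loss-weight dial only sees explicit swaps. I would fix the convention that only explicit swaps are weighted, because that is the convention the $(\lambda\pi+\pi_{\mathrm{impl}})$ form presupposes. I would then remark that up-weighting collisions as well would simply replace $\pi_{\mathrm{impl}}$ by $\lambda\pi_{\mathrm{impl}}$.
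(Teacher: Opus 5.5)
Your proposal is correct and matches the paper's argument. The paper reruns the case split of Theorem~\ref{thm:T1} with only the explicit-swap case reweighted by $\lambda$, exactly as you do. The $\pi\log2$ contribution becomes $\lambda\pi\log2$, while the collision term (weight $(1-\pi)\pi_{\mathrm{impl}}\le\pi_{\mathrm{impl}}$) and the tail are untouched. Your convention of weighting only explicit swaps is the one the $(\lambda\pi+\pi_{\mathrm{impl}})$ form presupposes.

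Your identity $L_\lambda=Z\,L^{(\pi_\lambda)}$ with $\pi_\lambda=\lambda\pi/(1-\pi+\lambda\pi)$ sharpens the ``one-line consequence'' that the paper only asserts. It shows the two dials are exactly interchangeable at population level, up to a positive rescaling. One caveat: $Z\approx1$ and $\pi_\lambda\approx\lambda\pi$ need $(\lambda-1)\pi\ll1$, not merely $\pi\ll1$.
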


\begin{theorem}[{Anti-binding costs exactly $\pi\,\mathbb E[M]$}]\label{thm:T2}
For any scorer $S$ with the regularity above,
\[
\bigl|\,L(\rho S)-L(S)-\pi\,\mathbb E_s[M_S(s)]\,\bigr|\;\le\;\pi_{\mathrm{impl}}\,R\;+\;2e^{-(\Delta-2R)} .
\]
\end{theorem}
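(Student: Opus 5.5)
The plan mirrors the proof of Theorem~\ref{thm:T1}: condition on the type of the negative, and compare the per-sample losses of $\rho S$ and $S$ in each case. There are three cases: an explicit swap (probability $\pi$), an implicit collision (a marginal negative landing in the positive's class, probability at most $\pi_{\mathrm{impl}}$), and an unmatched marginal negative. The identity \eqref{eq:logisticid}, $\ell(-m)-\ell(m)=m$, supplies the exact main term.

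\emph{Explicit swaps.} Here $c^-=c(\sigma s)$, and swapping the swap returns the original caption (unique partner). So the reflected scorer's margin is
\[
(\rho S)(I_s,c(s))-(\rho S)(I_s,c(\sigma s))=S(I_s,c(\sigma s))-S(I_s,c(s))=-M_S(s).
\]
The excess loss on this event is therefore $\ell(-M_S(s))-\ell(M_S(s))=M_S(s)$ exactly, by \eqref{eq:logisticid}. Taking the expectation and weighting by $\pi$ gives the main term $\pi\,\mathbb E_s[M_S(s)]$ with no error. This is the step that makes the constant exact rather than merely bounded.

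\emph{Implicit collisions.} The negative $c'$ lies in $[c(s)]$, so $\rho$ maps the positive and the negative to captions in the same class. Both the $S$-margin and the $\rho S$-margin are then differences of two scores within one class, hence bounded by $R$ in magnitude by the spread hypothesis. The 1-Lipschitz property of $\ell$ would give an excess of at most $2R$. To reach the stated $R$, I would use convexity: by the mean-value form, $|\ell(a)-\ell(b)|\le|a-b|\cdot\sup|\ell'|$, and $|\ell'|\le1$. Alternatively, note that the excess is $\ell(m_\rho)-\ell(m)$ with both $|m|,|m_\rho|\le R$, and that $\ell(-R)-\ell(R)=R$ is the largest attainable value because $\ell$ is monotone. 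This second route gives exactly $R$: the excess lies between $\ell(R)-\ell(-R)=-R$ and $\ell(-R)-\ell(R)=R$. Weighting by the probability, at most $\pi_{\mathrm{impl}}$, yields $\pi_{\mathrm{impl}}R$.

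\emph{Unmatched negatives.} This is the main bookkeeping obstacle. I need the reflected margin to remain large, which in turn requires $\rho S$ to be within $R$ of $S$ pointwise. That holds because $\mathrm{swap}(c)\in[c]$, so $|(\rho S)(I,c)-S(I,c)|\le R$. Hence both margins are at least $\Delta-2R$ (for $S$ itself even at least $\Delta$). Each loss is then in $[0,e^{-(\Delta-2R)}]$, using $0\le\ell(m)\le e^{-m}$ for $m\ge0$ (this needs $\Delta\ge2R$, otherwise the bound is vacuous anyway). Their difference is at most $e^{-(\Delta-2R)}$ in absolute value; I would keep the stated factor $2$ for safety, since it bounds the sum of the two losses.

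Summing the three weighted contributions and moving the exact swap term to the left-hand side gives the two-sided bound.
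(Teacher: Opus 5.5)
Your proposal is correct and is essentially the paper's proof: the same three-way split by negative type, with the identity $\ell(-m)-\ell(m)=m$ giving the swap term $\pi\,\mathbb E_s[M_S(s)]$ exactly, the implicit-collision term bounded by $R$, and the unmatched term bounded by the exponential tail.

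Two small remarks. First, your Lipschitz/mean-value route for collisions only yields $2R$. It is the monotonicity argument, $\ell(R)-\ell(-R)\le\ell(m_\rho)-\ell(m)\le\ell(-R)-\ell(R)=R$, that gives the stated $R$. Second, in the unmatched case your bound $e^{-(\Delta-2R)}$, without the factor $2$, is valid and slightly sharper than the paper's. It also does not need $\Delta\ge2R$, since $\ell(m)\le e^{-m}$ for every real $m$.
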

\begin{proof}
\emph{Explicit swaps} (w.p.\ $\pi$): $\rho S$'s margin on the pair $(c(s),c(\sigma s))$ is
$-M_S(s)$, so the loss difference is $\ell(-M_S)-\ell(M_S)=M_S(s)$ \emph{exactly}, by
\eqref{eq:logisticid}; expectation contributes $\pi\,\mathbb E[M_S]$. \emph{Implicit
collisions} (w.p.\ $\le\pi_{\mathrm{impl}}$): the same identity applied to a within-class
margin bounds the difference by $R$. \emph{Unmatched}: both scorers' margins are
$\ge\Delta-2R$, so both losses are $\le e^{-(\Delta-2R)}$ and their difference at most
twice that.
\end{proof}

Three measured phenomena from Part I become corollaries: the \textbf{linear
dose--response} in $\pi$ (the preference \emph{is} linear, with slope $\mathbb E[M]$); the
\textbf{$\lambda$--$\pi$ equivalence} (Corollary~\ref{cor:lambdapi}); and \textbf{cheap
below-chance binding} --- the exactly-reversed scorer sits only $\pi\,\mathbb E[M]$ away in
loss, so a small perturbation (leakage, implicit bias) suffices to select it, consistent
with the measured backwards-binding of learned MLP codes (\S\ref{sec:thmC}).

\subsection{Verification, and a loop closed with the frontier}

Monte-Carlo verification: a factored toy world ($M{=}6$, $K{=}4$, all 2-scenes,
$N{=}512$, images = code sum + Gaussian noise, $2\times10^5$ samples per point), scorers
$S^*$ (template), $\bar S^*$, $\rho S^*$, sweeping $\pi\in[0,0.5]$.

\begin{figure}[t]
\centering
\begin{tikzpicture}
\begin{axis}[width=9.2cm, height=6.4cm, axis lines=left,
  xlabel={explicit swap-negative rate $\pi$}, ylabel={loss gap to $S^*$},
  xmin=0, xmax=0.53, ymin=0, ymax=1.45,
  tick label style={font=\scriptsize}, label style={font=\small},
  legend style={font=\scriptsize, at={(0.03,0.97)}, anchor=north west, draw=black!30},
  legend cell align=left]
\addplot[only marks, mark=*, mark size=1.9pt, blue!65!black] coordinates
  {(0,0.18555) (0.05,0.19985) (0.1,0.21454) (0.2,0.24363) (0.3,0.27272) (0.5,0.33052)};
\addlegendentry{$L(\bar S^*)-L(S^*)$ (blind)}
\addplot[only marks, mark=triangle*, mark size=2.4pt, orange!85!black] coordinates
  {(0,0.68191) (0.05,0.74435) (0.1,0.80757) (0.2,0.93348) (0.3,1.06043) (0.5,1.31220)};
\addlegendentry{$L(\rho S^*)-L(S^*)$ (reversed)}
\addplot[blue!65!black, domain=0:0.53, dashed] {0.18555+0.29022*x};
\addlegendentry{fit $c_1+0.290\,\pi$}
\addplot[orange!85!black, domain=0:0.53, dashed] {0.68191+1.26146*x};
\addlegendentry{fit $c_0+1.261\,\pi$}
\end{axis}
\end{tikzpicture}
\caption{Verification of Theorems~\ref{thm:T1}--\ref{thm:T2} (noise $0.6$). Both gaps are
exactly linear in $\pi$. The toy sits in a low-separation regime, so unmatched contrasts
contribute the $\pi$-independent offsets $c_1,c_0$; the decomposition
$\mathrm{gap}(\pi)=c+\pi(\text{matched difference}-c)$ recovers the theorems' matched
constants precisely: $0.290+c_1=0.476$ vs.\ predicted $\log2-\mathbb E[\ell(M^*)]=0.477$,
and $1.261+c_0=1.943$ vs.\ predicted $\mathbb E[M^*]=1.949$ (Monte-Carlo error
$\approx0.005$). At noise $0.25$: $0.5445$ vs $0.5460$ and $1.9455$ vs $1.9511$.}
\label{fig:throttleverify}
\end{figure}

Figure~\ref{fig:throttleverify} shows the result: exact linearity, and the matched-term
constants recovered to three decimals once the unmatched offset is accounted for. The
regime analysis closes an unexpected loop with the frontier. The offsets $c_0,c_1$ are
large in the toy precisely because \emph{template} scorers have large within-class spread
$R$; the clean $\Delta-2R$ regime of the theorems requires $R$ small. But small
within-class spread of real text towers is exactly what the frontier measurements of
\S\ref{sec:zoo} establish ($d\approx0.24$ on the quartet probe): \textbf{the smoothness
that caps binding (Theorem~\ref{thm:E}) is the same measured property that certifies the
throttle bounds are tight for deployed encoders.} One measured quantity feeds two theorems.

\subsection{Honest scope and open items}

Not established: which solution training \emph{selects} inside the sliver (a dynamics or
implicit-bias question; the tempting symmetry route fails because the swap is not a
vocabulary relabeling); the finite-sample version (an $\Omega(1/\pi^2)$ testing lower
bound for detecting the correct binding sign is the natural target); the separation of
instance binding from co-occurrence priors for non-exchangeable $\mathcal D$ (the ``text
prior'' phenomenon); the multi-negative softmax version (the matched event becomes ``swap
present in batch'').

\section{The depth ceiling of recursive composition}\label{sec:depth}

Part I's concept system is flat --- a scene is a \emph{set} of atoms, and every margin in
\S\ref{sec:thmC} is $\Theta(1/k)$ in the \emph{number} of atoms pooled at one level. The
homomorphism form $(H)$ of \S\ref{sec:homomorphism} is recursive, and recursion is what makes language productive. This section reports what happens to binding under \emph{nesting}: the
margins obey an exactly geometric law in depth --- proved below, and measured across three
orders of magnitude --- giving a logarithmic ceiling on the nesting depth a single pooled
vector can resolve.

\subsection{Recursive role-binding}

\begin{definition}[Recursive role--filler composition]\label{def:recursive}
Fix a branching factor $b$ and position roles $r_1,\dots,r_b$ (random sign vectors). A
depth-$D$ structure is a full $b$-ary tree whose leaves carry filler codes (random unit
sign vectors); each internal node's code is the normalized role-bound sum of its children,
\[
\mathrm{code}(v)\;=\;\frac{\sum_{j=1}^b r_j\odot \mathrm{code}(\mathrm{child}_j(v))}
{\bigl\lVert\sum_j r_j\odot \mathrm{code}(\mathrm{child}_j(v))\bigr\rVert}.
\]
This is the classical HRR/VSA recursion \cite{plate,kanerva} and the recursive instance of
the pooled codes of \S\ref{sec:thmC} (depth 1, $b=k$ recovers them). The \emph{deep-edit
margin} at depth $D$ is $m(D)=1-\cos(\mathrm{code}(T),\mathrm{code}(T'))$ for $T'$ a
minimal edit of $T$ (two leaves swapped, or one leaf replaced).
\end{definition}

Two conventions for the roles will matter. In the \emph{shared} scheme the $b$ role
vectors $r_1,\dots,r_b$ are drawn once and reused at every level --- the reading of
Definition~\ref{def:recursive} above, and the classical HRR convention. In the
\emph{fresh} scheme a new independent role $r_{\ell,j}$ is drawn for each level $\ell$
and slot $j$. The exact law below holds for both, wherever no exact algebraic collision
occurs; the finite-dimension theorem is stated for the fresh scheme, and
\S\ref{sec:alias} shows why: shared sign roles are self-inverse, and at certain
depth/branching combinations they \emph{alias} distinct structures exactly. Throughout
we use two elementary facts about sign vectors $\rho\in\{\pm1\}^N$, both immediate from
$\rho_k^2=1$:
\begin{equation}\label{eq:signfacts}
\text{(i)}\quad \ip{\rho\odot x}{\rho\odot y}=\ip{x}{y},
\qquad\qquad
\text{(ii)}\quad \norm{\rho\odot x}=\norm{x}.
\end{equation}
Sign-binding is an inner-product isometry: every distortion in what follows comes from
\emph{superposition} (adding $b$ bound children) and \emph{normalization}, never from the
binding itself.

\subsection{The contraction lemma and the exact law}\label{sec:depthproof}

The engine of the proof is one deterministic lemma: a single level of role-bound,
normalized pooling maps the \emph{slot deficits} of its children to their sum divided by
$b$, up to an error controlled entirely by cross-slot inner products. No probability
enters until \S\ref{sec:depthfinite}; the lemma is exact bookkeeping.

\begin{lemma}[One-level contraction]\label{lem:contract}
Let $z_1,\dots,z_b$ and $z'_1,\dots,z'_b$ be unit vectors with $z_j=z'_j$ for
$j\notin E$ (the \emph{edited slots}); write $g_j:=1-\ip{z_j}{z'_j}$ for $j\in E$ and
$G:=\sum_{j\in E}g_j$ (throughout this section $g_j$, $G$ denote slot deficits; the gain
$g$ of \S\ref{sec:thmC} does not appear). Let $\rho_1,\dots,\rho_b$ be sign vectors,
$P:=\sum_j\rho_j\odot z_j$, $P':=\sum_j\rho_j\odot z'_j$, $y:=P/\norm{P}$,
$y':=P'/\norm{P'}$, and let the \emph{slot crosstalk} be
\[
\varepsilon:=\max_{i\ne j}\;\max\Bigl(
\bigl|\ip{\rho_i\odot z_i}{\rho_j\odot z_j}\bigr|,\;
\bigl|\ip{\rho_i\odot z_i}{\rho_j\odot z'_j}\bigr|,\;
\bigl|\ip{\rho_i\odot z'_i}{\rho_j\odot z'_j}\bigr|\Bigr).
\]
If $(b-1)\varepsilon\le\tfrac12$, then
\begin{equation}\label{eq:contract}
\Bigl|\bigl(1-\ip{y}{y'}\bigr)-\tfrac{G}{b}\Bigr|\;\le\;8(b-1)\varepsilon.
\end{equation}
\end{lemma}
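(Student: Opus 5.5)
The plan is to run the norm-cancellation route of Lemma~\ref{lem:C0} and Theorem~\ref{thm:Cswap}, now at the level of slots rather than atoms. Write $a_j:=\rho_j\odot z_j$ and $a'_j:=\rho_j\odot z'_j$. By \eqref{eq:signfacts} these are unit vectors, and $\ip{a_j}{a'_j}=\ip{z_j}{z'_j}=1-g_j$. First I expand the three Gram quantities
\[
p:=\norm{P}^2=b+\sum_{i\ne j}\ip{a_i}{a_j},\qquad q:=\norm{P'}^2=b+\sum_{i\ne j}\ip{a'_i}{a'_j},\qquad r:=\ip{P}{P'}=\sum_j\ip{a_j}{a'_j}+\sum_{i\ne j}\ip{a_i}{a'_j}.
\]
The diagonal of $r$ is exactly $b-G$. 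Every off-diagonal term in all three is bounded by $\varepsilon$, so $p,q\in b[1-(b-1)\varepsilon,\,1+(b-1)\varepsilon]\subseteq b[\tfrac12,\tfrac32]$ under the hypothesis $(b-1)\varepsilon\le\tfrac12$.

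Next I write
\[
1-\ip{y}{y'}=1-\frac{r}{\sqrt{pq}}=\Bigl(1-\frac{b}{\sqrt{pq}}\Bigr)+\frac{G}{\sqrt{pq}}-\frac{X}{\sqrt{pq}},
\]
where $X:=\sum_{i\ne j}\ip{a_i}{a'_j}$ is the cross term of $r$. The term $X$ has $b(b-1)$ summands, so $|X|\le b(b-1)\varepsilon$. Dividing by $\sqrt{pq}\ge b(1-(b-1)\varepsilon)\ge b/2$ gives $|X|/\sqrt{pq}\le 2(b-1)\varepsilon$. For the middle term, $G\le 2|E|\le 2b$ trivially. Then
\[
\Bigl|\frac{G}{\sqrt{pq}}-\frac{G}{b}\Bigr|\le\frac{G}{b}\cdot\frac{|b-\sqrt{pq}|}{\sqrt{pq}}\le 2\cdot 2(b-1)\varepsilon,
\]
using $|\sqrt{pq}-b|\le b(b-1)\varepsilon$. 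The first term is the same kind of quantity: $|1-b/\sqrt{pq}|=|\sqrt{pq}-b|/\sqrt{pq}\le 2(b-1)\varepsilon$. The three pieces add to at most $8(b-1)\varepsilon$, which matches the stated constant.

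The main obstacle is this constant bookkeeping. The naive split above gives $2+4+2=8$ only when the factor $G/b\le 2$ is used, and that needs the crude estimate $g_j\le 2$. If that estimate is too lossy somewhere, I will fall back on the tighter pairing used in Theorem~\ref{thm:Cswap}. There the combination $1-b/\sqrt{pq}$ and the cross term $X$ are handled jointly, by writing $\sqrt{pq}$ against the arithmetic mean $(p+q)/2$, so that first-order crosstalk contributions partially cancel.

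One more check is needed: that $|\sqrt{pq}-b|\le b(b-1)\varepsilon$ holds. It follows because both $p$ and $q$ lie in $b[1\pm(b-1)\varepsilon]$ and the geometric mean stays inside that interval. No probability and no orthogonality enter, which is consistent with the lemma being deterministic. At $\varepsilon=0$ every inequality is an equality, giving exactly $G/b$, which is what Theorem~\ref{thm:main-depthlaw} iterates.
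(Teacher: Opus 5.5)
Your proof is correct and follows essentially the paper's route: bound every off-diagonal Gram term by $\varepsilon$ to get $\norm{P}\norm{P'}=b(1\pm u)$ and $\ip{P}{P'}=b-G\pm bu$ with $u:=(b-1)\varepsilon$, then close with $G\le 2b$ and $1-u\ge\tfrac12$. The paper simply merges your three pieces into the single fraction $(G\pm 2bu)/(b(1\pm u))$ before bounding it, and reaches the same $8(b-1)\varepsilon$; your fallback to an arithmetic-mean pairing (and the appeal to norm cancellation) is not needed, since the direct split already closes.
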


\begin{proof}
By \eqref{eq:signfacts}(i) the diagonal part of $\ip{P}{P'}$ is
$\sum_j\ip{z_j}{z'_j}=b-G$; the $b(b-1)$ off-diagonal terms are each at most
$\varepsilon$ in magnitude, so $\ip{P}{P'}=b-G\pm b(b-1)\varepsilon$. The same count
gives $\norm{P}^2=b\pm b(b-1)\varepsilon$ and $\norm{P'}^2=b\pm b(b-1)\varepsilon$;
hence, writing $u:=(b-1)\varepsilon\le\tfrac12$,
$\norm{P}\norm{P'}=\sqrt{\norm{P}^2\norm{P'}^2}=b(1\pm u)$. Therefore
\[
1-\ip{y}{y'}\;=\;\frac{\norm{P}\norm{P'}-\ip{P}{P'}}{\norm{P}\norm{P'}}
\;=\;\frac{G\pm 2bu}{b(1\pm u)},
\]
and since $G\le 2|E|\le 2b$,
\[
\Bigl|\frac{G\pm2bu}{b(1\pm u)}-\frac{G}{b}\Bigr|
\;\le\;\frac{u\,(G+2b)}{b\,(1-u)}
\;\le\;\frac{u\cdot 4b}{b/2}\;=\;8u. \qedhere
\]
\end{proof}

For the finite-dimension theorem we need a sharper form when the edits are
\emph{small} --- the situation at every level above the edit, where the deficit has
already been contracted. The refinement rests on two identities that are \emph{exact},
with all approximation confined to a single square root.

\begin{lemma}[Refined contraction for small edits]\label{lem:contract2}
In the setting of Lemma~\ref{lem:contract}, write $z'_j=z_j+\delta_j$ for $j\in E$ (so
$\norm{\delta_j}=\sqrt{2g_j}$), and set
\[
B:=\sum_{j\in E}\sum_{i\ne j}\ip{\rho_i\odot z_i}{\rho_j\odot\delta_j},
\qquad
C_\delta:=\sum_{\substack{i,j\in E\\ i\ne j}}\ip{\rho_i\odot\delta_i}{\rho_j\odot\delta_j},
\qquad
\bar B:=B+\tfrac{1}{2}C_\delta.
\]
Then, \emph{exactly},
\begin{equation}\label{eq:exactids}
\ip{P}{P'}=\norm{P}^2-G+B,
\qquad
\norm{P'}^2=\norm{P}^2+2\bar B,
\end{equation}
and if $|\bar B|\le\norm{P}^2/4$,
\begin{equation}\label{eq:contract2}
\Bigl|\bigl(1-\ip{y}{y'}\bigr)-\frac{G}{\norm{P}^2}\Bigr|
\;\le\;\frac{2G|\bar B|}{\norm{P}^4}+\frac{|C_\delta|}{\norm{P}^2}
+\frac{3\bar B^2}{\norm{P}^4}.
\end{equation}
With the \emph{difference crosstalk}
$\tilde\varepsilon:=\max\bigl|\ip{\rho_j\odot\delta_j/\norm{\delta_j}}{\rho_i\odot w}\bigr|$
(maximum over $j\in E$, $i\ne j$, and $w\in\{z_i,\;\delta_i/\norm{\delta_i}\}$, the last
option when $i\in E$) one has
$|B|\le(b-1)\tilde\varepsilon\sum_{j\in E}\sqrt{2g_j}$ and
$|C_\delta|\le\tilde\varepsilon\bigl(\sum_{j\in E}\sqrt{2g_j}\bigr)^2\le
2|E|\,G\,\tilde\varepsilon$: every error term carries at least one factor $\sqrt{g}$ ---
the injected error is proportional to the \emph{size of the edit} --- which is what makes
the finite-$N$ error law of Theorem~\ref{thm:depthB} multiplicative rather than additive.
\end{lemma}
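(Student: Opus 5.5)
The argument splits into an exact algebraic part and a perturbative part, which I would prove in that order.

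\emph{Exact identities.} Write $P'=P+\Delta$ with $\Delta:=\sum_{j\in E}\rho_j\odot\delta_j$. Then $\ip{P}{P'}=\norm{P}^2+\ip{P}{\Delta}$. Expand $\ip{P}{\Delta}=\sum_{j\in E}\sum_i\ip{\rho_i\odot z_i}{\rho_j\odot\delta_j}$ and split off the diagonal $i=j$. By \eqref{eq:signfacts}(i), the diagonal term is $\ip{z_j}{\delta_j}=\ip{z_j}{z'_j}-1=-g_j$, so it contributes $-G$. The off-diagonal terms are exactly $B$. For the norm,
\[
\norm{P'}^2=\norm{P}^2+2\ip{P}{\Delta}+\norm{\Delta}^2 .
\]
The diagonal part of $\norm{\Delta}^2$ is $\sum_{j\in E}\norm{\delta_j}^2=\sum_{j\in E}2g_j=2G$, and the off-diagonal part is $C_\delta$. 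This gives $\norm{P'}^2=\norm{P}^2-2G+2B+2G+C_\delta=\norm{P}^2+2\bar B$. Along the way I also record $\norm{\delta_j}^2=2-2\ip{z_j}{z'_j}=2g_j$.

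\emph{The square root.} Set $p:=\norm{P}^2$ and $x:=2\bar B/p$. By the hypothesis $|\bar B|\le p/4$ we have $|x|\le\tfrac12$. Then
\[
\ip{y}{y'}=\frac{p-G+B}{p\sqrt{1+x}} .
\]
Using $B=\bar B-\tfrac12C_\delta$, the numerator equals $p(1+x/2)-G-\tfrac12C_\delta$. So
\[
1-\ip{y}{y'}=\Bigl(1-\frac{1+x/2}{\sqrt{1+x}}\Bigr)+\frac{G+\tfrac12C_\delta}{p\sqrt{1+x}} .
\]
For the first bracket I would use $0\le\frac{1+x/2}{\sqrt{1+x}}-1\le c\,x^2$ on $|x|\le\tfrac12$. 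This is the Taylor expansion $1+x^2/8+O(x^3)$, and a direct bound gives something like $x^2/(8(1+x)^{3/2})\le 3x^2/8\cdot$const. The result is a term of order $3\bar B^2/p^2$ after substituting $x=2\bar B/p$. For the second term I write $1/\sqrt{1+x}=1-x/2+O(x^2)$, or more crudely $|1/\sqrt{1+x}-1|\le|x|$ on $|x|\le\tfrac12$. Subtracting $G/p$ then leaves $G|x|/p=2G|\bar B|/p^2$ plus $|C_\delta|/(2p)\cdot\sqrt2\le|C_\delta|/p$. Higher-order pieces such as $G x^2/p$ are absorbed into the $\bar B^2$ term via $G\le p$-type bounds; if they are not absorbed, I would adjust the constant 3. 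The main obstacle is this bookkeeping of constants: making the three stated terms, with the stated coefficients, cover all remainder pieces on $|x|\le\tfrac12$. I would check each elementary inequality numerically at the endpoints $x=\pm\tfrac12$.

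\emph{Crosstalk bounds.} Each summand of $B$ is $\norm{\delta_j}\ip{\rho_j\odot\delta_j/\norm{\delta_j}}{\rho_i\odot z_i}$, which is at most $\sqrt{2g_j}\,\tilde\varepsilon$ in magnitude. There are $b-1$ choices of $i$, which gives the bound on $|B|$. Similarly, each summand of $C_\delta$ is at most $\tilde\varepsilon\sqrt{2g_i}\sqrt{2g_j}$, so
\[
|C_\delta|\le\tilde\varepsilon\Bigl(\sum_{j\in E}\sqrt{2g_j}\Bigr)^2\le\tilde\varepsilon\,|E|\sum_{j\in E}2g_j=2|E|G\tilde\varepsilon
\]
by Cauchy--Schwarz.
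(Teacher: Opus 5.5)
Your proposal is correct and is essentially the paper's proof: the same splitting $P'=P+\Delta$ gives the exact identities, and your bracket $1-(1+x/2)/\sqrt{1+x}$ is just $-\xi/\sqrt{1+x}$ in the paper's notation $\sqrt{1+x}=1+x/2-\xi$, with the same crude bound $|1/\sqrt{1+x}-1|\le|x|$ handling the $G$ term. Your worry about constants goes away: with the crude bound no $Gx^2/p$ remainder arises, and the exact form $1-(1+x/2)/\sqrt{1+x}=-x^2/\bigl(4\sqrt{1+x}\,(1+x/2+\sqrt{1+x})\bigr)$ has magnitude at most $x^2/4=\bar B^2/p^2$ on $|x|\le\tfrac12$, comfortably under the stated $3\bar B^2/p^2$ (your guess $x^2/(8(1+x)^{3/2})$ is valid only for $x\le0$; for $x>0$ the bound is $x^2/8$ instead).
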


\begin{proof}
Let $\Delta:=\sum_{j\in E}\rho_j\odot\delta_j$, so $P'=P+\Delta$. First identity:
$\ip{P}{P'}=\norm{P}^2+\ip{P}{\Delta}$, and
$\ip{P}{\Delta}=\sum_{j\in E}\ip{\rho_j\odot z_j}{\rho_j\odot\delta_j}
+\sum_{j\in E}\sum_{i\ne j}\ip{\rho_i\odot z_i}{\rho_j\odot\delta_j}=-G+B$,
using \eqref{eq:signfacts}(i) and $\ip{z_j}{\delta_j}=\ip{z_j}{z'_j}-1=-g_j$. Second:
$\norm{P'}^2=\norm{P}^2+2\ip{P}{\Delta}+\norm{\Delta}^2$ with
$\norm{\Delta}^2=\sum_{j\in E}2g_j+C_\delta=2G+C_\delta$, so
$\norm{P'}^2=\norm{P}^2-2G+2B+2G+C_\delta=\norm{P}^2+2\bar B$.

For \eqref{eq:contract2}, set $s^2:=\norm{P}^2$ and $x:=2\bar B/s^2$, so $|x|\le\tfrac12$.
Since $\sqrt{1+x}=1+\tfrac{x}{2}-\xi$ with $0\le\xi\le0.36\,x^2$ on $|x|\le\tfrac12$
(Taylor with the second derivative of $\sqrt{1+x}$ maximized at $x=-\tfrac12$),
\[
\norm{P}\norm{P'}-\ip{P}{P'}
= s^2\sqrt{1+x}-\bigl(s^2-G+B\bigr)
= \bar B-s^2\xi+G-B
= G+\tfrac{1}{2}C_\delta-s^2\xi,
\]
and $\norm{P}\norm{P'}=s^2\sqrt{1+x}$, so
\[
\Bigl|\bigl(1-\ip{y}{y'}\bigr)-\frac{G}{s^2}\Bigr|
\;\le\; \frac{G}{s^2}\Bigl|\frac{1}{\sqrt{1+x}}-1\Bigr|
+\frac{|C_\delta|/2+s^2\xi}{s^2\sqrt{1+x}}
\;\le\; \frac{G}{s^2}\,|x|+\frac32\Bigl(\frac{|C_\delta|}{2s^2}+0.36\,x^2\Bigr),
\]
using $\bigl|1/\sqrt{1+x}-1\bigr|\le|x|$ and $1/\sqrt{1+x}\le\tfrac32$ on
$|x|\le\tfrac12$. The three resulting terms are $2G|\bar B|/s^4$, at most
$|C_\delta|/s^2$, and $0.54\,(2\bar B/s^2)^2\le3\bar B^2/s^4$.
\end{proof}

\begin{theorem}[The exact depth law]\label{thm:depthA}
In the zero-crosstalk idealization ($\varepsilon=\tilde\varepsilon=0$ at every level ---
the $N\to\infty$ limit of the random model), for the depth-$D$ full $b$-ary code under
either role scheme, provided all bound products appearing in the construction are
distinct (see \S\ref{sec:alias} for the shared-role exceptions):
\begin{equation}\label{eq:depthlaw}
m_{\mathrm{swap}}(D)\;=\;2\cdot b^{-D},\qquad
m_{\mathrm{replace}}(D)\;=\;1\cdot b^{-D},
\end{equation}
for both the sibling swap (two leaves exchanged under one deepest parent) and the far
swap (two leaves exchanged whose lowest common ancestor is the root). More generally, an
edit whose slot deficits at level $\ell_0$ sum to $G$ has root margin
$G\cdot b^{-(D-\ell_0)}$: the margin depends only on the total deficit and the number of
contraction levels, not on where in the tree the edit sits.
\end{theorem}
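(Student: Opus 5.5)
The plan is an upward induction on levels using Lemma~\ref{lem:contract} at $\varepsilon=0$. The bound \eqref{eq:contract} then gives the exact identity $1-\ip{y}{y'}=G/b$. The first step is to confirm that the hypothesis of the lemma holds at every node: its children must be unit vectors, and distinct bound slots must be exactly orthogonal. Unit norm is automatic from the normalization in Definition~\ref{def:recursive}. Orthogonality is exactly the zero-crosstalk idealization. The proviso that all bound products are distinct is what makes this idealization consistent. In the $N\to\infty$ limit, distinct products of independent sign vectors have vanishing inner products. Coinciding products, as in the shared-role collisions of \S\ref{sec:alias}, would instead have inner product $1$, so they must be excluded. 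I will also note that the zero-crosstalk condition has to hold for the \emph{primed} tree, through the mixed terms $\ip{\rho_i\odot z_i}{\rho_j\odot z'_j}$ in the definition of $\varepsilon$. It does, because the edited tree is built from the same primitives.

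\textbf{Key induction.} Call a node \emph{affected} if its subtree contains an edited leaf. For a node at level $\ell$, write $G_\ell(v)$ for the sum of slot deficits among its children. Unaffected children have deficit $0$, and $E$ is the set of affected children. The lemma gives $\mathrm{def}(v)=G_\ell(v)/b$, where $\mathrm{def}(v):=1-\ip{\mathrm{code}(v)}{\mathrm{code}'(v)}$. Summing this over the nodes of a level is linear. So the total deficit carried at level $\ell+1$ equals the total deficit at level $\ell$ divided by $b$. Affected nodes in different subtrees do not interact, because the lemma is applied separately at each parent. Sibling deficits simply add within a parent's $E$. Iterating from level $\ell_0$ to the root, which is $D-\ell_0$ applications, gives the general statement: the root margin is $G\cdot b^{-(D-\ell_0)}$.

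\textbf{Base cases.} A replace changes one leaf $x$ to a distinct filler $x'$. In the idealization $x'$ is orthogonal to $x$, so the leaf deficit is $1$, and the root margin is $b^{-D}$. A swap exchanges two leaf fillers $x_a,x_b$, which are orthogonal. Each of the two edited leaf slots then has deficit $1-\ip{x_a}{x_b}=1$, so the total leaf deficit is $G=2$. For a sibling swap, both edits land in the same parent's $E$, and the deficits propagate as $2/b$ and onward. For a far swap, each branch carries deficit $1$ up to the root's children, giving $b^{-(D-1)}$ in each of two slots. At the root these combine to $2b^{-(D-1)}/b$. In both cases the margin is $2b^{-D}$.

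\textbf{Main obstacle.} The delicate point is justifying that zero crosstalk holds at \emph{internal} levels, not just among leaves. Internal codes are normalized sums of bound products. Their cross-slot inner products expand into inner products of distinct monomials in the primitives, and each of those vanishes in the limit. This is exactly where distinctness is used. I would argue it by expanding each code as a normalized linear combination of leaf-level bound monomials. Each monomial is a role chain times a filler. Two slots then overlap only if some monomial coincides, and the proviso excludes that. For a swap there is a further check: the mixed terms pair $\rho_i\odot z_i$ with the primed sibling, whose monomials include the swapped filler under a different role chain. Distinctness of these products is precisely what fails in the aliased shared-role cases, and that is why they are deferred to \S\ref{sec:alias}.
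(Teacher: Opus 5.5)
Your proposal is correct and takes essentially the paper's approach. You apply Lemma~\ref{lem:contract} at $\varepsilon=0$ to get an exact $G/b$ contraction per level and induct upward, with base deficit $2$ for the sibling swap, $1$ per branch for the far swap (combined at the root), and $1$ for replace. Your monomial-expansion argument for zero crosstalk at internal levels goes beyond what the paper does, since the theorem simply assumes $\varepsilon=\tilde\varepsilon=0$ at every level; it is not needed here, but it correctly explains why the distinctness proviso rules out exactly the aliased shared-role cases.
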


\begin{proof}
With $\varepsilon=0$, Lemma~\ref{lem:contract} is exact: one level maps slot deficits
summing to $G$ to a parent deficit of exactly $G/b$; induct upward.

\emph{Sibling swap.} The exchanged leaves $u,v$ occupy slots $1,2$ of one level-$1$
parent: apply Lemma~\ref{lem:contract} with $E=\{1,2\}$, $z_1=u$, $z'_1=v$, $z_2=v$,
$z'_2=u$; distinct idealized primitives are orthogonal, so $g_1=g_2=1$, $G=2$, and the
parent deficit is $2/b$. Each of the $D-1$ levels above contracts a single slot's
deficit by exactly $1/b$: $m=(2/b)\cdot b^{-(D-1)}=2b^{-D}$.

\emph{Far swap.} Each of the two subtrees hanging off the root sees its leaf
\emph{replaced} by a vector orthogonal to it: base deficit $1$, contracted through
$D-1$ levels of its subtree to $b^{-(D-1)}$ at the root's child. At the root,
$E=\{\mathrm{left},\mathrm{right}\}$ and $G=2b^{-(D-1)}$, giving $m=G/b=2b^{-D}$.

\emph{Replace.} One leaf replaced: base deficit $1$, contracted $D$ times: $b^{-D}$.
The general statement is the same induction; location independence is immediate since
the lemma's conclusion depends only on $G$ and $b$.
\end{proof}

\begin{remark}[Why the constant is exact, and why sibling equals far]\label{rem:exact}
The $G/b$ term suffers no $b$-dependent loss because sign-binding preserves each slot's
inner product exactly, by \eqref{eq:signfacts}(i), while normalization divides by
$\norm{P}\norm{P'}=b$ exactly at $\varepsilon=0$: the law is the ratio (per-slot edit
deficit)/(slots superposed), iterated. Depth $1$ with $b=k$ recovers the depth-$1$
constants --- swap $2/k$ (Theorem~\ref{thm:Cswap}) and replace $1/k$ (its companion
computation): the depth law is that analysis composed with itself. And the equality of sibling and far swaps --- which surprised us
as a measurement --- is transparent here: a far swap is two \emph{replacements} in
parallel branches, and $1+1$ contracted through $D$ levels equals $2$ contracted through
$D$ levels, by location independence.
\end{remark}

\subsection{Finite dimension: a multiplicative error law}\label{sec:depthfinite}

Finite $N$ replaces exact orthogonality by crosstalk of scale $\sqrt{1/N}$. The honest
statement is for the fresh role scheme; $Q\le b^D+1$ counts the distinct primitives (the
$b^D$ leaves plus the replacement filler).

\begin{lemma}[Level-0 crosstalk]\label{lem:crossbase}
With probability at least $1-\eta$, all pairwise inner products among the $Q$ distinct
primitives are at most $L_0=\sqrt{2\log(2Q^2/\eta)/N}$ in absolute value, and by
\eqref{eq:signfacts}(i) the bound survives binding by any fixed roles. (Hoeffding plus a
union bound over at most $Q^2$ pairs, exactly the argument of
Lemma~\ref{lem:primitives}.)
\end{lemma}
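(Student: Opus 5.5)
The plan is to copy the argument of Lemma~\ref{lem:primitives}, with the $Q$ leaf-level primitives (the $b^D$ leaf fillers and the replacement filler) in place of the factored-code primitives. Write each primitive as $x/\sqrt N$ with $x\in\{\pm1\}^N$ uniform, drawn independently of the other primitives and of the roles.

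\emph{Step 1 (one pair).} For distinct primitives $x/\sqrt N$ and $y/\sqrt N$, I will write
\[
\ip{x/\sqrt N}{y/\sqrt N}=\frac1N\sum_{k=1}^N x_ky_k .
\]
The products $x_ky_k$ are i.i.d.\ Rademacher, because a coordinatewise product of independent Rademacher vectors is Rademacher. Hoeffding's inequality then gives the tail bound $\Pr[|\cdot|>t]\le 2e^{-Nt^2/2}$.

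\emph{Step 2 (union bound).} There are at most $\binom Q2\le Q^2$ unordered pairs. The failure probability is therefore at most $Q^2\cdot 2e^{-Nt^2/2}$. This is at most $\eta$ exactly when $t\ge\sqrt{2\log(2Q^2/\eta)/N}=L_0$, which is the stated level.

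\emph{Step 3 (binding).} When both primitives are bound by the \emph{same} fixed sign vector $\rho$, \eqref{eq:signfacts}(i) gives $\ip{\rho\odot x}{\rho\odot y}=\ip{x}{y}$. So the bound transfers deterministically on the same event, and no new union bound is needed. This is the case the lemma's parenthetical asserts.

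The step I expect to need care is the cross-role case, where one factor is bound by $\rho_i$ and the other by $\rho_j$ with $i\ne j$; the contraction lemmas use such terms. Here \eqref{eq:signfacts}(i) does not apply directly. My first approach is to condition on the roles, which are independent of the primitives. Then the coordinates $(\rho_i\odot\rho_j)_k\,x_ky_k$ are again i.i.d.\ Rademacher, so the same Hoeffding tail holds. The union bound, however, must then also run over the role pairs actually used, which changes the logarithm's argument only by a factor polynomial in $b$ and $D$. The self-term $\ip{\rho_i\odot x}{\rho_j\odot x}=\ip{\rho_i}{\rho_j}/N$ is a role primitive rather than a filler primitive. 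It needs its own $O(\sqrt{\log(b^2D^2/\eta)/N})$ control, which is absorbed into the constants of Theorem~\ref{thm:depthB}. In the write-up I will therefore state the lemma exactly as given for common-role binding and flag this extension.
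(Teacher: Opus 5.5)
Your proof is correct and is the paper's own argument: Hoeffding for each pair of independent sign primitives, a union bound over at most $Q^2$ pairs giving exactly $L_0$, and \eqref{eq:signfacts}(i) to carry the bound through common-role binding. The cross-role case you flag, including the self-term $\ip{\rho_i}{\rho_j}/N$, is genuinely outside \eqref{eq:signfacts}(i), so reading this lemma as the common-role statement is right; the paper handles those terms separately in Lemma~\ref{lem:ellfour}, by conditioning on the codes and applying Hoeffding to $\sigma=\rho_i\odot\rho_j$, essentially as you propose.
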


\begin{lemma}[Crosstalk propagation]\label{lem:ellfour}
Condition on all codes at level $\ell-1$ and on the event that every such code $z$ ---
and every normalized difference $\delta/\norm{\delta}$ of codes along the edited
paths --- satisfies $\norm{z}_4^4\le C_4/N$, where $\norm{\cdot}_4$ is the
coordinatewise $4$-norm (the $\ell_4$ hypothesis). Under the fresh scheme, for distinct slots
$i\ne j$ of one parent the level-$\ell$ roles give
$\ip{\rho_i\odot z_i}{\rho_j\odot z_j}=\sum_k\sigma_k\,(z_i\odot z_j)_k$ with
$\sigma:=\rho_i\odot\rho_j$ a uniform sign vector independent of the codes, so by
Hoeffding, with probability $1-\eta'$,
\[
\bigl|\ip{\rho_i\odot z_i}{\rho_j\odot z_j}\bigr|
\;\le\;\norm{z_i\odot z_j}\,\sqrt{2\log(2/\eta')}
\;\le\;\norm{z_i}_4\,\norm{z_j}_4\,\sqrt{2\log(2/\eta')}
\;\le\;\sqrt{\frac{2C_4\log(2/\eta')}{N}},
\]
and on the same event the same argument bounds the difference crosstalk
$\tilde\varepsilon$ (coefficient vectors $z_i\odot(\delta_j/\norm{\delta_j})$, and
likewise for the $\delta$-against-$\delta$ pairs). \emph{(Proved, on the stated event.)} \emph{Propagation of the $\ell_4$ hypothesis:} at level $0$,
$\norm{z}_4^4=1/N$ exactly, and a normalized primitive difference
$(u-v)/\norm{u-v}$ has $\ell_4^4=(2+o(1))/N$ directly. For the inductive
step, sibling codes are functions of disjoint leaf sets, hence independent conditional
on the roles, with sign-symmetric entries; a direct fourth-moment computation for
independent mean-zero slot entries gives
\[
\mathbb{E}\,\norm{y}_4^4\;\le\;\frac{3+C_4/b}{N}\,\bigl(1+o(1)\bigr),
\]
whose fixed point $C_4(b)=\tfrac{3b}{b-1}\,(1+o(1))$ is bounded uniformly in $D$: the
fourth moment does not grow with depth. \emph{(The expectation step is proved; the high-probability form of this inductive step --- concentration of
$\norm{y}_4^4$, of its analogue for the normalized differences along the edit path
(which obey their own recursion through \eqref{eq:exactids}), and of $\norm{P}$ around
their conditional means, e.g.\ by McDiarmid over the $bN$ fresh role signs --- is
Outlined: standard but bookkeeping-heavy, and it is the single unproved step in this
section.)}
\end{lemma}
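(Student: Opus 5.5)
The plan is to take the three claims in order: the crosstalk bound, the base case of the $\ell_4$ hypothesis, and its inductive step. The real work is in the third.

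\emph{Crosstalk bound.} Under the fresh scheme the level-$\ell$ roles are drawn independently of everything at levels $<\ell$. After conditioning on the level-$(\ell-1)$ codes, the vector $\sigma=\rho_i\odot\rho_j$ (for $i\ne j$) is a uniform sign vector independent of the conditioning. Then $\ip{\rho_i\odot z_i}{\rho_j\odot z_j}=\sum_k\sigma_k a_k$ with fixed weights $a:=z_i\odot z_j$. Hoeffding gives the tail $2\exp(-t^2/(2\norm{a}^2))$. Cauchy--Schwarz then bounds the weights:
\[
\norm{a}^2=\sum_k z_{ik}^2z_{jk}^2\le\norm{z_i}_4^2\norm{z_j}_4^2\le C_4/N
\]
on the $\ell_4$ event. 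The same computation with $z_i$ or $z_j$ replaced by a normalized difference $\delta/\norm{\delta}$ bounds $\tilde\varepsilon$. The final estimate needs a union bound over the $O(b^2)$ pairs per node and over the nodes on the edited paths at all $D$ levels. That union bound is where the $\log(bQD/\eta)$ of Theorem~\ref{thm:depthB} will come from.

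\emph{Base case.} A unit sign primitive has entries $\pm N^{-1/2}$, so $\norm{z}_4^4=N\cdot N^{-2}=1/N$. For two independent primitives $u,v$, the difference $u-v$ is $\pm2N^{-1/2}$ on the set of disagreeing coordinates and $0$ elsewhere. That set has size $N/2\,(1+o(1))$ by Hoeffding, so $\norm{u-v}^2=2(1+o(1))$. The normalized difference therefore has entries $\pm(2/N)^{1/2}(1+o(1))$ on about $N/2$ coordinates, which gives $\norm{\cdot}_4^4=(2+o(1))/N$.

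\emph{Inductive step, in expectation.} Write $P_k=\sum_j\rho_{jk}z_{jk}$. Conditional on the lower codes, the fresh role signs make the summands in $P_k$ mean-zero and independent. Taking the expectation over the roles,
\[
\mathbb{E}_\rho P_k^4=\sum_j z_{jk}^4+3\sum_{i\ne j}z_{ik}^2z_{jk}^2 .
\]
Summing over $k$, the first term contributes at most $bC_4/N$. For the cross term I will not use Cauchy--Schwarz: that only gives $C_4/N$ per pair, and the resulting recursion $C_4\mapsto C_4(1+3(b-1))/b$ does not close. Instead I use two facts. Sibling codes come from disjoint leaf sets, so they are independent given the roles. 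The joint law of leaves and roles is invariant under coordinate permutations, so $\mathbb{E}z_{ik}^2=1/N$ for every $k$. Together these give $\mathbb{E}\sum_k z_{ik}^2z_{jk}^2=1/N\,(1+o(1))$.

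Dividing by $\norm{P}^4=b^2(1\pm O(b\varepsilon))$, which follows from the count in Lemma~\ref{lem:contract}, gives
\[
\mathbb{E}\norm{y}_4^4\le\frac{3+C_4/b}{N}\,(1+o(1)).
\]
The map $C\mapsto3+C/b$ has the stable fixed point $C_4(b)=3b/(b-1)$, so the bound does not grow with $D$. The normalized differences along the edit path follow the same template. Their expansion comes from \eqref{eq:exactids}, with $\Delta=\sum_{j\in E}\rho_j\odot\delta_j$ in place of $P$.

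\emph{Main obstacle: the high-probability form.} This is the step the lemma flags as Outlined. My first attempt would be to condition on the lower codes and treat the coordinates $k$ as independent in the fresh roles. Then $\sum_k P_k^4$ is a sum of $N$ independent terms, each of size $O(b^4\max_k|z_{\cdot k}|^4)$, and Bernstein's inequality concentrates it around its role-expectation. The remaining difficulty is that role-expectation itself: it contains the random quantity $\sum_k z_{ik}^2z_{jk}^2$, and this must concentrate around $1/N$ over the lower randomness. I would handle it by McDiarmid over all sign variables beneath the node. That requires a sup-norm delocalization bound, $\max_k|z_k|\lesssim\sqrt{\log(NQ)/N}$, propagated level by level by the same Hoeffding argument, so that each bounded difference is $O(\mathrm{polylog}/N^2)$. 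I expect the delocalization induction, and its analogue for the $\delta/\norm{\delta}$ vectors (whose normalization itself fluctuates), to be the bookkeeping-heavy core of the argument.
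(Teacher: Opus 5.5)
For everything the lemma asserts as proved, your argument is the paper's. That covers Hoeffding over $\sigma=\rho_i\odot\rho_j$ followed by Cauchy--Schwarz into the $\ell_4$ norms, the base-case values $1/N$ and $(2+o(1))/N$, the Rademacher fourth-moment expansion with the cross term evaluated through sibling independence, and the stable fixed point $3b/(b-1)$ of $C\mapsto 3+C/b$. Your remark that Cauchy--Schwarz on the cross term gives only $C_4\mapsto C_4(3b-2)/b$, which does not close, is correct. It makes explicit what the paper's ``independent mean-zero slot entries'' is doing.

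One link needs repair. Permutation invariance gives only the unconditional $\mathbb E z_{ik}^2=1/N$. But siblings are independent only conditionally on the roles, and they share those roles: fresh roles are per level and reused by every node at that level. Set $f_k(R):=\mathbb E[z_{ik}^2\mid R]$, which is the same function for both siblings. Your two facts then give only $\mathbb E[z_{ik}^2z_{jk}^2]=\mathbb E[f_k(R)^2]\ge 1/N^2$, which is the wrong direction.

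What you need is $f_k\equiv 1/N$, and it is true. Multiply each leaf by the product of the roles on its path inside the subtree. Given $R$, these are again i.i.d.\ uniform sign vectors. Coordinatewise sign multiplication commutes with the sums and preserves every norm, so the subtree code equals the role-free code of the modified leaves. Its law therefore does not depend on $R$, the siblings are independent outright, and the cross term has mean exactly $1/N$.

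Your sketch of the Outlined step contains one step that would fail. A sup-norm bound cannot be propagated level by level by the same Hoeffding argument. From $\norm{z_j}_\infty\le a$ alone, Hoeffding on $P_k=\sum_{j\le b}\rho_{jk}z_{jk}$ (only $b$ summands) gives $|y_k|\lesssim\sqrt{2\log(2N/\eta)}\,a$, and the deterministic bound gives $\sqrt b\,a$. Either way the bound is inflated by a factor larger than $1$ at every level, so it grows geometrically in $D$ instead of staying at order $\sqrt{\log(NQ)/N}$.

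A level-uniform delocalization bound needs distributional information. One option is a per-coordinate sub-Gaussian tail with variance proxy $1/N$, propagated through sibling independence and $\norm{P}^2\ge b(1-(b-1)\varepsilon)$ at a cost of $1+O(b\varepsilon)$ per level. Another is a single Hoeffding step over all leaves beneath the node in the unrolled expansion, with the random normalization scalars decoupled from coordinate $k$.

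Your other point about that step is right. The mean over the level-$\ell$ roles still contains the random quantity $\sum_k z_{ik}^2z_{jk}^2$, so McDiarmid over the $bN$ role signs alone cannot finish. That is sharper than the paper's parenthetical.
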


\begin{theorem}[Finite-$N$ depth law, fresh roles]\label{thm:depthB}
There are absolute constants $C,C'$ such that for the fresh scheme with $D\ge2$ and
$N\ge C\,b^2D^2\log(bQ/\eta)$, with probability at least $1-\eta$, simultaneously for
every edit of Theorem~\ref{thm:depthA} (depth $1$ is the finite-$N$ regime of
Theorem~\ref{thm:Cswap} itself),
\[
m(D)\;=\;m_{\mathrm{ideal}}(D)\cdot(1\pm\kappa_N),
\qquad
\kappa_N\;\le\;C'\sqrt{\frac{b^{D}\,\log(bQD/\eta)}{N}},
\]
where $m_{\mathrm{ideal}}$ is the value in \eqref{eq:depthlaw}. In particular the
products $m\cdot b^{D}$ are constant to relative error $o(1)$ whenever
$b^{D}\ll N/\mathrm{polylog}$.
\end{theorem}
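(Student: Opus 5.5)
The plan is to run the upward induction of Theorem~\ref{thm:depthA}, tracking the deficit along the edited path(s) as a \emph{relative} error rather than an absolute one. We use Lemma~\ref{lem:contract} only at the base level, where the deficit is order one. Above the base we use the refined Lemma~\ref{lem:contract2}, whose error terms all carry a factor $\sqrt{g}$.

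\textbf{Step 1: fix the good event.} First we intersect three events:
\begin{itemize}
\item the level-0 event of Lemma~\ref{lem:crossbase}, with $L_0\lesssim\sqrt{\log(Q/\eta)/N}$;
\item at each level $\ell\le D$, the crosstalk event of Lemma~\ref{lem:ellfour}, with $\eta'=\eta/(4bDQ)$ so that a union bound over levels, slot pairs and edits costs only $\log(bQD/\eta)$;
\item the $\ell_4$-hypothesis event, granting the flagged concentration step.
\end{itemize}
On this event every slot crosstalk satisfies $\varepsilon_\ell,\tilde\varepsilon_\ell\le\varepsilon_*:=c\sqrt{C_4\log(bQD/\eta)/N}$, with $C_4=O(1)$ uniformly in $D$. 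The same crosstalk bound gives $\norm{P}^2=b(1\pm(b-1)\varepsilon_*)$ at every node. So each factor $G/\norm{P}^2$ in \eqref{eq:contract2} equals $(G/b)(1\pm b\varepsilon_*)$.

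\textbf{Step 2: one-level relative error.} Let $g_\ell$ be the deficit at level $\ell$ on a single edited path (two paths for the far swap, merged at the root). Lemma~\ref{lem:contract2} together with the bounds on $B$ and $C_\delta$ gives
\[
g_{\ell+1}=\frac{g_\ell}{b}\Bigl(1\pm O(b\varepsilon_*)\Bigr)\pm O\Bigl(\tfrac{(b-1)\varepsilon_*\sqrt{g_\ell}}{b}\Bigr).
\]
Dividing by $g_\ell/b$, the additive term becomes a relative error $O(b\varepsilon_*/\sqrt{g_\ell})$. Since $g_\ell\approx 2b^{-\ell}$, the relative error at level $\ell$ is
\[
r_\ell=O\bigl(b\varepsilon_*+b\varepsilon_*\,b^{\ell/2}\bigr).
\]
The hypothesis $|\bar B|\le\norm{P}^2/4$ follows from $N\ge Cb^2D^2\log(bQ/\eta)$.

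The base level needs its own treatment. For a sibling swap, apply Lemma~\ref{lem:contract} with $G=2\pm O(L_0)$, which gives relative error $O(bL_0)$. For a far swap or a replace, the base deficit is $1\pm L_0$.

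\textbf{Step 3: multiply.} We then have
\[
m(D)=m_{\mathrm{ideal}}(D)\prod_\ell(1\pm r_\ell).
\]
The dominant contributions form a geometric sum $\sum_\ell b\varepsilon_* b^{\ell/2}=O(b\varepsilon_*b^{D/2})$. Writing this bound as $\kappa_N\le C'\sqrt{b^D\log(bQD/\eta)/N}$ absorbs the stray factor $b$ via $b^{D/2}$ with $D\ge2$; this is the point where we must check that the constants cooperate. The non-geometric terms $\sum_\ell b\varepsilon_*=O(bD\varepsilon_*)$ are at most $\kappa_N$ as well. The dimension condition makes $\sum r_\ell\le1/2$, so the product is $1\pm O(\sum r_\ell)$.

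The root step of the far swap handles $E$ of size two with $C_\delta$ controlled by $\tilde\varepsilon$, using Lemma~\ref{lem:contract2}'s bound $|C_\delta|\le 2|E|G\tilde\varepsilon$. Because the conclusion is stated simultaneously for all edits, that requirement is covered by the union bound over at most $b^{2D}Q$ edits, which again costs only $\log(bQD/\eta)$ since $\log b^{2D}\lesssim D\log b$.

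\textbf{Main obstacle.} The main obstacle is Step 1's control of the difference crosstalk $\tilde\varepsilon$ along the edited path. This requires the normalized differences $\delta/\norm{\delta}$ to keep $\ell_4^4=O(1/N)$ through the recursion \eqref{eq:exactids}, which is exactly the high-probability part of Lemma~\ref{lem:ellfour}. We would attempt it by McDiarmid over the $bN$ fresh role signs at each level, conditioning on lower levels. If that fails, we would take it as the single stated gap, as the paper does.
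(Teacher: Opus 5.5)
Your skeleton matches the paper's: a good event from Lemmas~\ref{lem:crossbase} and~\ref{lem:ellfour} with the flagged $\ell_4$ step granted, an upward induction, and a product of per-level relative errors. But Step~2 extracts far too weak a bound from Lemma~\ref{lem:contract2}, and Step~3 inherits the damage.

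\textbf{The additive term is not what the lemma gives.} The term $O\bigl((b-1)\varepsilon_*\sqrt{g_\ell}/b\bigr)$ does not come out of Lemma~\ref{lem:contract2}. The $B$-contributions to $\ip{P}{P'}$ and to $\norm{P}\norm{P'}$ cancel at first order. That cancellation is exactly what the identities \eqref{eq:exactids} buy: they give $\norm{P}\norm{P'}-\ip{P}{P'}=G+\tfrac12C_\delta-\norm{P}^2\xi$ with $\xi=O(\bar B^2/\norm{P}^4)$. So the leading error in \eqref{eq:contract2} is $2G|\bar B|/\norm{P}^4$, which carries a factor $G$ on top of the $\sqrt g$ inside $\bar B$. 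Relative to the main term $G/\norm{P}^2$, the errors are:
\begin{itemize}
\item $2|\bar B|/\norm{P}^2=O(\tilde\varepsilon\sqrt{g_\ell})$;
\item $|C_\delta|/G\le2|E|\tilde\varepsilon$;
\item $O\bigl((b-1)^2\tilde\varepsilon^{\,2}\bigr)$;
\item $(b-1)\varepsilon_*$, from $\norm{P}^2=b(1\pm(b-1)\varepsilon_*)$.
\end{itemize}
That is $O(b\varepsilon_*)$ per level, uniformly, and the $\sqrt{g_\ell}$ piece \emph{shrinks} up the tree. Your $r_\ell\sim b\varepsilon_*b^{\ell/2}$ is the additive error law that Remark~\ref{rem:multerr} contrasts with the theorem. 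The $b^{D/2}$ in the statement is a deliberate weakening of a polynomial-in-$D$ bound (Remark~\ref{rem:polyD}), not a sign that errors grow with level.

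\textbf{Why Step~3 then fails.} It breaks in two places.
\begin{itemize}
\item Your accumulated error is at least of order $\sqrt b\,b^{D/2}\varepsilon_*$ (you wrote $b\,b^{D/2}\varepsilon_*$). The leftover power of $b$ multiplies $b^{D/2}$ and cannot be absorbed into an absolute $C'$. The $D\ge2$ device only absorbs a \emph{polynomial} prefactor, as in $1+bD\le3b^{D/2}$.
\item Getting $\sum_\ell r_\ell\le\tfrac12$ would need $N$ exponential in $D$, of order $b^{D+1}\log(\cdot)$. The hypothesis $N\ge Cb^2D^2\log(bQ/\eta)$ does not give that. Outside that regime your recursion stalls: once $\sqrt{g_\ell}\lesssim b\varepsilon_*$, the additive term swamps the signal $g_\ell/b$. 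You then cannot obtain even the upper half, $m\le m_{\mathrm{ideal}}(1+\kappa_N)$.
\end{itemize}

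The base step has the same defect in miniature. Lemma~\ref{lem:contract}'s additive $8(b-1)\varepsilon$ against an ideal deficit $2/b$ is a relative error $O(b^2L_0)$, not $O(bL_0)$. For $D=2,3$ that is not $O(b^{D/2}L_0)$.

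\textbf{The repair.} Apply Lemma~\ref{lem:contract2} at every level, including the base. Its identities hold exactly for edits of any size, and at every level $|E|\le2$ with $\sum_{j\in E}\sqrt{2g_j}\le2\sqrt3$. This gives a per-level error
\[
e_\ell\le C_0\Bigl[(b-1)\varepsilon_*+\tfrac{b-1}{b}\tilde\varepsilon\sum_{j\in E}\sqrt{2g_j}+\tilde\varepsilon+(b-1)^2\tilde\varepsilon^{\,2}\Bigr].
\]
The deficits decay geometrically, so the $\sqrt g$ terms sum to $O(\tilde\varepsilon)$. Hence $\sum_\ell e_\ell\le C_1(1+bD)\varepsilon_*\le\tfrac12$ under the stated hypothesis, and the product linearizes. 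The base deficit contributes a further $L_0$. Only at the end does $1+bD\le3b^{D/2}$ convert this into the stated $\kappa_N$.
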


\begin{proof}[Proof (given the Outlined step of Lemma~\ref{lem:ellfour})]
Work on the intersection of the events of Lemmas~\ref{lem:crossbase}
and~\ref{lem:ellfour}, union-bounded over the $O(b^2D)$ slot pairs adjacent to the edit
paths of the three canonical edit types and over the levels; on it, every crosstalk appearing below is at most
$\varepsilon_N:=\sqrt{2C_4\log(Cb^2D/\eta)/N}$ (and $\tilde\varepsilon\le\varepsilon_N$),
and every base inner product is at most $L_0$. Run the induction of
Theorem~\ref{thm:depthA} carrying errors, applying Lemma~\ref{lem:contract2} at
\emph{every} level including the base: its identities \eqref{eq:exactids} are exact for
edits of any size, its smallness hypothesis $|\bar B|\le\norm{P}^2/4$ holds on the event
(each ingredient of $\bar B$ is $O(b\varepsilon_N)$ against $\norm{P}^2\ge b/2$), and at
every level $|E|\le2$ with $\sum_{j\in E}\sqrt{2g_j}\le2\sqrt3$. The base deficit itself
is $G_{\mathrm{ideal}}(1\pm L_0)$. Dividing \eqref{eq:contract2} by the level's ideal
deficit and using $s^2=b(1\pm(b-1)\varepsilon_N)$,
$|B|\le(b-1)\tilde\varepsilon\sum_{j\in E}\sqrt{2g_j}$, and
$|C_\delta|\le2|E|G\tilde\varepsilon$, each level contributes a \emph{relative} error
\[
e_\ell\;\le\;C_0\Bigl[(b-1)\,\varepsilon_N
\;+\;\frac{b-1}{b}\,\tilde\varepsilon\sum_{j\in E}\sqrt{2g_j}
\;+\;\tilde\varepsilon\;+\;(b-1)^2\tilde\varepsilon^{\,2}\Bigr]
\]
for an absolute constant $C_0$ --- the four terms coming from the normalization
crosstalk, the $2G|\bar B|/s^4$ term, the $|C_\delta|/s^2$ term measured relative to
$G/s^2$, and the $3\bar B^2/s^4$ term. Since the deficits decay geometrically along the
path, $\sum_\ell\sum_{j\in E}\sqrt{2g_j}$ is bounded by an absolute constant, so summing
over the at most $2D$ contraction steps of the two edit paths,
$\sum_\ell e_\ell\le C_1(1+bD)\,\varepsilon_N\le\tfrac12$, the last inequality by
$N\ge Cb^2D^2\log(bQ/\eta)$ with $C$ large; then
$\prod_\ell(1+e_\ell)-1\le2\sum_\ell e_\ell$, and the base factor contributes a further
$L_0$. Finally $(1+bD)\le3\,b^{D/2}$ for all $b\ge2$, $D\ge2$, and
$L_0\le\sqrt2\cdot\sqrt{\log(bQD/\eta)/N}$, so the accumulated relative error is at
most $C'\sqrt{b^{D}\log(bQD/\eta)/N}$.
\end{proof}

\begin{remark}[The proof gives more than the statement]\label{rem:polyD}
The accumulation above in fact bounds the relative error by an absolute constant times
$(1+bD)\,\varepsilon_N+L_0$ --- polynomial in $D$, not $b^{D/2}$ ---
because Lemma~\ref{lem:contract2} makes each level's injected error proportional to
$\sqrt{g_\ell}$ and the deficits decay geometrically. We state the theorem in the more
conservative form; both forms rest on the same single Outlined step.
\end{remark}

\begin{remark}[The error is multiplicative --- measured]\label{rem:multerr}
Theorem~\ref{thm:depthB} predicts the \emph{relative} spread of $m$ to be roughly flat
in $D$ at fixed $N$, with the \emph{absolute} spread shrinking like $b^{-D}$; an
additive error law would instead show flat absolute spread. Measured (per-seed relative
standard deviation of $m_{\mathrm{sib}}$, $b=2$, $N=4096$, $24$ seeds): $2.6\%$ at $D=1$
against $3.5\%$ at $D=10$, never above $6\%$ at any intermediate depth, while the
absolute standard deviation shrinks by a factor of ${\approx}370$; for reference,
$1/\sqrt N=1.6\%$. The side-by-side sweep reported in \S\ref{sec:alias} shows the same
flatness ($0.8\%$--$2.4\%$ across all $32$ of its configurations, both role schemes).
This is a structural confirmation of the contraction mechanism, not merely of its
conclusion.
\end{remark}

\subsection{The law, measured}\label{sec:measuredlaw}

Simulation, shared roles ($b=2$: $D\le10$; $b=3$: $D\le7$; $N\in\{256,\dots,65{,}536\}$;
$24$ seeds per configuration; three edit types): the products $m\cdot b^{D}$ sit at the
predicted constants $2$ (both swap types) and $1$ (replace), within $0.014$ and $0.006$
respectively at $N=65{,}536$ over every configuration (the aliased shared-role far-swap
cells of \S\ref{sec:alias} excepted) --- exactly the depth-$1$ constants of
Theorem~\ref{thm:Cswap} and its replace companion, composed multiplicatively per level, as
Theorem~\ref{thm:depthA} requires. At smaller $N$ the deviations grow --- maximum
$|m_{\mathrm{sib}}\,b^{D}-2|$ of $0.20$, $0.10$, $0.034$, $0.014$, $0.012$ across
$N=256,1024,4096,16384,65{,}536$, the other edit types behaving analogously at their own
scale --- and these are the finite-$N$ corrections whose size Theorem~\ref{thm:depthB}
bounds and whose multiplicative signature Remark~\ref{rem:multerr} confirms. A follow-up
sweep under the \emph{fresh} scheme ($N=16{,}384$, both schemes side by side, $24$
seeds, $32$ configurations in all) lands every swap product in $[1.985,2.014]$ and every
replace product in $[0.993,1.005]$, including as swaps the shared-role aliased cases
that the fresh scheme repairs (\S\ref{sec:alias}).

Crossing the law with a detection floor yields the ceiling:

\begin{corollary}[Resolvable depth]\label{cor:dstar}
Fix a readout whose decision floor is $\nu(N)=c\sqrt{\log(1/\eta)/N}$ --- the primitive
crosstalk scale of Lemma~\ref{lem:crossbase}, the floor of any linear readout against a
stored template. On the event of Theorem~\ref{thm:depthB} (depth $1$ being covered
exactly by Theorem~\ref{thm:Cswap}), binding at depth $D$ is
resolvable if $2b^{-D}(1-\kappa_N)>\nu(N)$ and unresolvable if
$2b^{-D}(1+\kappa_N)<\nu(N)$; either threshold reads
\begin{equation}\label{eq:dstar}
D\;\le\;D^*(N)\;=\;\frac{\log N}{2\log b}+O(1),
\end{equation}
a staircase gaining one level per $b^2$-fold increase of $N$. \emph{(Proved given Theorem~\ref{thm:depthB}; the staircase itself is a measurement.)}
\end{corollary}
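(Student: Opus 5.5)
The plan is to work entirely on the event of Theorem~\ref{thm:depthB}. There, for every swap edit, $m(D)\in[2b^{-D}(1-\kappa_N),\,2b^{-D}(1+\kappa_N)]$. The resolvability question then becomes a comparison between this bracket and $\nu(N)$.

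The \emph{sufficient} direction is immediate: if $2b^{-D}(1-\kappa_N)>\nu(N)$, then $m(D)$ exceeds the floor. The \emph{necessary} direction is equally immediate: if $2b^{-D}(1+\kappa_N)<\nu(N)$, then $m(D)$ lies strictly below the floor. So the two stated threshold conditions are direct consequences of the multiplicative bracket, and nothing new needs proving there. Depth $1$ needs separate treatment, since Theorem~\ref{thm:depthB} assumes $D\ge2$. For $D=1$ I will substitute Theorem~\ref{thm:Cswap} with $k=b$: it gives $|m-2/b|\le19L$, with $L$ of order $\sqrt{\log(1/\eta)/N}$. That is again a bracket around $2b^{-1}$, handled the same way.

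The real content is solving both inequalities for $D$. Taking logarithms, the sufficient condition reads
\[
D\log b<\log 2+\log(1-\kappa_N)-\log c-\tfrac12\log\log(1/\eta)+\tfrac12\log N ,
\]
and the necessary one has $\log(1+\kappa_N)$ in place of $\log(1-\kappa_N)$. Dividing by $\log b$, both thresholds become $\log N/(2\log b)$ plus a correction. I must show this correction is $O(1)$ uniformly in $N$.

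That uniformity is where I expect the main obstacle, because $\kappa_N$ itself depends on $D$: $\kappa_N\le C'\sqrt{b^D\log(bQD/\eta)/N}$. Near the threshold we have $b^{D}\approx N^{1/2}$ up to constants, so $\kappa_N\approx N^{-1/4}$ times polylog factors, and this tends to $0$. Hence $\log(1\pm\kappa_N)=O(1)$, indeed $o(1)$. I will formalize this by restricting to $D\le D_0(N):=\log N/(2\log b)+K$ for a fixed $K$. On that range $b^D\le b^K\sqrt N$, so $\kappa_N\le\tfrac12$ once $N$ is large. I also need the standing requirement $N\ge Cb^2D^2\log(bQ/\eta)$ to hold on this range. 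Since $Q\le b^D+1$, $\log Q=O(\log N)$, so this holds for $N$ beyond an absolute threshold.

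The terms $\log c$ and $\log\log(1/\eta)$ are constants at fixed confidence $\eta$, which I will state explicitly. With $b$ fixed, dividing these constants by $\log b$ keeps them bounded, so both thresholds equal $\log N/(2\log b)+O(1)$. Finally, the staircase remark needs only the observation that multiplying $N$ by $b^2$ increases $\log N/(2\log b)$ by exactly $1$. Its exact integer form is flagged in the statement as a measurement, so it requires no proof.
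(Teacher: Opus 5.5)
Your proposal is correct and takes the same route the paper intends: on the event of Theorem~\ref{thm:depthB} you compare the bracket $m(D)=2b^{-D}(1\pm\kappa_N)$ directly with $\nu(N)$, hand depth~$1$ to Theorem~\ref{thm:Cswap}, take logarithms to get $D^*(N)=\log N/(2\log b)+O(1)$, and read the staircase off from $\log(b^2N)/(2\log b)=\log N/(2\log b)+1$. The paper states this only as ``proved given Theorem~\ref{thm:depthB}'', so your explicit handling of the $D$-dependence of $\kappa_N$ (restricting to $D\le\log N/(2\log b)+K$, where $\kappa_N=O(N^{-1/4}\,\mathrm{polylog}\,N)\le\tfrac12$) and of the standing hypothesis $N\ge Cb^2D^2\log(bQ/\eta)$ supplies bookkeeping the paper leaves implicit; one small correction is that the resulting threshold on $N$ depends on $b$, $\eta$, $c$ and $K$, so it is not absolute.
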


Verified exactly: with floor $10/\sqrt N$, $D^*=1,2,3,4,5$ at $N=256,\dots,65{,}536$ for
$b=2$ (one level per $4\times$ dimension) and $1,1,2,2,3$ for $b=3$
(Figure~\ref{fig:depthlaw}). At CLIP's $N=512$ and linguistic branching: the pure crossing gives $4.5$ ($b=2$) and
$2.8$ ($b=3$), the measured $10/\sqrt N$ floor $D^*=2$ and $1$ --- single digits either
way: \textbf{the ceiling sits at ordinary
natural-language nesting depth}, which is precisely the regime where companion
grammar-vs-flat experiments (depth extrapolation, CLEVR-CoGenT, tree-structured
composers; reported separately) locate the advantage of structured models.

\subsection{Structural aliasing: some deep edits are invisible at any dimension}\label{sec:alias}

The simulation also revealed a phenomenon that goes beyond attenuation.

\begin{proposition}[Aliasing for self-inverse roles]\label{prop:alias}
With sign-vector roles shared across levels, roles are self-inverse under $\odot$
($r\odot r=\mathbf 1$), so a leaf's coefficient in the (pre-normalization) root expansion
depends only on the \emph{parity profile} of its role chain. Two leaves whose chains have
equal parity profiles have identical coefficient vectors; exchanging them leaves the
unnormalized root code (equivalently, any code normalized by deterministic per-level
scalars) \emph{exactly} unchanged --- margin $\equiv0$ at \emph{any} dimension, with no
orthogonality used. Under per-node normalization the exchange survives only in the
norm-mismatch scalars, so the residual margin is crosstalk-sized, carrying no $2b^{-D}$
term.
\end{proposition}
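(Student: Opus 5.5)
We unroll the recursion of Definition~\ref{def:recursive} with shared roles and then read off how an exchange of two leaves acts on the expansion. Drop normalization first, or replace it by deterministic per-level scalars $\lambda_1,\dots,\lambda_D$. By induction on depth, the root code is
\[
\sum_{\text{leaves } x} \Lambda\; r_{j_1(x)}\odot r_{j_2(x)}\odot\cdots\odot r_{j_D(x)}\odot x,
\]
where $j_\ell(x)$ is the slot the path to $x$ takes at level $\ell$ and $\Lambda=\prod_\ell\lambda_\ell^{-1}$ is common to all leaves. The induction step uses only that $\odot$ distributes over sums and commutes with scalars.

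Because $\odot$ is commutative and associative and $r\odot r=\mathbf 1$ for sign vectors, the coefficient vector $c(x):=r_{j_1(x)}\odot\cdots\odot r_{j_D(x)}$ simplifies to $\bigodot_{j\in\Pi(x)} r_j$. Here $\Pi(x)\subseteq\{1,\dots,b\}$ is the set of roles occurring an odd number of times along the path, i.e.\ its parity profile. So equal parity profiles give identical coefficient vectors, $c(u)=c(v)$.

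An exchange of leaves $u,v$ changes only two summands: $c(u)\odot x_u+c(v)\odot x_v$ becomes $c(u)\odot x_v+c(v)\odot x_u$. With $c(u)=c(v)$ these are the same vector, so the unnormalized root code is literally unchanged and $m=1-\cos=0$. This argument uses no inner products at all, so it holds for every $N$ and every realization of the fillers. For concreteness we should also exhibit such a pair; for example, with $b=2$, $D=2$ the paths $(1,2)$ and $(2,1)$ have the same profile $\{1,2\}$, and these leaves sit under different children of the root, so the pair is a far swap.

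\textbf{Per-node normalization (the hard part).} Each internal node $w$ is now divided by its own random norm $n_w=\norm{P_w}$, so leaf $x$ carries the scalar $\Lambda(x)=\prod_{w\ni x}n_w^{-1}$ over its ancestors. An exchange of $u,v$ changes only the norms on their two ancestor chains below the lowest common ancestor. The root difference is therefore
\[
c(u)\odot\bigl[(\Lambda'(u)-\Lambda(u))x_v+(\Lambda'(v)-\Lambda(v))x_u\bigr]+\text{(re-scaled bystander terms)}.
\]
Every scalar change has the form $n_w'-n_w$. By Lemma~\ref{lem:contract2}, $\norm{P'}^2-\norm{P}^2=2\bar B$, which is a sum of crosstalk terms of size $O(b\tilde\varepsilon)$. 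The plan is to propagate these relative norm perturbations upward, each level contributing $O(\varepsilon)$ multiplicatively, and then to bound $1-\cos$ of the two root codes. Since the difference vector has size $O(\text{poly}(b,D)\,\varepsilon)$ and $1-\cos\le\tfrac12\norm{y-y'}^2$, the residual margin is crosstalk-sized.

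The crucial check is that this residual contains no deficit term of size $G/b$. In Lemma~\ref{lem:contract2} the $G$ in $\ip{P}{P'}$ arises from the diagonal $\ip{z_j}{\delta_j}$, but here, level by level, the aliased contributions cancel at the common coefficient. The cleanest route is to bypass the lemma and compare directly with the unnormalized identity above. Write each normalized code as the aliasing-invariant vector with perturbed scalars; the zero-order parts then coincide exactly. This bookkeeping, in particular controlling how the norm mismatches at different levels interact, is the main obstacle, and I would do it first in the zero-crosstalk limit, where all $n_w=\sqrt b$ and the margin is exactly $0$.
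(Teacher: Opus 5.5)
Your proposal is correct and follows the paper's own proof: unroll the recursion into path products $\bigodot_{\text{path}} r_j$, collapse them to parity profiles via $r\odot r=\mathbf 1$, conclude that equal profiles make the exchange invisible whenever the leaf scalars are uniform, and treat per-node normalization as a norm-mismatch perturbation, which the paper also only asserts. In that last part, drop the detour through Lemma~\ref{lem:contract2}, for three reasons: at the lowest common ancestor the difference crosstalk $\tilde\varepsilon$ is of the order of the aliased leaf's weight in the child codes (about $b^{-(D-1)/2}$ for a far swap), not $O(1/\sqrt N)$, and that is exactly where $B$ cancels $G$; the norms at and above that ancestor change too, not only those below it; and the scalar differences that survive are $\Lambda'(u)-\Lambda(v)$ and $\Lambda'(v)-\Lambda(u)$, not the ones you wrote. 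Your final route, comparing both normalized codes to the common uniform-scalar expansion and using $1-\cos=\tfrac12\norm{y-y'}^2$, is the right one, and it even yields a residual of second order in the crosstalk.
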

\begin{proof}
Unrolling the recursion, each leaf $x$ enters the root as
$(\bigodot_{\text{path}}r_{j})\odot x$ times a positive scalar --- the product of the
inverse norms along its path, level-uniform under deterministic normalization and
node-dependent (with crosstalk-sized deviations) under per-node normalization. Since
$r\odot r=\mathbf 1$, the path product depends only on which roles appear an odd number
of times. Equal parity profiles $\Rightarrow$ equal coefficient vectors $\Rightarrow$
the two leaves' contributions commute with exchange --- exactly in the deterministic
case, and up to the norm mismatch in general.
\end{proof}

The simulation confirms it: for $b=3$ at even depths, the far swap (both leaves at
position-1 chains of equal parity) has margin two to five orders of magnitude below the
law's prediction at every $N$ measured, shrinking with $N$ as pure crosstalk must. Flat
sign-role binding does not merely attenuate deep structure --- it \emph{aliases} some of
it. Fresh per-level roles repair it, measured directly: in the side-by-side sweep at
$N=16{,}384$ ($24$ seeds), the aliased configurations ($b=3$; $D=2,4,6$) have
$m_{\mathrm{far}}\cdot b^{D}$ of $0.0001$, $0.0003$, $0.0006$ under shared roles ---
and $1.994$, $2.000$, $2.000$ under fresh roles, exactly the generic law
\eqref{eq:depthlaw}. Theorem~\ref{thm:depthB} is stated for precisely that scheme;
aliasing stands as a separate warning for self-inverse shared-role binding.

\subsection{Status and positioning}

The proof route this section originally recorded only as an outline --- induction on the
Theorem~\ref{thm:Cswap} computation, with Lemma~\ref{lem:primitives}-style crosstalk
control --- is now carried out in \S\ref{sec:depthproof}--\S\ref{sec:depthfinite}. The
status is as follows: Lemmas~\ref{lem:contract}--\ref{lem:contract2} and
Theorem~\ref{thm:depthA} are proved in full; Theorem~\ref{thm:depthB} is proved modulo the single
outlined concentration step inside Lemma~\ref{lem:ellfour};
Corollary~\ref{cor:dstar} is proved given Theorem~\ref{thm:depthB}; the law's empirical
confirmation, the staircase, the multiplicative-error signature, and the fresh-role
repair of aliasing are measured. The sequence-memory analysis of Frady, Kleyko and Sommer \cite{frady}
supplies closely related crosstalk accounting, with their contraction $\lambda^K$ (from
recurrent dynamics) replaced here by $b^{-D}$ (from per-node normalization); their
theory has no per-level margin law and no two-structure discrimination --- the gap
Theorems~\ref{thm:depthA}--\ref{thm:depthB} close. Plate's classical account \cite{plate} describes
qualitative degradation but draws the opposite conclusion --- that deep structure
remains workable, and that chunking removes the limit --- chunking being exactly the
structured-scorer escape branch. \emph{Scope caution:} as a no-go the statement is class-relative (bounded-norm
homomorphic combinators); an unconstrained encoder can look up any finite set of deep
captions, so the general-encoder version requires a margin-complexity argument or a
measurable hypothesis in the style of Theorem~\ref{thm:E}.

\section{What does \emph{not} limit binding: a conjecture refuted}\label{sec:notlimit}

Pinning down the reason for a failure requires eliminating candidate reasons. This section
records a plausible, literature-adjacent conjecture that we formulated and then
\emph{refuted}; the refutation sharpened the map.

\subsection{The conjecture}

Training exposes a set $P\subseteq O\times A$ of bound pairs --- the \emph{exposure graph}
$G$, bipartite on objects and attributes. In a factored world
($W(o,a)=u(o)\odot v(a)$), classical rank-one completion theory \cite{kiraly} says the
codes are identified only up to a per-connected-component gauge, and cross-component
products only up to a sign. The conjecture: \emph{binding generalization to a pair whose
endpoints lie in different components of $G$ is information-theoretically impossible ---
no learner can beat chance --- so compositional generalization undergoes a phase
transition at the connectivity threshold of $G$.} (Adjacent published work: a connectivity-type compositional-support condition is
assumed for identification by Schug et al.~\cite{schug}; a percolation model,
established by analogy rather than learner-side theorems, is given by Lubana et
al.~\cite{lubana}.)

\begin{figure}[t]
\centering
\begin{tikzpicture}[font=\scriptsize,
  ov/.style={circle, draw=black!60, fill=black!6, inner sep=1.6pt},
  arr/.style={-{Stealth[length=1.8mm]}, black!70}]
\begin{scope}
\node[ov, draw=teal!55!black] (o1) at (0,2.4) {$o_1$};
\node[ov, draw=teal!55!black] (o2) at (0,1.6) {$o_2$};
\node[ov, draw=teal!55!black] (a1) at (2.2,2.4) {$a_1$};
\node[ov, draw=teal!55!black] (a2) at (2.2,1.6) {$a_2$};
\draw[teal!55!black] (o1)--(a1) (o1)--(a2) (o2)--(a1);
\node[teal!45!black] at (1.1,2.95) {component $C_1$};
\end{scope}
\begin{scope}
\node[ov, draw=orange!85!black] (o3) at (0,0.4) {$o_3$};
\node[ov, draw=orange!85!black] (a3) at (2.2,0.4) {$a_3$};
\node[ov, draw=orange!85!black] (a4) at (2.2,-0.4) {$a_4$};
\draw[orange!85!black] (o3)--(a3) (o3)--(a4);
\node[orange!60!black] at (1.1,-0.95) {component $C_2$};
\end{scope}
\draw[dashed, black!70] (o2) -- (a3);
\node[black!70, rotate=-26, fill=white, inner sep=1pt, font=\tiny] at (1.05,1.05) {test pair $(o_2,a_3)$};
\node[align=left, anchor=west, font=\scriptsize] at (3.4,1.5)
  {gauge: cross-component code known only up to sign,\\
   $\widehat W(o_2,a_3)=\pm\,W(o_2,a_3)$ \quad(\cite{kiraly})};
\node[align=left, anchor=west, font=\scriptsize, text=teal!45!black] at (3.4,0.4)
  {but the swap test's candidates use \emph{different templates},\\
   so $|\ip{X}{t}|$-matching never consults the sign\\
   $\Rightarrow$ accuracy $1.0$ across components (Prop.~\ref{prop:matcher})};
\end{tikzpicture}
\caption{The refuted conjecture. Across disconnected exposure components the factored code
of an unseen pair is identified only up to sign --- but the swap decision compares
templates with different \emph{supports}, and a sign-agnostic matcher decides it without
ever resolving the gauge. Coverage (both atoms seen), not connectivity, is the true
information threshold.}
\label{fig:connectivity}
\end{figure}

\subsection{The refutation}

\begin{proposition}[The sign-agnostic matcher succeeds across components]\label{prop:matcher}
In the noiseless factored world, let both atoms of the unseen test pairs be exposed
(possibly in different components of $G$), and let
$\widehat u,\widehat v$ be recovered by gauge propagation (exact up to a sign per
component). For the swap test on the recombined scene $s=\{(o_1,a_1'),(o_2,a_2')\}$,
score each candidate caption by
$\;\mathrm{score}(T)=|\ip{X}{t_1}|+|\ip{X}{t_2}|\;$ where $X$ is the image code and
$t_1,t_2$ the candidate's pair templates. On the primitive-control event of
Lemma~\ref{lem:primitives}, the correct caption's score is $\ge2-O(k\Lstar)$ while the
swapped caption's is $O(k\Lstar)$: the decision is correct regardless of the unresolved
signs, because the two candidates' templates have different supports and the modulus
discards the gauge.
\end{proposition}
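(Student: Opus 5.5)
The proof is a direct expansion of both candidates' scores into primitives, reusing the bookkeeping of Theorem~\ref{thm:Cswap} with one extra observation: a per-component sign changes no primitive's absolute value.

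\emph{Setup.} Gauge propagation recovers $\widehat u(o)=\epsilon_{C(o)}u(o)$ and $\widehat v(a)=\epsilon'_{C(a)}v(a)$. Here $C(\cdot)$ is the exposure component, $\epsilon,\epsilon'\in\{\pm1\}$, and within each component the product of the two signs is fixed by the exposed pairs. Consequently every recovered pair template has the form $\widehat W(o,a)=\pm W(o,a)$, with a sign we do not know. The image code is $X=S/\norm S$ with $S=W(o_1,a_1')+W(o_2,a_2')$ (tied, noiseless). The correct caption's templates are $\pm W(o_1,a_1')$ and $\pm W(o_2,a_2')$. The swapped caption's templates are $\pm W(o_1,a_2')$ and $\pm W(o_2,a_1')$.

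\emph{Correct caption.} The absolute value removes the unknown sign, so $|\ip{X}{t_1}|=|\ip{S}{W(o_1,a_1')}|/\norm S$. The numerator is $1$ plus one quadruple primitive. By the bound $p\in k[1\pm kL]$ from the proof of Theorem~\ref{thm:Cswap} (with $k=2$), $\norm S=\sqrt2(1\pm O(L))$. So each term is $1/\sqrt2-O(\Lstar)$.

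\emph{Normalization.} This gives a score of $\sqrt2-O(\Lstar)$, not $2$. To match the statement's $2-O(k\Lstar)$ I would either read $X$ as the unnormalized code sum $S$, or note that the claim is scale-free and compare against the rescaled score. The decision is invariant under the positive factor $\norm S$, so I would state the bound for $S$ and remark that normalization rescales both candidates equally.

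\emph{Swapped caption.} Each term $\ip{S}{W(o_1,a_2')}$ expands into two inner products between distinct atom codes. One is $\ip{W(o_1,a_1')}{W(o_1,a_2')}$, which shares an object and is an attribute primitive $\ip{v(a_1')}{v(a_2')}/N$. The other is $\ip{W(o_2,a_2')}{W(o_1,a_2')}$, which shares an attribute and is an object primitive. No term is an exact match, because the scene has distinct objects and distinct attributes. Hence on the event of Lemma~\ref{lem:primitives} each absolute value is at most $2\Lstar$, and the swapped score is $O(k\Lstar)$. The general-$k$ version is the same count: each template against the $k$ summands yields at most $k$ primitives.

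\emph{Conclusion and main obstacle.} The difference between the two scores is at least $2-O(k\Lstar)>0$ once $k\Lstar$ is small, whatever the signs $\epsilon$. The step needing care is confirming that the swapped templates never coincide up to sign with a summand of $S$, i.e.\ that the supports differ. This follows from distinctness of the atoms $(o_1,a_2')\notin s$. I also need that primitives of the recovered codes equal primitives of the true codes up to sign. That holds because $\widehat u,\widehat v$ differ from $u,v$ only by global signs, so Lemma~\ref{lem:primitives} applies to them verbatim.
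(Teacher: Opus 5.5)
Your proposal is correct and matches the paper's proof: expand both scores into primitives, let the modulus absorb the unknown sign on each recovered template, and note that the swapped templates $W(o_1,a_2')$ and $W(o_2,a_1')$ meet each summand of $X$ only in an object or an attribute primitive. Your handling of the normalization mismatch also agrees with the paper, which takes $X=W(o_1,a_1')+W(o_2,a_2')$ unnormalized, so each correct-template term is $|1+O(\Lstar)|$.
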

\begin{proof}
$X=W(o_1,a_1')+W(o_2,a_2')$. The correct candidate's templates are
$\pm W(o_1,a_1'),\pm W(o_2,a_2')$ (signs unknown): $|\ip{X}{t_i}|=|1+O(\Lstar)|$ each. The
swapped candidate's templates are $\pm W(o_1,a_2'),\pm W(o_2,a_1')$ --- distinct atom
pairs, so every inner product with $X$ is a primitive-sized quantity $O(\Lstar)$
(Lemma~\ref{lem:primitives}).
\end{proof}

Simulation ($M{=}K{=}40$, $N{=}2048$, gauge-propagation
learner, exposure density swept through the fragmented regime): sign-agnostic accuracy
$=1.000$ at every density, \emph{including when $0\%$ of test pairs share a component}.
The conjecture is dead for swap tests: \textbf{coverage (both atoms seen), not
connectivity, is the information threshold} --- precisely the occupancy law that
Proposition~\ref{prop:D}(ii) reports as measured.

\subsection{What survives}

\begin{enumerate}
\item \textbf{A below-chance signature of gauge commitment.} The sign-\emph{naive} learner
(one that commits to its arbitrary per-component gauge) scores $0.27$--$0.37$ ---
systematically \emph{below} chance --- on cross-component tests. An estimator that commits
to an unidentified convention binds \emph{backwards}, echoing both the measured
backwards-binding of leakage-prone codes (\S\ref{sec:thmC}) and the selection-inside-the-
sliver picture of \S\ref{sec:throttle}; cf.\ the min-degree implicit-bias analysis of
Abbe et al.~\cite{abbe}.
\item \textbf{The occupancy law as a theorem-target.} The measured curve
$\mathrm{acc}(\rho)\approx\tfrac12+\tfrac12(1-e^{-\rho K})(1-e^{-\rho M})$ is an
isolated-atom (coupon-collector) asymptotic, derivable rather than fitted --- and distinct
from the giant-component threshold of the percolation narrative \cite{lubana}, a
reconciliation worth writing out.
\item \textbf{A design note.} A genuine connectivity no-go would require a task whose
answer \emph{depends on the gauge} (sign-sensitive retrieval with sign-flipped
distractors); benchmark-style swap tests are not such a task.
\end{enumerate}

\section{The updated map, and open problems}\label{sec:map2}

\subsection{The causal story, as the results now support it}

\emph{Binding is representable and cheap} (factored codes, $N=O(k^2\log MK)$,
\S\ref{sec:thmC}); \emph{the geometry permits it} (Corollary~\ref{cor:satisfiable});
\emph{the population objective, at universal capacity, even prefers it}
(Lemma~\ref{lem:infonce}). Real systems fail because:
\begin{enumerate}
\item \textbf{the objective's reward for binding is a sliver} of width
$(\pi+\pi_{\mathrm{impl}})\log2$ nats --- now proved (Theorems~\ref{thm:T1}--\ref{thm:T2})
--- so training pressures of any size beyond that (capacity, regularization, early
stopping, implicit bias) decide the outcome; and
\item \textbf{learned composition codes leak} (measured $\ell\approx+0.17$;
\S\ref{sec:thmC}), and inside the sliver even the exactly-reversed solution is
$\pi\,\mathbb E[M]$-cheap --- which is why below-chance binding is observed.
\end{enumerate}
Behind these \emph{contingent} causes stand \emph{terminal} limits that survive all fixes:
the smoothness frontier (Theorem~\ref{thm:E}; measured slack today) and the depth ceiling
($D^*=\Theta(\log N/\log b)$, \S\ref{sec:depth}; sitting at natural-language depth for
CLIP-sized models). And two candidate causes are \emph{eliminated}: dimension
(\S\ref{sec:possibility}) and exposure-graph connectivity (\S\ref{sec:notlimit}).

\subsection{Open problems (Part II update)}

\begin{enumerate}
\item \textbf{Selection inside the sliver.} Which solution do SGD dynamics pick when the
objective is indifferent at the $(\pi+\pi_{\mathrm{impl}})$ scale? The below-chance
signatures (leakage; gauge commitment) suggest the tie-breakers are not benign.
\item \textbf{Finite-sample throttle.} An $\Omega(1/\pi^2)$ lower bound for detecting the
correct binding sign from rate-$\pi$ swap-contrast events (two-point testing).
\item \textbf{The depth-law concentration step.} The law \eqref{eq:depthlaw} is now
Proved (Theorems~\ref{thm:depthA}--\ref{thm:depthB}) except for one concentration
estimate --- the high-probability inductive step of Lemma~\ref{lem:ellfour} --- which
remains Outlined; close it. Then the margin-vs-depth measurement on \emph{real} nested
captions through real text towers --- the depth analogue of the \S\ref{sec:zoo} zoo sweep.
\item \textbf{Instance binding vs.\ co-occurrence priors.} For non-exchangeable
$\mathcal D$, separate what $\pi=0$ training can learn (the prior $p(a\mid o)$ --- the
benchmark ``text prior'') from what it cannot (instance pairing).
\item \textbf{Part I items that remain open:} the anchor-robustness appendix for the
frontier; naturalistic multi-relation probes; the NegCLIP three-axis confound.
\end{enumerate}

\end{document}